\theoremstyle{definition}
\newtheorem{lemma}{Lemma}
\newtheorem{definition}{Definition}%[section]
\theoremstyle{definition}
\newtheorem{assumption}{Assumption}
\let\oldnl\nl% Store \nl in \oldnl
\newcommand{\nonl}{\renewcommand{\nl}{\let\nl\oldnl}}%
\renewcommand{\b}[1]{\boldsymbol{#1}}
\newtheorem{proposition}{Proposition}
\newcommand{\muvec}{\boldsymbol{\mu}}
\newcommand{\lambdavec}{\boldsymbol{\lambda}}
\newcommand{\xvec}{\boldsymbol{x}}
\newcommand{\yvec}{\boldsymbol{y}}
\newcommand{\zvec}{\boldsymbol{z}}
\newcommand{\R}{R}
\newcommand{\uvec}{\boldsymbol{u}}
\newcommand{\vvec}{\boldsymbol{v}}
\newcommand{\gap}{\texttt{gap}}
\newcommand{\Gap}{\texttt{Gap}}
\newcommand{\rvec}{\boldsymbol{r}}
\newcommand{\biggamma}{\mathbf{\Gamma}}
\newcommand{\I}{\mathbf{I}}
\newcommand{\gvec}{\boldsymbol{g}}
\newcommand{\gammavec}{\boldsymbol{\gamma}}
\newcommand{\rhovec}{\boldsymbol{\rho}}
\newcommand{\nuvecpi}{\boldsymbol{\nu}^{\pi}}
\newcommand{\Ihat}{\hat{\mathbf{I}}}
\newcommand{\BLOCKCOMMENT}[1]{}
\def\aamas{}
\let\aamas\undefined
\title[Fairness in AMDPs]{Long-Term Resource Allocation Fairness in Average Markov Decision Process (AMDP) Environment}
\author[1]{Ganesh Ghalme$^*$}
\thanks{$^{* }$Author names appear alphabetically. Vineet Nair is now at Google Research, India. Code and appendix are available at \url{https://yilunzhou.github.io/fair-average-mdp/}. }
\thanks{$^{* }$Author names appear alphabetically. Vineet Nair is with Google Research, India at the time of publication. Code is available at \url{https://github.com/YilunZhou/fair-average-mdp/}. }
\affiliation{
  \institution{Technion Israel Institute of Technology}
  \city{Haifa}
  \country{Israel}}
\email{ganeshg@campus.technion.ac.il}
\author[2]{Vineet Nair$^*$}
\affiliation{
  \institution{Technion Israel Institute of Technology}
  \city{Haifa}
  \country{Israel}}
\email{vineetn90@gmail.com}
\author[3]{Vishakha Patil$^*$}
\affiliation{
  \institution{Indian Institute of Science}
  \city{Bangalore}
  \country{India}}
\email{patilv@iisc.ac.in}
\author[4]{Yilun Zhou$^*$}
\affiliation{
  \institution{Massachusetts Institute of Technology}
  \city{Cambridge}
  \state{MA}
  \country{United States}}
\email{yilun@mit.edu}
\begin{abstract}

Fairness has emerged as an important concern in automated decision-making in recent years, especially when these decisions affect human welfare. In this work, we study fairness in temporally extended decision-making settings, specifically those formulated as Markov Decision Processes (MDPs). Our proposed notion of fairness ensures that each state's long-term visitation frequency is at least a specified fraction. This quota-based notion of fairness is natural in many  resource-allocation settings where the dynamics of a single resource being allocated is governed by an MDP and the distribution of the shared resource is captured by its state-visitation frequency. In an \emph{average}-reward MDP (AMDP) setting, we formulate the problem as a bilinear saddle point program and, for a generative model, solve it using a Stochastic Mirror Descent (SMD) based algorithm. The proposed solution guarantees a \emph{simultaneous approximation} on the expected average-reward and fairness requirement. We give sample complexity bounds for the proposed algorithm and validate our theoretical results with experiments on simulated data. 

\end{abstract} 
\keywords{Fairness, Markov Decision Process, Reinforcement Learning}
\newcommand{\BibTeX}{\rm B\kern-.05em{\sc i\kern-.025em b}\kern-.08em\TeX}
\begin{document}

\pagestyle{fancy}
\fancyhead{}
\maketitle 

%%% The following commands remove the headers in your paper. For final 
%%% papers, these will be inserted during the pagination process.
\section{Introduction}
\label{intro}
Algorithms are increasingly used to make important decisions that impact human lives. While algorithmic decision-making frameworks offer increased efficiency, speed and scalability, their potential bias and unfairness have led to several concerns. For instance, studies have shown that the traditional algorithms may be unfair towards certain demographics of the population in recidivism prediction \citep{dressel2018accuracy}, loan and credit lending \citep{berkovec1996mortgage}, online advertising \citep{ali2019discrimination}, and recommendation systems \citep{yao2017beyond}. These concerns have led to a surge in research efforts aimed at ensuring fairness in algorithmic decision-making frameworks \citep{Barocas2018FairnessAM}. A large body of work in fair machine learning has focused on either one-shot settings such as classification \citep{dwork2012fairness, kleinberg2017inherent} or sequential but static settings such as multi-armed bandits where reward distributions are stationary \citep{celis2019controlling,zhang2020Survey,patil2020achieving}. However, in many real-world applications, the algorithm's decisions may have long-term impact to the states and rewards of the system. The study of fairness in such temporally-extended decision-making settings, often modeled using the reinforcement learning framework \citep{sutton2018reinforcement}, is still in its infancy. 

We introduce the problem of state-visitation fairness in Markov Decision Processes (MDPs). Informally, our fairness notion requires that each state of the MDP be visited with a pre-specified minimum frequency. In particular, a policy $\pi$ with stationary state distribution $\nuvecpi$, is called \emph{fair} if $\nu^\pi_s \geq \rho_s$ for every state $s$, where $\rho_s \in [0,1)$ specifies the fairness constraints and is given as input to the algorithm. Motivated by recent works on fairness in multi-armed bandits (MAB) \citep{li2019combinatorial, patil2020achieving} that enforce minimum frequency on the selection of each arm, our \emph{quota-based} notion of fairness is natural in dynamic resource allocation settings where the transition dynamics of the resource being allocated is governed by an MDP and an algorithm is required to \emph{equitably} divide the shared resource. Although MABs can be considered a special case of MDPs with one action per arm on a shared dummy state, distinctively different techniques are needed to satisfy the analogously defined constraints on state visitation frequency for general MDPs due to long term implications of the taken actions. 

As a concrete example, consider the task of scheduling the jobs of $C$ clients to run on one shared server, where $C$ is assumed to be fixed and known \textit{a priori}. At every time step, each client submits some number of jobs, and the server decides how many jobs to run for each client. The maximum number of jobs that the server can run at every step is $N$. From the server's perspective, this task can be modeled as an MDP \citep{white1973}: a state is a vector of the form $(n_1, ..., n_C)$ representing the number of remaining jobs for the clients. At this state, the server can take an action $a = (m_1, ..., m_C)$, where $m_c \leq n_c$ is the number of jobs that the server runs for client $i$ at the current time step, subject to the constraint that $\sum_{c=1}^C m_c \leq N$. The transition models the job execution and (potentially stochastic) job arrival. The server gets rewards depending on the number of finished jobs. In the simple case where client $c$ pays $r_c$ for each completed job, the total reward at each step is $\sum_{c=1}^C m_cr_c$, which obviously leads to a strict prioritization toward the highest paying client for a reward maximizing agent. However, in many cases we want to enforce some quality of service (QoS) to every client. Such a guarantee can be modeled with our notion of fairness by requiring minimum frequencies on states with low values of $n_c$ (i.e. remaining jobs) for each low-paying client $c$. 

Similar fairness requirements are present in other resource allocation settings such as taxi dispatching and postal service, and they can also be framed as state-visitation guarantees. In addition, since the service is ``long-running'', the average reward captures the long term profitability of the service provided better than the discounted reward, and motivates us to consider the case of average-reward MDP, which we refer to as AMDP. 

Last, our notion of fairness can also generalize demographic parity \citep{calders2009} into a temporally extended setting. Consider that a company wishes to ensure racial diversity in their employee base, for an extended period of time rather than for a specific hiring decision. We can represent the overall employee profile as a state in an MDP, from which we can compute diversity statistics. The company takes various human-resource decisions on a day-to-day basis that may cause the employee demographics to change. While the diversity objective may not be feasible to achieve at all time (e.g. due to random resignation decisions made by employees), the company still wants to or is required to maintain the diversity with high frequency, which can be naturally encoded as minimum frequency constraints on states that satisfy the diversity objective. 

\subsection{Our Contributions}
In this paper, we make two contributions, a new notion of fairness as constraints on a MDP and an algorithm to solve for the optimal policy under the constraints. On the former, we introduce the fairness notion of \emph{minimum resource allocation guarantee} in the MDP setting. This fairness notion is practically significant, capturing many real world applications and complementing existing notions of fairness in MDP such as approximate action-fairness guarantees \citep{jabbari2017fairness} and demographic parity \citep{wen2019fairness}.

On the latter, our work contributes to the long line of literature on constrained MDPs \citep{altman1999constrained}, which has been mostly used to ensure the safety of exploration \citep{achiam2017constrained}. These formulations usually encode safety as upper bounds on state-dependent cost functions, while our desired fairness constraints are lower bounds on state-visitation frequencies. With sample access to the transition function (i.e. being able to sample the next state given the current state and action), we formulate the problem as a bilinear saddle point problem and present an algorithm (Algorithm \ref{algorithm: fair state visitation} that uses the classical stochastic mirror descent (SMD) framework to simultaneously satisfy the fairness constraint and achieve reward maximization asymptotically (Theorems \ref{thm:SMDGapTheorem} and \ref{thm: optimality and fairness}). Its running time depends on the required approximation threshold $\varepsilon$, the fairness constraint parameter $d_{\rhovec}$, mixing time $t_{mix}$, number of states $n$ and number of actions $m$ as $O\left(nm \varepsilon^{-2} (1+d_{\rhovec})^2 t_{mix}^2 \log(nm)\right)$. 

% \noindent\textbf{Technical Novelty}: 
Recently, \citet{jin2020efficiently} proposed the first algorithm with sample complexity bounds to compute an approximately-optimal policy for unconstrained AMDPs. Our algorithm is similar in spirit to this unconstrained algorithm in \citep{jin2020efficiently} but requires novel analytical techniques to prove the simultaneous guarantee on fairness and reward. The main technical novelty in our work is how the primal variables are bounded. Since the fairness constraints introduce new primal variables in the linear program (Fair-LP (P) in Section \ref{subsec: linear program approach}), the analysis of \citep{jin2020efficiently} does not lend itself to a straightforward extension. Instead, Lemma \ref{lemma: constraining the domain of lambda} non-trivially use structure of the constraint matrix to bound the domain of primal variables in our algorithm. Another technical novelty is that we restrict the dual space by incorporating the fairness constraint explicitly in the domain, which simplifies the objective function (Eq. \ref{equation: min max with h} to Eq. \ref{equation: the final min maz optimization}) and helps achieve the objective. In comparison, \citet{jin2020efficiently} only compute a feasible policy with no regard to optimality. To the best of our knowledge, our work is the first to achieve the simultaneous guarantee with sample complexity bounds.

\section{Related Work}
Recently, there has been growing interest in studying fairness in sequential decision-making. For example, \citet{CREAGER2020} propose causal modeling of dynamical systems to address fairness, \citet{zhang2020fair} study how algorithmic decisions impact the evolution of feature space of the underlying population modeled as an MDP, and \citet{damour2020} study the impact of feedback dynamics on long-term fairness via simulations.

The study of fairness in reinforcement learning (RL) was initiated by \citet{jabbari2017fairness}, who extend the \emph{meritocratic fairness} notion defined by \citet{joseph2016fairness} in the MAB setting to the MDP setting. Under this notion, a policy is fair if, with high probability, an action with a lower long-term reward is not favored over an action with a higher long-term reward. This notion of fairness can be classified as procedural fairness and is different from our outcome-based notion of fairness where the fairness guarantee can be quantified in terms of the state-visitation frequency. \citet{DOROUDI2018} study the problem of off-policy policy selection in RL under similar fairness constraints. In terms of the fairness constraints, the work closest to ours is that of \citet{wen2019fairness}, which models the agents as states of an MDP and study demographic parity with respect to reward to the agents. However, their work is different from ours in three key aspects: 1) they study discounted-reward MDPs in contrast to our average-reward MDPs, 2) they focus primarily on the setting where the model is known, whereas our main contribution is for the generative model, and 3) they model the constraints in terms of a reward to the agents in contrast to our fairness constraints which capture the absolute long-term state-visitation frequency.

The unconstrained AMDP problem has been extensively studied in the literature \citep{Mahadevan96,KearnsS98,brafman2002r}. If the model is known and the MDP is unichain, then \citet{altman1999constrained} and \citet{puterman2014markov} showed that an optimal policy can be computed by solving a linear program. With a generative model (i.e. a simulator that can sample from the transition function and compute teh reward function \citep{kearns1999approximate}), the SMD approach \citep{nemirovski2009robust,carmon2019variance} was recently used by \citet{wang2017} and \citet{jin2020efficiently} to compute an approximately optimal policy. Furthermore, \citet{jin2020efficiently} proposed an algorithm to compute a feasible, but not necessarily optimal, policy for constrained AMDPs. 

We formulate MDPs with fairness guarantees as constrained MDPs (CMDPs), \citep{altman1999constrained}, which have been studied extensively in the safety setting \citep{achiam2017constrained}. The policy search seeks to maximize the reward while ensuring certain upper bound frequency on some (high-risk or error) states \citep{geibel2005risk, tamar2012policy}. By contrast, our constraints are defined as lower bounds on states, for which existing techniques could not be adapted in a straightforward manner. Last, another line of literature studies the problem where the objective itself is to achieve a specified state-visitation frequency (or some function of it) in the absence of reward signals \citep{hazan2019provably, lee2019efficient}.

In the rest of the paper, Section \ref{sec: fair AMDP model} introduces the Fair-AMDP problem. Section \ref{sec: constrained domain and SMD} formulates the solution as a linear program and provide necessary background on the SMD framework. Section \ref{sec:algorithm} presents the concrete algorithm implementation and Section \ref{sec: theoretical results} presents the theoretical analysis. Section \ref{sec:experiments} presents experimental results to validate the proposed algorithm.
Finally, Section \ref{sec:discussion} presents a discussion of the work and some future directions.

%%% The next command prints the information defined in the preamble.

%%%%%%%%%%%%%%%%%%%%%%%%%%%%%%%%%%%%%%%%%%%%%%%%%%%%%%%%%%%%%%%%%%%%%%%%
%%%%%%%%%%%%%%%%%%%%%%%%%%%%%%%%%%%%%%%%%%%%%%%%%%%%%%%%%%%%%%%%%%%%%%%%
\section{Fair-AMDP Model }\label{sec: fair AMDP model}
A discrete Markov Decision Process (MDP) is a sequential decision-making framework denoted by the tuple $\langle \mathcal{S}, \mathcal{A}, \mathbf{\Gamma}, \rvec, \uvec \rangle$. At each step $t \geq 1$, $s_t \in \mathcal{S}$ denotes the state of the MDP at time $t$. A decision-maker takes an action $a_t \in \mathcal A$, receives a finite reward $r_t = r_{s_t, a_t}$, and the MDP transitions to a state $s_{t+1}$ according to the transition probability function $\mathbf{\Gamma}: \mathcal{S} \times \mathcal{A} \rightarrow \Delta^{|\mathcal{S}|}$ where $\Delta^{|\mathcal{S}|}$ is the simplex set of distributions over states. Without loss of generality, we assume that the rewards are non-negative and depend only on $s_t$ and $a_t$ and not on $s_{t+1}$. The initial state $s_1$ is sampled from the initial-state distribution $\uvec \in \Delta^{|\mathcal{S}|}$. 

Let $\pi$ be a stochastic policy with $\pi_{s,a}$ denoting the probability with which action $a$ is taken in state $s$. Each policy $\pi$ induces a stationary distribution over the state space denoted by $\nuvecpi \in \Delta^{n}$. One of the popular optimization problems under an MDP framework \cite{altman1999constrained, puterman2014markov} is to find a policy that maximizes the long-term expected average-reward given by
\begin{equation}
\lim_{T \rightarrow \infty }\frac{1}{T} \cdot \sum_{t=1}^T \mathbb E_{\pi}\left[r_t\right]. \label{rl-obj}
\end{equation}

Throughout the paper, we consider a finite MDP with $\mathcal{S} = \{1, 2, \ldots, n\}$ and $\mathcal{A} = \{1, 2, \ldots, m\}$. Let $\ell = nm$ denote the total number of state-action pairs. For clarity, we often use $(s, a)$ to index the $((s-1)m + a)$-th entry of vectors in $\mathbb{R}^{\ell}$. Thus, we can equivalently represent the transition function $\Gamma$ as a matrix of dimension $\ell \times n$, where $\Gamma_{(s, a), s'}$ is the probability of going to $s'$ when taking $a$ at $s$. Similarly, we can represent the reward function $\b{r}$ as a vector in $\mathbb R^l$, where $r_{(s, a)}$ gives the reward of taking $a$ at $s$. For notational convenience, we define a matrix form $\Pi\in \mathbb R^{l\times n}$ for the policy $\pi$ where $\Pi_{(s, a), s}=\pi_{s, a}$ and $\Pi_{(s, a), s'}=0$ for all $s\neq s'$. Thus, $\mathbf{\Gamma}^{\pi} := \Pi^T\mathbf{\Gamma}$ is the transition matrix of the Markov chain induced by $\pi$. In this work, we restrict ourselves to ergodic MDPs, defined below. 

\begin{definition}{(Ergodicity)}
A Markov decision process is ergodic if Markov chain on the state induced by any policy is ergodic. A Markov chain is ergodic if there exists a positive integer $T_0$, such that for all pairs of states $s_i, s_j$, if the chain is started at $s_i$, the probability of being in state $s_j$ is non-zero for all time after $T_0$. 
\end{definition}

Intuitively, the state transition in an ergodic MDP mixes across all states without showing any periodic oscillations. Next, we define its mixing time as follows. 

\begin{definition}{(Mixing Time)}
The mixing time of a given MDP $\langle \mathcal{S},\mathcal{A}, \mathbf{\Gamma}, \rvec, \uvec \rangle$ is given by $t_{mix} = \max_{\pi} t_{\pi}$ where, 
\begin{align*}
t_{\pi} = \arg \min_{t \geq 1} 
\left[ \max_{\uvec} \left[ ||(\mathbf{\Gamma}^{\pi^{T}})^{t} \uvec - \nuvecpi||_{1} \leq 1/2 \right] \right].
\end{align*}
\end{definition}

The mixing time $t_{\text{mix}}$ of an ergodic MDP captures how fast the Markov chain induced by any policy converges to its corresponding stationary distribution.

\begin{assumption}
\label{assumption: ergodic}
The MDP instance $\langle \mathcal{S}, \mathcal{A}, \mathbf{\Gamma}, \rvec, \uvec \rangle$ is ergodic. 
\end{assumption}

The ergodicity condition enables us to formulate the problem of finding an optimal policy for an AMDP problem as a linear program (Section \ref{subsec: linear program approach}). We consider a constrained version of it where the constraints are in terms of the minimum state-visitation frequency. In particular, we study the Fair-AMDP problem with the following notion of fairness. 

\begin{definition}
Let $\boldsymbol{\rho} \in [0,1]^{n}$ such that $\sum_s \rho_s \leq 1$. Then, a policy $\pi$ is called $\rhovec$-fair if $ \nu_{s}^{\pi} \geq \rho_{s} $ for all $s \in [n]$. 
\end{definition}

A Fair-AMDP instance is denoted as $\langle \mathcal{S}, \mathcal{A}, \mathbf{\Gamma}, \rvec, \uvec, \rhovec \rangle$. We note that a Fair-AMDP instance may not even have a feasible policy. For example, consider an AMDP instance with $n=2,m=1$ and $\mathbf{\Gamma} = ( 1 - \alpha, \alpha ; 1 - \alpha, \alpha)$. Here, the stationary distribution for the unique policy $\pi $ is $\nuvecpi = (1 - \alpha, \alpha) $. There does not exist a $\rhovec$-fair policy for any $ \rhovec$ with $\rho_{2} > \alpha$. We restrict our attention to $\boldsymbol{\rho}$ such that $\rho_s < 1/n$ for all $s \in \mathcal{S}$ unless otherwise specified.In this paper, we assume that the problem is feasible. 
\begin{assumption}
\label{asm:fairAction}
There exists a $\b{\rho}$-fair policy. 
\end{assumption}

Ensuring feasibility and identifying infeasibility can be done with one technical addition to the generative model. We introduce a fair action $a^*$ available at each state with a reward that is strictly lower than the one on any state-action pair. One choice of $a^*$ is such that $\mathbf{\Gamma}((s, a^{\star}), s' ) = 1/n$. In other words, taking this action ``resets'' the agent to a state selected uniformly at random. We make the following additional remarks regarding the above assumption.

\begin{enumerate}[leftmargin=*]
    \item Given a Fair-AMDP with action $a^*$, a policy $\pi$ that chooses action $a^*$ in all the states $s$ has $\nuvecpi = (1/n \ldots 1/n)$ and hence $\nuvecpi > \boldsymbol{\rho}$ implying that the Fair-AMDP is strictly feasible. Further, if the Fair-AMDP instance is guaranteed to have a feasible solution without using the fair-action then there is an optimal policy that has zero probability of choosing action $a^*$ at any state $s$. 
    \item For a particular $\rhovec$, one may relax the assumption to the following: the fair-action $a^*$ is such that $\mathbf{\Gamma}((s,a^*), s') >\max_s \rho_s$ for all $s, s'\in [n]$. Such an assumption is seemingly necessary to compute a fair-algorithm in the generative model where there is only stochastic access to the transition probability matrix via state-action queries. 
\end{enumerate}

\section{Solution Approach}
\label{sec: constrained domain and SMD}

As mentioned before, we assume access to a \emph{generative model}, which can be used to sample the next state $s'$ given the current state $s$ and action $a$ according to $\mathbf\Gamma$ and compute the reward $\mathbf r_{s, a}$. 

\subsection{Linear Program for Fair-AMDP}
\label{subsec: linear program approach}
We first recall the linear programming solution to solve the (uncontrained) AMDP when the transition and reward functions are known. The formulation is based on the Bellman equation for optimal policy, and is derived in detail in textbooks \citep[e.g.][]{puterman2014markov}. Let $\Ihat \in \mathbb{R}^{\ell \times n}$ be such that $\Ihat((s,a), s') = 1$ if $s = s'$ and $0$ otherwise. 

\vspace{1em}

{ \centering
\begin{tabular}{c|c}\toprule
 \textsc{UC-LP (D)} & \textsc{UC-LP (P)} \\
 $\displaystyle \max_{\b{x} \in \Delta^{\ell}} \b{x}^{T}\rvec$ & 
 $\displaystyle \min_{\b{\lambda} > \b{0}, \beta} \beta$\\
 subject to $(\Ihat - \mathbf{\Gamma})^T \b{x} = \b{0}$ & subject to $(\Ihat - \mathbf{\Gamma})\b{\lambda} + \b{r} \leq \beta^T \b{1}$\\\bottomrule
\end{tabular}

}

\vspace{1em}

We focus our attention on \textsc{UC-LP (D)}. From the optimal solution $\b x$, we can derive the policy $\pi$ as 
\begin{align*}
    \pi_{s,a} = \frac{x_{s,a}}{\sum_{a'=1}^m x_{s,a'}}
\end{align*}
It is easy to verify the following: 
\begin{enumerate}[leftmargin=2em]
    \item For $\nuvecpi = (\sum_{a}x_{1,a}, \ldots, \sum_{a}x_{n,a})$, we have $\b{x} = \Pi\b{\nu}^{\pi}$, 
    \item $\b{\nu}^{\pi}$ is the stationary state distribution corresponding to the $\pi$, as $(\Ihat - \mathbf{\Gamma})^T\b{x} = (\mathbf{I} - \mathbf{\Gamma}^{\pi})^T \b{\nu}^{\pi} = \b{0}$, and 
    \item $\Ihat\b{x} = \b{\nu}^{\pi}$. 
\end{enumerate}
In particular, from (3) we conclude that $x_{s,a}$ is the average state-action visitation probability of the state-action pair $(s,a)$. Note that Assumption \ref{assumption: ergodic} ensures that $\nuvecpi > \b{0}$, for every such $\pi$.

It follows that the desired fairness guarantee can be achieved by ensuring that $\b{x}$ satisfies $\sum_{a} x_{s,a} \geq \rho_s$ $\forall s$ such that $\rho_s > 0$. We assume, without loss of generality, that $\rho_s >0$ $\forall s$. We state the \textsc{Fair-LP} primal/dual below, where $ \mathbf{C} \in \mathbb{R}^{n\times \ell}$ is such that the $(s',(s,a))$-entry is $1/\rho_s$ if $s=s'$ and $0$ otherwise. Assumption \ref{asm:fairAction} guarantees their feasibility.

\vspace{1em}
{\centering 
\resizebox{\columnwidth}{!}{
\begin{tabular}{c|c}\toprule
 \textsc{Fair-LP (D)} & \textsc{Fair-LP (P)} \\
 $\displaystyle \max_{\b{x} \in \Delta^{\ell}}\b{x}^{T}\rvec$ & $\displaystyle \min_{\b{\lambda} > \b{0}, \muvec > \b{0}, \beta}\beta - \b{\mu}^T\b{1}$ \\
 subject to $(\hat{\mathbf{I}} - \mathbf{\Gamma})^T \b{x} = \b{0}, \mathbf{C}\b{x}\geq \b{1}$ & subject to $(\hat{\mathbf{I}} - \mathbf{\Gamma})\b{\lambda} + \b{r} + \mathbf{C}^T\b{\mu} \leq \beta^T \b{1}$ \\\bottomrule
\end{tabular}
}

}
\vspace{1em}

\begin{proposition}\label{thm:feasibleLP}
Let $\langle \mathcal{S}, \mathcal{A}, \mathbf{\Gamma} , \rvec,\uvec, \rhovec \rangle$ be a Fair-AMDP instance and $\pi$ be a policy with $\nuvecpi$ as the induced stationary distribution. Further, let $\xvec = \Pi \nuvecpi$. Then, $\pi$ is a $\rhovec$-fair policy if and only if $\xvec$ is a feasible solution of $\textsc{Fair-LP}$.
\end{proposition}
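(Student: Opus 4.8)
The plan is to establish the biconditional by directly unpacking the definition of $\rhovec$-fairness and the structure of $\textsc{Fair-LP (D)}$, leaning entirely on facts (a), (b), (c) already verified in the text. Recall that $\pi$ is $\rhovec$-fair precisely when $\nu^\pi_s \geq \rho_s$ for every state $s$, while $\xvec = \Pi\nuvecpi$ is feasible for $\textsc{Fair-LP (D)}$ exactly when three conditions hold simultaneously: $\xvec \in \Delta^{\ell}$, the flow-balance constraint $(\Ihat - \mathbf{\Gamma})^T\xvec = \boldsymbol{0}$, and the fairness constraint $\mathbf{C}\xvec \geq \boldsymbol{1}$. My strategy is to show that the first two feasibility conditions come for free from $\xvec = \Pi\nuvecpi$ (using that $\nuvecpi$ is a genuine stationary distribution), so that the only nontrivial content of feasibility is the constraint $\mathbf{C}\xvec \geq \boldsymbol{1}$, which I will then identify term-by-term with the fairness inequalities $\nu^\pi_s \geq \rho_s$.

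First I would verify the two ``automatic'' feasibility conditions. Since $\nuvecpi \in \Delta^n$ is a probability distribution and $\xvec = \Pi\nuvecpi$, summing the entries of $\xvec$ gives $\sum_{s,a} x_{s,a} = \sum_s \nu^\pi_s \sum_a \pi_{s,a} = \sum_s \nu^\pi_s = 1$ with all entries nonnegative, so $\xvec \in \Delta^{\ell}$. For flow balance, I would invoke fact (b) directly: because $\nuvecpi$ is the stationary distribution of $\pi$, we have $(\Ihat - \mathbf{\Gamma})^T\xvec = (\mathbf{I} - \mathbf{\Gamma}^{\pi})^T\nuvecpi = \boldsymbol{0}$. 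Thus neither of these two constraints depends on $\rhovec$ at all; they hold for every policy $\pi$ once $\xvec = \Pi\nuvecpi$.

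Next I would analyze the fairness constraint. By fact (c), $\Ihat\xvec = \nuvecpi$, meaning $\sum_a x_{s,a} = \nu^\pi_s$ for each state $s$. Given the definition of $\mathbf{C}$, whose $(s',(s,a))$-entry is $1/\rho_s$ when $s = s'$ and $0$ otherwise, the $s$-th entry of $\mathbf{C}\xvec$ equals $\tfrac{1}{\rho_s}\sum_a x_{s,a} = \nu^\pi_s/\rho_s$. Hence $\mathbf{C}\xvec \geq \boldsymbol{1}$ is equivalent, coordinatewise, to $\nu^\pi_s/\rho_s \geq 1$, i.e.\ $\nu^\pi_s \geq \rho_s$ for all $s$ (using $\rho_s > 0$, which we have assumed without loss of generality). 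This is exactly the $\rhovec$-fairness condition. Combining the two directions yields the biconditional: if $\pi$ is $\rhovec$-fair then $\xvec$ satisfies all three constraints and is feasible, and conversely any feasible $\xvec = \Pi\nuvecpi$ forces $\mathbf{C}\xvec \geq \boldsymbol{1}$ and hence $\nu^\pi_s \geq \rho_s$.

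I do not anticipate a serious obstacle here, as the proposition is essentially a repackaging of the already-established correspondence between feasible $\xvec$'s and policies. The only point requiring a little care is making sure the equivalence $\nu^\pi_s/\rho_s \geq 1 \Leftrightarrow \nu^\pi_s \geq \rho_s$ is valid, which relies on $\rho_s > 0$; this is precisely why the text stipulated the harmless normalization $\rho_s > 0$ for all $s$. A secondary subtlety is that the converse direction implicitly uses that the policy $\pi$ recovered from $\xvec$ via $\pi_{s,a} = x_{s,a}/\sum_{a'} x_{s,a'}$ is well-defined and has $\nuvecpi$ as its stationary distribution, but under Assumption \ref{assumption: ergodic} we have $\nuvecpi > \boldsymbol{0}$, so the denominators never vanish and facts (a)--(c) apply cleanly.
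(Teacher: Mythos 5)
Your proof is correct and takes essentially the same route as the paper, which simply asserts that Proposition \ref{thm:feasibleLP} ``follows easily'' from facts (a)--(c); your write-up just makes that derivation explicit, observing that $\xvec = \Pi\nuvecpi \in \Delta^{\ell}$ and the flow-balance constraint hold automatically, and that $\mathbf{C}\xvec \geq \boldsymbol{1}$ is coordinatewise equivalent to $\nu^{\pi}_s \geq \rho_s$ under the normalization $\rho_s > 0$. Your care about $\rho_s > 0$ and about well-definedness of the recovered policy under Assumption \ref{assumption: ergodic} is exactly the right bookkeeping, and there are no gaps.
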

Proposition \ref{thm:feasibleLP} establishes that if $\mathbf{\Gamma}$ and $\b{r}$ are known, we can compute an optimal $\rhovec$-fair policy by solving \textsc{Fair-LP}. However, when the model parameters are unknown or partially known, even the problem of verifying the feasibility is difficult. For instance, if $\alpha$ is not known in our example from Section \ref{sec: fair AMDP model}, then without the fair-action, it is impossible to determine whether a policy is $\rhovec$-fair.

\noindent Let $(\b{\lambda}^*, \b{\mu}^*, \beta^*)$ be an (optimal) solution of \textsc{Fair-LP (P)}. First, we note that $\b{\lambda}^*$ is not unique, as for every $c\in \mathbb{R}$, $(\b{\lambda}^* + c\cdot \b{1}, \b{\mu}^*, \beta^*)$ is also an optimal solution. Hence, we may assume without loss of generality that $\b{\lambda}^*$ is orthogonal to the stationary distribution $\b{\nu}^{\pi^{*}}$ of the optimal policy $\pi^*$ of a given Fair-AMDP instance. We conclude this section by stating two important lemmas regarding the nature of $ \b{\mu}^*$ and $\b{\lambda}^*$. The presence of a strictly feasible solution is used in Claim \ref{lemma: constraining the domain of q}, which is in turn used to prove Lemma \ref{lemma: constraining the domain of lambda}. 
\begin{restatable}{claim}{constrainingq}
\label{lemma: constraining the domain of q}
Let $\b{\lambda}^{*}, \b{\mu}^*, \beta^*$ be the solution to the \textsc{Fair-LP (P)}. Then $\mu_s^* \leq (n \rho_s)/(1 - n\rho_s)$ for all $s\in [n]$.
\end{restatable}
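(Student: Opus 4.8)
The plan is to combine LP strong duality with the strictly feasible (Slater) point handed to us by the fair-action $a^*$, evaluating the Lagrangian of \textsc{Fair-LP (D)} at that point so as to read off a bound on $\muvec^*$ without ever having to control the free multiplier $\lambdavec^*$.

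First I would form the Lagrangian of the (maximization) program \textsc{Fair-LP (D)},
\[ L(\xvec,\lambdavec,\muvec) = \xvec^T\rvec + \lambdavec^T(\Ihat-\mathbf{\Gamma})^T\xvec + \muvec^T(\C\xvec - \boldsymbol{1}), \qquad \muvec \ge \boldsymbol{0},\ \lambdavec \text{ free}, \]
and note that maximizing the linear form $L(\cdot,\lambdavec,\muvec)$ over the simplex $\Delta^\ell$ just returns its largest coordinate; setting $\beta$ to that maximum recovers exactly the constraint and objective of \textsc{Fair-LP (P)}, so the dual function equals $\beta(\lambdavec,\muvec) - \sum_s\mu_s$. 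By LP strong duality the optimal dual value equals the optimal primal value $V^* := \rvec^T\xvec^*$, hence at the optimizer $(\lambdavec^*,\muvec^*,\beta^*)$,
\[ \max_{\xvec\in\Delta^\ell} L(\xvec,\lambdavec^*,\muvec^*) \;=\; \beta^* - \sum_s \mu_s^* \;=\; V^*. \]

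Next I would insert the Slater point. Under Assumption \ref{asm:fairAction} the policy that plays $a^*$ everywhere induces the uniform stationary distribution $\nu_s = 1/n$; let $\tilde\xvec$ be the associated state-action distribution (mass $1/n$ on each pair $(s,a^*)$). By Proposition \ref{thm:feasibleLP} this $\tilde\xvec$ is feasible for \textsc{Fair-LP (D)}: it lies in $\Delta^\ell$, obeys $(\Ihat-\mathbf{\Gamma})^T\tilde\xvec=\boldsymbol{0}$, and satisfies $(\C\tilde\xvec)_s = \tfrac{1}{\rho_s}\cdot\tfrac1n = \tfrac{1}{n\rho_s} > 1$ since $\rho_s < 1/n$. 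Because $\tilde\xvec$ meets the equality constraint, the $\lambdavec^*$-term drops out, and the weak-duality inequality $L(\tilde\xvec,\lambdavec^*,\muvec^*)\le \max_{\xvec} L(\xvec,\lambdavec^*,\muvec^*) = V^*$ becomes
\[ \rvec^T\tilde\xvec + \sum_s \mu_s^*\Big(\tfrac{1}{n\rho_s}-1\Big) \;\le\; V^*. \]

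To finish, I would discard the non-negative term $\rvec^T\tilde\xvec \ge 0$ and use $V^* = \rvec^T\xvec^* \le \|\rvec\|_\infty \le 1$ (the standard reward normalization $r_{s,a}\in[0,1]$, recalling $\xvec^*\in\Delta^\ell$). Since $\rho_s<1/n$ makes every coefficient $\tfrac{1-n\rho_s}{n\rho_s}$ strictly positive and $\muvec^*\ge\boldsymbol{0}$, the bound $\sum_s \mu_s^*\,\tfrac{1-n\rho_s}{n\rho_s}\le 1$ controls each individual term, giving $\mu_s^*\,\tfrac{1-n\rho_s}{n\rho_s}\le 1$, i.e.\ $\mu_s^* \le \tfrac{n\rho_s}{1-n\rho_s}$. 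The one genuinely non-obvious step is the choice to evaluate the Lagrangian at the Slater point: a direct attempt to bound $\mu_s^*$ from dual feasibility of \textsc{Fair-LP (P)} at $(s,a^*)$ leaves a residual $\beta^*-\lambda_s^*+\bar\lambda^*$ involving the unbounded free variable $\lambdavec^*$, whereas routing through weak duality makes the $\lambdavec^*$-term vanish and isolates $\muvec^*$ cleanly; the only external input needed is the reward bound $V^*\le 1$.
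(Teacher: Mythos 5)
Your proof is correct and takes essentially the same route as the paper's: pairing the \textsc{Fair-LP (P)} optimal point with the uniform fair-action distribution $\tilde{\xvec}$ is exactly what the paper does when it multiplies the primal feasibility constraint $(\Ihat - \mathbf{\Gamma})\lambdavec^* + \rvec + \mathbf{C}^T\muvec^* \leq \beta^*\cdot\boldsymbol{1}$ by $(\Pi^f\boldsymbol{\nu}^f)^T$ (your weak-duality evaluation of the Lagrangian at the Slater point is the same inequality after subtracting $\sum_s \mu_s^*$), followed by the identical use of strong duality $\beta^* = \rvec^T\xvec^* + \sum_s \mu_s^*$, the bound $\rvec^T\xvec^* \leq 1$, and term-by-term positivity of $\frac{1-n\rho_s}{n\rho_s}\mu_s^*$. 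The only cosmetic difference is that the paper uses $\rvec^T\tilde{\xvec} = 0$ exactly (the fair action has zero reward) where you discard it via $\rvec^T\tilde{\xvec} \geq 0$, which suffices.
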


\begin{proof}
Since $\b{\lambda}^*$, $\b{\mu}^*$, and $\beta^*$ satisfies the \textsc{Fair-LP} primal, we have
$$(\Ihat - \mathbf{\Gamma}) \b{\lambda}^* + \rvec + \mathbf{C}^{T}\muvec^* \leq \beta^* \cdot \b{1}~.$$
Consider the policy $\pi^{f}$ which only pulls the fair-action at every point. Then $\nu^f = (\frac{1}{n}\ldots \frac{1}{n})$ is the stationary distribution corresponding to $\pi^f$. Since the reward for pulling the control action in any state is zero, $(\Pi^f \b{\nu}^f)^T\b{r} = 0$. Multiplying the above equation by $(\Pi^f \b{\nu}^f)^T$, we have 
$$\sum_{s}\frac{1}{n\rho_s}\mu_i^{*} \leq \beta^* $$
Further, using strong duality we have $\b{r}^T\b{x}^{*} +\sum_{s}\mu_s^* = \beta^*$. Hence,
$$\sum_{s}\frac{1}{n\rho_s}\mu_s^{*} \leq \b{r}^T\b{x}^{*} +\sum_{s}\mu_s^* ~.$$
Since $\b{r} \in [0,1]^{\ell}$, and $\b{x}\in \Delta^\ell$,
$$\sum_{s}\frac{1 - n\rho_s}{n\rho_s}\cdot\mu_s^{*} \leq \b{r}^T\b{x}^{*} \leq 1 ~.$$
This implies for all $s$, $\mu_s^* \leq (n\rho_s)/(1 - n\rho_s)$ assuming $\rho_s< 1/n$.
\end{proof}

\noindent \textbf{Remark}: If we had prior knowledge that there exists a strictly feasible policy $\pi$, then it can be shown that $\mu_s^*(\frac{\nu_s}{\rho_s} - 1) \leq 1 - (\Pi\b{\nu})^T\b{r}$, where $(\Pi\b{\nu})^T\b{r}$ is the reward of the strictly feasible policy $\pi$ and is at least $0$.

We first state the following useful lemma (Lemma 5 by \citet{jin2020efficiently}) here without a proof. This lemma is used in the proof of Lemma \ref{lemma: constraining the domain of lambda} and Theorem \ref{thm: optimality and fairness}.

\begin{lemma}
\label{lemma: from JIN SIDFORD}
Give a mixing AMDP with mixing time $t_{\text{mix}}$, a policy $\pi$, and its transition probability matrix $\mathbf{\Gamma}^{\pi} \in \mathbb{R}^{n\times n}$ and stationary distribution $\b{\nu}^{\pi}$, the following holds:
$$||(\mathbf{I} - \mathbf{\Gamma}^{\pi} + \b{1}\b{\nu}^T)^{-1}||_{\infty} \leq 2t_{\text{mix}}~.$$
\end{lemma}

Our proposed algorithm's convergence time depends on the mixing time of the MDP via the following lemma.

\begin{restatable}{lemma}{constraininglambda}\label{lemma: constraining the domain of lambda}
Let $\b{\lambda}^{*}, \b{\mu}^*, \beta^*$ be the solution to the \textsc{Fair-LP (P)}. Then $||\b{\lambda}^*||_{\infty} \leq M := 2t_{\text{mix}}(1+d_{\b{\rho}})$, where $d_{\b{\rho}} = \max_s \frac{n}{1-\rho_s n}$.
\end{restatable}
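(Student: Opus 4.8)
The plan is to show that the optimal multiplier $\boldsymbol{\lambda}^*$ is (up to sign) the bias, or relative-value, function of the Markov chain $\mathbf{\Gamma}^{\pi^*}$ induced by the optimal $\rhovec$-fair policy $\pi^*$, but driven by an \emph{effective reward} that folds in the fairness multiplier $\boldsymbol{\mu}^*$; bounding $\|\boldsymbol{\lambda}^*\|_\infty$ then reduces to the standard estimate of a bias function by the mixing time, and the extra factor $d_{\boldsymbol{\rho}}$ enters only through the size of the effective reward, which is controlled by Lemma~\ref{lemma: constraining the domain of q}.

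First I would invoke LP duality. The dual variable attached to the $(s,a)$-th primal constraint of \textsc{Fair-LP (P)} is precisely the optimal state-action visitation $x^*_{s,a}=\pi^*_{s,a}\nu^{\pi^*}_s$ from \textsc{Fair-LP (D)}. Since $[\hat{\mathbf{I}}\boldsymbol{\lambda}^*]_{s,a}=\lambda^*_s$ and $[\mathbf{C}^T\boldsymbol{\mu}^*]_{s,a}=\mu^*_s/\rho_s$, the $(s,a)$-th constraint reads $\lambda^*_s-[\mathbf{\Gamma}\boldsymbol{\lambda}^*]_{s,a}+r_{s,a}+\mu^*_s/\rho_s\le\beta^*$, and by complementary slackness it is tight whenever $x^*_{s,a}>0$. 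Averaging the tight constraints over $a$ with weights $\pi^*_{s,a}$, and using Assumption~\ref{assumption: ergodic} (so $\nu^{\pi^*}_s>0$ and every state carries at least one tight constraint), yields the state-indexed Poisson equation
\[(\mathbf{I}-\mathbf{\Gamma}^{\pi^*})\boldsymbol{\lambda}^*=\beta^*\boldsymbol{1}-\tilde{\rvec},\qquad \tilde r_s:=\textstyle\sum_a\pi^*_{s,a}r_{s,a}+\mu^*_s/\rho_s.\]
Left-multiplying by $(\boldsymbol{\nu}^{\pi^*})^T$ and using stationarity gives the consistency relation $\beta^*=(\boldsymbol{\nu}^{\pi^*})^T\tilde{\rvec}$, so the right-hand side is orthogonal to $\boldsymbol{\nu}^{\pi^*}$. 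With the normalization $\boldsymbol{\lambda}^*\perp\boldsymbol{\nu}^{\pi^*}$ already fixed before the lemma, the unique such solution is the convergent series
\[\boldsymbol{\lambda}^*=\sum_{t=0}^{\infty}(\mathbf{\Gamma}^{\pi^*})^t(\beta^*\boldsymbol{1}-\tilde{\rvec})=\sum_{t=0}^{\infty}\big(\boldsymbol{1}(\boldsymbol{\nu}^{\pi^*})^T-(\mathbf{\Gamma}^{\pi^*})^t\big)\tilde{\rvec},\]
where convergence and orthogonality both follow from $(\mathbf{\Gamma}^{\pi^*})^t\to\boldsymbol{1}(\boldsymbol{\nu}^{\pi^*})^T$ under ergodicity.

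Next I would bound the series entrywise. The $s$-th entry of the $t$-th term is $\langle\,\boldsymbol{\nu}^{\pi^*}-(\mathbf{\Gamma}^{\pi^*})^t(s,\cdot),\,\tilde{\rvec}\,\rangle$, an inner product of $\tilde{\rvec}$ with a zero-sum vector whose $\ell_1$ norm equals the $t$-step distance to stationarity from $s$; since $\tilde{\rvec}\ge\boldsymbol{0}$, centering $\tilde{\rvec}$ bounds each term by $\tfrac12\|\tilde{\rvec}\|_\infty$ times that distance. Summing, I would group the time indices into blocks of length $t_{\text{mix}}$ and use submultiplicativity of mixing (the distance at most halves every $t_{\text{mix}}$ steps, by the definition of $t_{\text{mix}}$), giving $\sum_{t\ge0}\max_s\|\boldsymbol{\nu}^{\pi^*}-(\mathbf{\Gamma}^{\pi^*})^t(s,\cdot)\|_1\le 4t_{\text{mix}}$ and hence $\|\boldsymbol{\lambda}^*\|_\infty\le 2t_{\text{mix}}\|\tilde{\rvec}\|_\infty$. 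To close, I bound $\|\tilde{\rvec}\|_\infty$: with rewards normalized to $[0,1]$ the policy-averaged reward is at most $1$, and Lemma~\ref{lemma: constraining the domain of q} gives $\mu^*_s/\rho_s\le n/(1-n\rho_s)\le d_{\boldsymbol{\rho}}$, so $\|\tilde{\rvec}\|_\infty\le 1+d_{\boldsymbol{\rho}}$ and $\|\boldsymbol{\lambda}^*\|_\infty\le 2t_{\text{mix}}(1+d_{\boldsymbol{\rho}})=M$.

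The step I expect to be the main obstacle is the first one: passing rigorously from the per-$(s,a)$ complementary-slackness equalities to the clean state-indexed Poisson equation, and verifying that the normalized solution is exactly the series above (existence, uniqueness, and orthogonality to $\boldsymbol{\nu}^{\pi^*}$). The remaining difficulty is purely in the constants—the span-versus-norm factor of $\tfrac12$ is precisely what sharpens the crude $4t_{\text{mix}}$ bound on the series into the stated $2t_{\text{mix}}(1+d_{\boldsymbol{\rho}})$.
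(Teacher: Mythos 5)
Your proposal is correct, and its first half coincides with the paper's proof: the passage from complementary slackness to the state-indexed equation $(\mathbf{I}-\mathbf{\Gamma}^{*})\boldsymbol{\lambda}^* + \mathbf{D}_{\boldsymbol{\rho}}\boldsymbol{\mu}^* + (\Pi^{*})^T\boldsymbol{r} = \beta^*\cdot\boldsymbol{1}$ is exactly what the paper does by left-multiplying the \textsc{Fair-LP (P)} constraint by $(\Pi^*)^T$ and invoking the KKT conditions together with $\nu^{\pi^*}_s>0$ from Assumption \ref{assumption: ergodic} --- so the step you flagged as the main obstacle is handled identically and is not an obstacle. Where you genuinely diverge is the second half. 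The paper uses the normalization $\langle\boldsymbol{\lambda}^*,\boldsymbol{\nu}^*\rangle=0$ to invert $(\mathbf{I}-\mathbf{\Gamma}^{*}+\boldsymbol{1}(\boldsymbol{\nu}^*)^T)$, cites Lemma \ref{lemma: from JIN SIDFORD} as a black box for $||(\cdot)^{-1}||_\infty\le 2t_{\text{mix}}$, and must then separately establish $||\beta^*\cdot\boldsymbol{1}-\mathbf{D}_{\boldsymbol{\rho}}\boldsymbol{\mu}^*-\boldsymbol{r}^*||_\infty\le 1+d_{\boldsymbol{\rho}}$, which needs strong duality ($\beta^*-\sum_s\mu^*_s=\boldsymbol{r}^T\boldsymbol{x}^*$, hence $0\le\beta^*\le 1+\sum_s\mu^*_s$) on top of Lemma \ref{lemma: constraining the domain of q}. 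Your series solution $\boldsymbol{\lambda}^*=\sum_{t\ge 0}\bigl(\boldsymbol{1}(\boldsymbol{\nu}^{\pi^*})^T-(\mathbf{\Gamma}^{\pi^*})^t\bigr)\tilde{\boldsymbol{r}}$ makes $\beta^*$ disappear entirely (it is absorbed as $(\boldsymbol{\nu}^{\pi^*})^T\tilde{\boldsymbol{r}}$), so you never need the strong-duality bound on $\beta^*$; your centering of the nonnegative $\tilde{\boldsymbol{r}}$ against the zero-sum rows then recovers a factor $\tfrac12$ which, combined with the $4t_{\text{mix}}$ block-summation estimate, reproduces exactly the paper's constant $M=2t_{\text{mix}}(1+d_{\boldsymbol{\rho}})$ via $\|\tilde{\boldsymbol{r}}\|_\infty\le 1+d_{\boldsymbol{\rho}}$ from Lemma \ref{lemma: constraining the domain of q}. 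In effect you re-derive Lemma \ref{lemma: from JIN SIDFORD} inline rather than citing it: this buys a self-contained argument with one fewer appeal to duality, at the cost of the mixing-time bookkeeping --- note that the halving of the distance per $t_{\text{mix}}$ steps is not literally \emph{by definition} of $t_{\text{mix}}$ as you wrote, but follows from the standard submultiplicativity of the worst-case distance (the paper's threshold $1/2$ in $\ell_1$ corresponds to $1/4$ in total variation, which is exactly what makes the doubling-then-halving argument close). With that clause made explicit, all constants check out.
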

\begin{proof}
Since $\b{\lambda}^*, \b{\mu}^*$, and $\beta^*$ is the solution to the \textsc{Fair-LP} primal,
\begin{equation}\label{equation: lemma for constraining the lambda 1}
(\Ihat - \mathbf{\Gamma})\b{\lambda}^* + \mathbf{C}^T\b{\mu}^* + \b{r} \leq \beta^* \b{1} ~. 
\end{equation}
Let $\pi^*$ be the optimal policy corresponding to $\b{x}^{*}$, and let $\Pi^* \in \mathbb{R}^{\ell\times n}$ and $\b{\nu}^*$ be its corresponding matrix and stationary distribution respectively. Note that $\mathbf{\Gamma}^{*} = (\Pi^*)^T\cdot \mathbf{\Gamma}$ is the probability transition matrix corresponding to the Markov chain induced by $\pi^*$. Multiplying Equation \ref{equation: lemma for constraining the lambda 1} by $(\Pi^*)^T$ from the left, and using the KKT condition and that $\nu^*_s >0$ for all $s$ (as the MDP is ergodic; Assumption \ref{assumption: ergodic}) we have 
\begin{equation}\label{equation: lemma for constraining the lambda 2}
(\mathbf{I} - \mathbf{\Gamma}^*)\b{\lambda}^* + \mathbf{D}_{\b{\rho}}\muvec^* + (\Pi^*)^T\b{r} = \beta^* \cdot \b{1}, 
\end{equation}
where $\mathbf{D}_{\rhovec}$ is the $n\times n$ diagonal matrix with its $s$-th entry being $\frac{1}{\rho_s}$. It is easy to see that $(\Pi^*)^T\mathbf{C}^T = \mathbf{D}_{\b{\rho}}$, and $(\Pi^*)^T\b{1} = \b{1}$. Denote $(\Pi^*)^T\b{r}$ as $\rvec^*$. Since $\langle\b{\lambda}^*,\b{\nu}^*\rangle = 0$, Equation \ref{equation: lemma for constraining the lambda 2} can be rewritten as follows 
$$(\mathbf{I} - \mathbf{\Gamma}^{*} + \b{1}(\b{\nu}^*)^T)\b{\lambda}^* + \mathbf{D}_{\b{\rho}}\muvec^* + \rvec^* = \beta^* \cdot \b{1}~.$$
This implies
\begin{align}
 \b{\lambda}^* &= ~(\mathbf{I} - \mathbf{\Gamma}^{*} + \b{1}(\b{\nu}^*)^T)^{-1}(\beta^* \cdot \b{1} -\mathbf{D}_{\b{\rho}}\muvec^* - \rvec^*) \nonumber\\
 ||\b{\lambda}^*||_{\infty} &\leq ~ ||(\mathbf{I} - \mathbf{\Gamma}^{*} + \b{1}(\b{\nu}^*)^T)^{-1} ||_{\infty} ||\beta^* \cdot \b{1} -\mathbf{D}_{\b{\rho}}\muvec^* - \rvec^* ||_{\infty} \nonumber \\
 &\leq ~ 2t_{\text{mix}}||\beta^* \cdot \b{1} -\mathbf{D}_{\b{\rho}}\muvec^* - \rvec^* ||_{\infty} \nonumber\\
 &\leq~ 2t_{\text{mix}}(1+ \max_s \frac{n}{1-\rho_s n}) \label{equation: bounding infinity norm of lambda*}
\end{align}
The first inequality in Equation \ref{equation: bounding infinity norm of lambda*} follows from Lemma \ref{lemma: from JIN SIDFORD}, and the second follows by showing that $||\beta^* \cdot \b{1} -\mathbf{D}_{\b{\rho}}\muvec^* - \Pi^T\b{r} ||_{\infty} \leq (1+ \max_s \frac{n}{1-\rho_s n})$, which we argue next. By strong duality we have $\beta^* - \sum_s \mu_s^* = \b{r}^T\b{x}^*$. Since $\b{r}\in [0,1]^{\ell}$ and $\b{x}^*\in \Delta^\ell$, $0 \leq \beta^* \leq 1+ \sum_s \mu_s^*$. Hence from Claim \ref{lemma: constraining the domain of q}, we have for any $s$
$$-1 - \sum_s \frac{n\rho_s}{1-n\rho_s} \leq \beta^* - \frac{\mu^*_s}{\rho_s} - r^*_s \leq 1 + \max_s \frac{n}{1-n\rho_s} \ ~. $$
Now as $\sum_s \rho_s <1 $, \ $1 + \sum_s \frac{n\rho_s}{1-n\rho_s} \leq 1 + \max_s \frac{n}{1-n\rho_s})$. Hence, $||\beta^* \cdot \b{1} -\mathbf{D}_{\b{\rho}}\b{\mu}^* - \b{r}^* ||_{\infty} \leq (1+ d_{\b{\rho}})$.
\end{proof}
The bound on $||\b{\lambda}^*||_{\infty}$ crucially restricts the search space of the primal variable $\b{\lambda}$ in the stochastic mirror descent (SMD) approach in Section \ref{subsection: Min-Max Problem} and \ref{subsection: SMD framework} and enables us to give convergence guarantee for the proposed algorithm. 
\subsection{Minimax Formulation}
\label{subsection: Min-Max Problem}

We formulate Fair-AMDP as a bilinear saddle point problem using the Lagrangian of \textsc{Fair-LP}, 
\begin{equation*}
 h(\b{x}, \b{\lambda}, \b{\mu}, \beta ) = \rvec^T\b{x} + \lambdavec^T(\Ihat - \mathbf{\Gamma})^T\b{x} + \b{\mu}^{T}( \mathbf{C} \b{x} - \b{1} ) + \beta (1 - \b{1}^{T} \b{x}).
\end{equation*}
Let $\b{x}^*, \b{\lambda}^{*}, \b{\mu}^*, \beta^*$ be a solution to 
\begin{equation} \label{equation: min max with h}
 \min_{\b{\lambda} \in \mathbb{R}^n, \b{\mu}\geq \b{0}, \beta}~~\max_{\b{x} \geq\b{0}} ~~h(\b{x}, \b{\lambda}, \b{\mu}, \beta )~.
\end{equation}
From Assumption \ref{asm:fairAction}, we have that \textsc{Fair-LP} problem has a feasible solution. Hence, $\b{x}^*$ is the solution to \textsc{Fair-LP (D)}, and $\b{\lambda}^{*}, \b{\mu}^*$ and $\beta^*$ is the solution to \textsc{Fair-LP (P)}. Moreover, from the KKT optimality conditions, $(\b{\mu}^*)^{T}( \mathbf{C} \b{x}^* - \b{1} ) = 0$.

Let $\Delta_{\b{\rho}}^{\ell} := \{\b{x}\in \Delta^{\ell} \mid \sum_{a}x_{s,a}\geq \rho_s ~~\text{for all } s\}$, and $\mathbb{B}_{2M}^{n} := \{\b{\lambda} ~\mid ~ ||\b{\lambda}||_{\infty} \leq 2M\}$, where $M$ is as defined in Lemma \ref{lemma: constraining the domain of lambda}. Then, for every $\b{x}^* \in \Delta_{\b{\rho}}^{\ell}$, note the following: a) $\beta (1 - \b{1}^{T} \b{x}) = 0$, b) $\b{0} \in \arg\min_{\b{\mu}\geq \b{0}}\b{\mu}^{T}(\mathbf{C} \b{x} - \b{1})$. Hence, using Lemma \ref{lemma: constraining the domain of lambda} it follows that $(\b{x}^*, \b{\lambda}^{*}, \b{\mu}^*, \beta^*)$ is a solution to the optimization problem in Equation \ref{equation: min max with h} if and only if $(\b{x}^*, \b{\lambda}^{*})$ is the solution to the following problem 
\begin{equation}\label{equation: the final min maz optimization}
 \min_{\b{\lambda} \in \mathbb{B}_{2M}^n} \ \ \max_{\b{x}\in \Delta_{\b{\rho}}^{\ell}} f(\b{x}, \b{\lambda}) = \rvec^T\b{x} + \b{\lambda}^{T}(\Ihat - \mathbf{\Gamma})^T\b{x}. 
\end{equation}
Furthermore, we have 
\begin{align*}
h(\b{x}^*, \b{\lambda}^*, \b{\mu}^{*}, \beta^*) = f(\b{x}^*, \b{\lambda}^*) = \b{r}^T\b{x}^*.
\end{align*}
Though $|\lambdavec^*|_{\infty} \leq M$, increasing the domain of $\b{\lambda}$ from $M$ to $2M$ helps in the proof of Theorem \ref{thm: optimality and fairness} which provides simultaneous approximation guarantees on the optimality and fairness of the policy that is constructed by Algorithm \ref{algorithm: fair state visitation}.

Next, we define the $\Gap$ function, which quantifies the closeness of a given feasible solution $(\xvec, \lambdavec )$ to the optimal solution.
\begin{definition}
The gap function $ \Gap: \Delta_{\rhovec}^{\ell} \times \mathbb{R}^{n} \rightarrow \mathbb{R}_{+}$ is defined as 
\begin{equation}
 \Gap(\xvec, \lambdavec ) = \max_{\xvec' \in \Delta_{\rhovec}^{\ell}} f(\xvec', \lambdavec) - \min_{\lambdavec' \in \mathbb{B}_{2M}^{n} } f(\xvec, \lambdavec').
\end{equation}
\end{definition}
It is easy to see that the $\Gap$ function is non-negative and $\Gap(\xvec^{\star}, \allowbreak \lambdavec^{\star}) = 0$. We say that $(\xvec, \lambdavec)$ is an $\varepsilon$-approximate solution to Equation \ref{equation: the final min maz optimization} if $\Gap(\xvec,\lambdavec) \leq \varepsilon$.

\subsection{Stochastic Mirror Descent (SMD)}
\label{subsection: SMD framework}
In this section, we state the SMD framework and then briefly describe how we use it to compute an (expected) $\varepsilon$-approximate solution to the optimization problem in Equation \ref{equation: the final min maz optimization}. This is done using the ghost iterate technique \citep{jin2020efficiently} to compute the approximate solution using only stochastic query access to $\mathbf{\Gamma}$ and $\rvec$ (i.e. the generative model). We begin with two useful definitions.

\begin{definition}{(Strong Convexity)}
Let $\mathcal{X} \subset \mathbb{R}^{n}$ be a convex set. A differentiable function $\R: \mathcal{X} \rightarrow \mathbb{R}$ is said to be $\alpha$-strongly convex with respect to norm $||.||$ if $\R(\yvec) \geq \R(\xvec) + \langle \nabla \R(\xvec), \yvec-\xvec \rangle + \frac{\alpha}{2} ||\yvec - \xvec ||^2 $ for all $ \xvec, \yvec \in \mathcal{X}$.
\label{def:strongConvexity}
\end{definition}

\begin{definition}{(Distance Generating Function)}
Let $\R: \mathcal{X} \rightarrow \mathbb{R}$ be a continuously differentiable, strongly convex, real-valued function on a convex set $\mathcal{X} \subset \mathbb{R}^{n}$. For any $\xvec,\yvec \in \mathcal{X}$, the distance from point $\xvec$ to $\yvec$ is given by $ V_{\xvec}(\yvec) = \R(\yvec) - \R(\xvec) - \langle \nabla \R(\xvec), \yvec -\xvec \rangle.$\footnote{This function is also called as the Bregman divergence or the prox-function. }
\label{def:distGenFunction}
\end{definition}
The SMD framework is a stochastic approximation approach of finding a solution to a stochastic convex program \cite{nemirovski2009robust}. In particular, the SMD algorithm is a special type of stochastic gradient descent (SGD) algorithm where the updates are computed in the \emph{mirrored} space. It provides an iterative procedure to select points from a convex space $\mathcal{X}$ with stochastic query access to underlying parameters. First, a suitable strongly convex regularizer $R$ catering to the geometry of $\mathcal{X}$ is designed. Then, at each iterate $t$, an unbiased estimator of gradient $\gvec$ given by $\tilde{\gvec}_{t}$ and a step size $\eta_{t}$ is computed. This estimator, along with the previous iterate value $\xvec_{t}$ and step size $\eta_{t}$, is used to compute the next iterate as given below: 
\begin{equation}
 \xvec_{t+1} = \arg \min_{x \in \mathcal{X}} ~~\langle \eta_t \tilde{\gvec}_{t} , \xvec \rangle + V_{\xvec_{t}}(\xvec)~.
 \label{eqn:SMD update}
\end{equation}
Since the regularizer is strictly convex and differentiable, the SMD update can be equivalently written in the (mirrored) gradient space as follows 
\begin{equation*}
 \nabla R(\xvec_{t+1}) = \nabla R(\xvec_t) - \eta_t \tilde{\gvec}_{t}~. 
\end{equation*}
The strong convexity of the regularizer $R$ implies the uniqueness of the mapping of $\nabla R(\cdot)$. For more details, the reader is referred to the works by \citet{nemirovski2009robust} and \citet{carmon2019variance}. 

In this paper, we use the SMD framework to obtain an approximate solution to the optimization problem defined in Equation \ref{equation: the final min maz optimization}. Note that, in our problem, there are two convex spaces: $\Delta_{\rhovec}^{\ell}$ corresponding to $\xvec$, and $\mathbb{B}_{2M}^{\ell}$ corresponding to $\lambdavec$. First, we choose strongly convex regularizers for the respective spaces and initialize the \emph{constant} step sizes $\eta^{\xvec}$ and $\eta^{\lambdavec}$, respectively. Then, at each iterate, we compute bounded estimators of the gradients $\gvec^{\xvec}$ and $\gvec^{\lambdavec}$ given by $\tilde{\gvec}_{t}^{\xvec}$ and $\tilde{\gvec}_{t}^{\lambdavec}$, respectively. Here, $\gvec^{\xvec}$ and $\gvec^{\lambdavec}$ are the gradients of $f$ with respect to $\xvec$ and $\lambdavec$, respectively. Finally, we use the SMD update rule given in Equation \ref{eqn:SMD update} to compute the iterate value for the respective spaces. In this paper, we consider bounded gradient estimates of the gradients as defined below.

\begin{definition}{(\citet{jin2020efficiently})} \label{definition: bounded estimators}
Given following properties on mean, scale, and variance of an estimator $\tilde{\gvec}$ of the gradient $\gvec$:
\begin{enumerate}[leftmargin=*]
\item unbiasedness: $\mathbb{E}[\tilde{\gvec}] = \gvec$,
\item bounded maximum entry: $||\tilde{\gvec}||_{\infty} \leq c$ with probability $1$, and
\item bounded second moment $\mathbb{E}[||\tilde{\gvec}||^2] \leq v$,
\end{enumerate}
we say that $\tilde{\gvec}$ is a $(v, ||.||)$-bounded estimator of $\gvec$ if it satisfies $(1)$ and $(2)$, and that it is a $(c, v, ||.||_{\Delta_{\rhovec}^{\ell}})$-bounded estimator if it satisfies all three with local norm $||.||_{\yvec}$ for all $\yvec \in \Delta_{\rhovec}^{\ell}$. 
\end{definition}
Next, we present our algorithm and the bounded gradient estimators and regularizers corresponding to the two convex spaces (mentioned above) used in the algorithm.

\section{Algorithm and Estimators}
\label{sec:algorithm}
\begin{algorithm}[h]
\SetAlgoLined
\DontPrintSemicolon
\kwInput{Desired accuracy $\varepsilon$}
\kwOutput{An $\varepsilon$-approx policy $\pi^{\varepsilon}$} 
\kwParam{
$\eta^{\b{\lambda}} \leq \varepsilon / 16$, 
$\eta^{\b{x}} \leq \varepsilon / (8\ell(24M^2+1))$, 
$T \geq \max\left(8\log \ell / (\eta^{\xvec}\varepsilon), 32M^2n / (\eta^{\lambdavec} \varepsilon)\right)$
}
\For{$t = 1, \ldots , T$} {
Get $\tilde{\gvec}^{\b{\lambda}}_t$ as a $((4M+1)\ell, (24M^2+1) \ell,||.||_{\Delta^{\ell}})$-bounded estimator of $\gvec^{\b{\lambda}}(\b{x}_t, \b{\lambda}_t)$\;
Get $\tilde{\gvec}^{\b{x}}_t$ as a $(2,||.||_2)$-bounded estimator of $\gvec^{\b{x}}(\b{x}_t, \b{\lambda}_t)$\;
Let $\b{x}_{t+1} = \arg\min_{\b{x} \in \Delta_{\b{\rho}}^{\ell}} \langle \eta^{\b{x}}\tilde{\gvec}^{\b{x}}_t, \b{x}\rangle + V_{\b{x}_t}(\b{x})$\;
Let $\b{\lambda}_{t+1} = \arg\min_{\b{\lambda} \in \mathbb{B}_{2M}^n} \langle \eta^{\b{\lambda}}\tilde{\gvec}^{\b{\lambda}}_t, \b{\lambda}\rangle + V_{\b{\lambda}_t}(\b{\lambda})$\;
}
Let $( \b{x}^{\varepsilon},\b{\lambda}^{\varepsilon}) = \frac{1}{T}\sum_{t=1}^T (\b{x}_t, \b{\lambda}_t )$\;
\Return{$\pi^{\varepsilon}$ with $\pi^{\varepsilon}(a|s) = x_{s,a}/\sum_{a'=1}^m x_{s,a'}$}
\caption{Fair State-Visitation} \label{algorithm: fair state visitation}
\end{algorithm}

In Section \ref{sec: constrained domain and SMD}, we formulated Fair-AMDP as a bilinear saddle point problem using the Lagrangian function of \textsc{Fair-LP}. We also showed how the SMD framework can be used to compute an expected approximate solution to this bilinear problem. Given input $\varepsilon$, Algorithm \ref{algorithm: fair state visitation} computes $(\xvec^{\varepsilon}, \lambdavec^{\varepsilon})$, which is an expected $\varepsilon$-approximate solution to the optimization problem in Equation \ref{equation: the final min maz optimization}, i.e. $\mathbb{E}[\Gap(\xvec^{\varepsilon}, \allowbreak \lambdavec^{\varepsilon})] \allowbreak \leq \varepsilon$. The details of the regularizers and estimators are provided in the next paragraph.
Then, in Line 8, the algorithm uses $\xvec^{\varepsilon}$ to compute $\pi^{\varepsilon}$. Note that it is not immediately clear that $\pi^{\varepsilon}$ satisfies the approximation guarantees (in expectation) as mentioned in Definition \ref{definiton: approximate policy} even though $\xvec^{\varepsilon}$ is $\varepsilon$-approximate solution for the corresponding bilinear saddle point problem. In addition, note that Algorithm \ref{algorithm: fair state visitation} is oblivious to the presence of the fair-action.

\begin{table}[!b]
\centering
 \begin{tabular}{l l l}
 \toprule
   & $R$ & $V_{\xvec'}(\xvec) $ \\ \midrule 
 $\xvec$ - space & $\sum_{s,a} x_{s,a} \log(x_{s,a})$ & $\sum_{s,a} x'_{s,a} \log(x'_{s,a}/x_{s,a}) $ \\
 $\lambdavec$ - space & $\frac{1}{2}||\lambdavec||_2^{2}$ & $\frac{1}{2}||\lambdavec - \lambdavec'||_{2}^{2}$ \\ \bottomrule
 \end{tabular}
 \vspace{0.05in}
 \caption{SMD parameter setting}
 \vspace{-0.15in}
 \label{table: regularizer and distance}
\end{table}

We use normalized entropic regularizer for $\Delta_{\rhovec}^{\ell}$, and $||.||_{2}^2$ regularizer for $\mathbb{B}_{2M}^{\ell}$, as shown in Table \ref{table: regularizer and distance}. The gradients of $f$ with respect $\xvec$, and $\lambdavec$ are given as 
\begin{align*}
    \gvec^{\xvec} &= (\mathbf{\Gamma} - \hat{\mathbf{I}})\lambdavec - \rvec, \\
    \gvec^{\lambdavec} &= (\hat{\mathbf{I}} - \mathbf{\Gamma})^T\xvec. 
\end{align*}
Note that since we maximize with respect to $\xvec$, we take the negative of the gradient with respect to $\xvec$. To estimate $\tilde{\gvec}^{\xvec}_{t}$ and $\tilde{\gvec}^{\lambdavec}_{t}$ of $\gvec^{\xvec}$ and $\gvec^{\lambdavec}$, we use the generative model as follows, where $(s,a) \sim \xvec_t$ denotes the state-action pair sampled from the distribution $\xvec_t \in \Delta_{\rhovec}^{\ell}$.

\begin{enumerate}[leftmargin=*]
    \item 
    For an $\xvec \in \Delta^{\ell}_{\rhovec}$, the estimator $\tilde{\gvec}_{t}^{\lambdavec}$ for $\gvec^{\b{\lambdavec}}$ in Line 2 is computed: 
    \begin{align*}
    (s,a) &\sim \b{x}, \,\,\,\, s' \sim \mathbf{\Gamma}((s,a),s')\\ \tilde{\gvec}^{\b{\lambda}}(\xvec,\lambdavec) &= \b{e}_{s} - \b{e}_{s'} ~,
    \end{align*}
    where $\b{e}_s$ is the unit vector in $\mathbb{R}^n$. Finally, $\tilde{\gvec}^{\b{\lambda}}_t = \tilde{\gvec}^{\b{\lambda}}(\xvec_{t-1},\lambdavec_{t-1})$.
    \item 
    The estimator $\tilde{\gvec}_{t}^{\xvec}$ of $\gvec^{\b{x}}$ in Line 3 is computed as: 
    \begin{align*}
    (s,a) &\sim \left[1/\ell\right], \,\,\,\, s' \sim \mathbf{\Gamma}((s,a),s')\\
    \tilde{\gvec}^{\b{x}}(\xvec,\lambdavec) &= \ell(\lambda_{s'} - \lambda_{s} - r_{s,a})\b{e}_{s,a}, 
    \end{align*}
    where $\lambda_{s}$ denotes the $s$-th entry of $\lambdavec$, $\b{e}_{s,a}$ is a unit vector in $\mathbb{R}^{\ell}$, and $\tilde{\gvec}^{\b{x}}_t = \tilde{\gvec}^{\b{x}}(\xvec_{t-1},\lambdavec_{t-1})$.
\end{enumerate}

\noindent Next, we show that $\tilde{\gvec}^{\b{x}}_t$ and $\tilde{\gvec}^{\b{\lambda}}_t$ are bounded (Definition \ref{definition: bounded estimators}).

\begin{restatable}{lemma}{XBoundedEstimator}
\label{lemma: bounded estimators for x}
The estimator $\tilde{\gvec}^{\b{x}}(\xvec,\lambdavec)$ as constructed above is a $((4M+1)\ell, (24M^2+1) \ell,||.||_{\Delta^{\ell}})$-bounded estimator.
%$\tilde{\gvec}^{\b{x}}_t$ is a $(10\ell dt_{\text{mix}}, 100 \ell d^2t_{\text{mix}}^2,||.||_{\Delta^{\ell}})$-bounded estimator respectively.
\end{restatable}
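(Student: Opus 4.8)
The plan is to verify the three defining properties from Definition \ref{definition: bounded estimators} directly, working entrywise. First I would record the entries of the true gradient: since the $((s,a),s')$-entry of $\mathbf{\Gamma}$ is $\mathbf{\Gamma}((s,a),s')$ and that of $\Ihat$ is $\mathbbm{1}[s=s']$, the $(s,a)$-coordinate of $\gvec^{\xvec} = (\mathbf{\Gamma}-\Ihat)\lambdavec - \rvec$ equals $\mathbb{E}_{s'\sim \mathbf{\Gamma}((s,a),\cdot)}[\lambda_{s'}] - \lambda_s - r_{s,a}$. For unbiasedness (property (i)), I take expectation over the two independent draws $(s,a)\sim[\tfrac{1}{\ell}]$ and $s'\sim\mathbf{\Gamma}((s,a),\cdot)$: the uniform sampling contributes a factor $\tfrac{1}{\ell}$ that cancels the explicit $\ell$ in the estimator, so $\mathbb{E}[\tilde{\gvec}^{\xvec}] = \sum_{(s,a)} (\mathbb{E}_{s'}[\lambda_{s'}] - \lambda_s - r_{s,a})\boldsymbol{e}_{s,a} = \gvec^{\xvec}$, as required.

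For the maximum-entry bound (property (ii)), I use that $\tilde{\gvec}^{\xvec}$ has a single nonzero coordinate, of magnitude $\ell\,|\lambda_{s'}-\lambda_s-r_{s,a}|$. Since the $\lambdavec$-iterates stay in $\mathbb{B}_{2M}^n$ we have $|\lambda_{s'}|,|\lambda_s|\le 2M$, and because rewards are normalized, $r_{s,a}\in[0,1]$; hence $|\lambda_{s'}-\lambda_s-r_{s,a}|\le 4M+1$ and $\|\tilde{\gvec}^{\xvec}\|_\infty \le (4M+1)\ell$ with probability one, giving $c=(4M+1)\ell$.

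The one step that needs care is the second-moment bound (property (iii)), which is measured in the \emph{local} norm of the $\xvec$-space. For the normalized entropic regularizer on $\Delta^\ell$ the relevant dual local norm at $\yvec$ is $\|\uvec\|_{\yvec}^2 = \sum_{(s,a)} y_{s,a}\, u_{s,a}^2$. Because $\tilde{\gvec}^{\xvec}$ is supported on the single sampled coordinate, $\|\tilde{\gvec}^{\xvec}\|_{\yvec}^2 = y_{s,a}\,\ell^2(\lambda_{s'}-\lambda_s-r_{s,a})^2$, so taking expectation over $(s,a)$ and $s'$ gives $\mathbb{E}\,\|\tilde{\gvec}^{\xvec}\|_{\yvec}^2 = \ell\sum_{(s,a)} y_{s,a}\,\mathbb{E}_{s'}[(\lambda_{s'}-\lambda_s-r_{s,a})^2]$. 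Here the crucial cancellation is that a single factor of $\ell$ survives while $\sum_{(s,a)} y_{s,a}=1$ for every $\yvec\in\Delta_{\rhovec}^\ell\subseteq\Delta^\ell$; using the pointwise bound $(\lambda_{s'}-\lambda_s-r_{s,a})^2\le (4M+1)^2 = 16M^2+8M+1\le 24M^2+1$ (the last inequality holding since $M=2t_{\text{mix}}(1+d_{\rhovec})\ge 1$) yields $\mathbb{E}\,\|\tilde{\gvec}^{\xvec}\|_{\yvec}^2 \le (24M^2+1)\ell$ uniformly over $\yvec$.

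Combining the three properties establishes that $\tilde{\gvec}^{\xvec}$ is a $((4M+1)\ell,(24M^2+1)\ell,\|.\|_{\Delta^\ell})$-bounded estimator. The main obstacle is pinning down the correct local norm and seeing that the weighting by $y_{s,a}$ (rather than by $1/y_{s,a}$, which gives the primal local norm) is exactly what tames the $\ell^2$ factor down to $\ell$; once that is in place, everything else is a direct calculation using $\|\lambdavec\|_\infty\le 2M$ and $r_{s,a}\in[0,1]$.
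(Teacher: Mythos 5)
Your proof is correct and follows essentially the same route as the paper's: unbiasedness via cancellation of the $\ell$ factor against the uniform $1/\ell$ sampling, the entrywise bound $|\lambda_{s'}-\lambda_s-r_{s,a}|\le 4M+1$ from $\|\lambdavec\|_\infty\le 2M$ and $r_{s,a}\in[0,1]$, and the second-moment bound in the local norm $\|\uvec\|_{\yvec}^2=\sum_{s,a}y_{s,a}u_{s,a}^2$, where the expectation contributes $\tfrac{1}{\ell}\cdot\ell^2$ and $\sum_{s,a}y_{s,a}=1$ tames this to a single factor of $\ell$. If anything, you are more explicit than the paper, which states the final inequality $\mathbb{E}\|\tilde{\gvec}^{\xvec}\|_{\xvec'}^2\le\sum_{s,a}\tfrac{1}{\ell}x'_{s,a}(24M^2+1)\ell^2$ without spelling out that $(4M+1)^2\le 24M^2+1$ requires $M\ge 1$, a point you correctly justify via $M=2t_{\text{mix}}(1+d_{\rhovec})$.
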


\begin{proof}
From the definition of $\tilde{\gvec}^{\b{x}}(\xvec,\lambdavec)$ it follows that 
\begin{align*}
 \mathbb{E}[\tilde{\gvec}^{\b{x}}] &= \sum_{s, s' \in [n], a\in [m]} \mathbf{\Gamma}((s,a),s)(\lambda_{s'} - \lambda_{s} - r_{s,a})e_{s,a} \\
 &= (\mathbf{\Gamma} - \Ihat)\b{\lambda} - \b{r}.
\end{align*}
Since $||\lambdavec||_{\infty} \leq 2M$ and $\rvec \in [0,1]^{\ell}$, $|\lambda_{s'} - \lambda_s - r_{s,a}| \leq 4M+1$, we have $||\tilde{\gvec}^{\b{x}}_t||_{{\infty}} \leq (4M+1)\ell$. For the second moment, we have that for any $\xvec' \in \Delta^\ell$, 
\begin{align*}
    \mathbb{E}[||\tilde{\gvec}^{\b{x}} ||_{\b{x}'}^2] \leq \sum_{s,a} \frac{1}{\ell} x'_{s,a} (24M^2+1) \ell^2=(24M^2+1)\ell. 
\end{align*}
\end{proof}

\begin{restatable}{lemma}{LambdaBoundedEstimator}\label{lemma: bounded estimators for lambda and q}
The estimator $\tilde{\gvec}^{\b{\lambda}}(\xvec,\lambdavec)$ as constructed above is a $(2, \allowbreak ||.||_2)$ bounded estimator.
\end{restatable}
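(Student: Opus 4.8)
The plan is to verify directly the two requirements of a $(2, ||\cdot||_2)$-bounded estimator from Definition \ref{definition: bounded estimators}, namely unbiasedness (property (i)) and a second-moment bound (property (iii)); property (ii) is not required for this two-parameter notion. Throughout, the randomness comes from the two-stage sampling: first $(s,a) \sim \xvec$ (so $(s,a)$ is drawn with probability $x_{s,a}$), and then $s' \sim \mathbf{\Gamma}((s,a),s')$.

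For unbiasedness, I would compute $\mathbb{E}[\tilde{\gvec}^{\boldsymbol{\lambda}}] = \mathbb{E}[\boldsymbol{e}_s] - \mathbb{E}[\boldsymbol{e}_{s'}]$ by linearity. For the first term, only the draw of $(s,a)$ matters, so $\mathbb{E}[\boldsymbol{e}_s] = \sum_{s,a} x_{s,a}\, \boldsymbol{e}_s = \sum_s \big(\sum_a x_{s,a}\big)\boldsymbol{e}_s$, whose coordinate at $s'$ equals $\sum_a x_{s',a}$; by the definition of $\Ihat$ this is exactly $(\Ihat^T\xvec)_{s'}$, hence $\mathbb{E}[\boldsymbol{e}_s] = \Ihat^T\xvec$. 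For the second term, conditioning on $(s,a)$ and then averaging over $s'$ gives, in coordinate $s''$, the quantity $\sum_{s,a} x_{s,a}\,\mathbf{\Gamma}((s,a),s'')$, which is precisely $(\mathbf{\Gamma}^T\xvec)_{s''}$; hence $\mathbb{E}[\boldsymbol{e}_{s'}] = \mathbf{\Gamma}^T\xvec$. Subtracting yields $\mathbb{E}[\tilde{\gvec}^{\boldsymbol{\lambda}}] = (\Ihat - \mathbf{\Gamma})^T\xvec = \gvec^{\boldsymbol{\lambda}}$, as required.

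For the second-moment bound, I would observe that $\tilde{\gvec}^{\boldsymbol{\lambda}} = \boldsymbol{e}_s - \boldsymbol{e}_{s'}$ is a difference of two standard basis vectors in $\mathbb{R}^n$. If $s = s'$ it is the zero vector, so $||\tilde{\gvec}^{\boldsymbol{\lambda}}||_2^2 = 0$; if $s \neq s'$ it has exactly two nonzero entries, each equal to $\pm 1$, so $||\tilde{\gvec}^{\boldsymbol{\lambda}}||_2^2 = 2$. In either case $||\tilde{\gvec}^{\boldsymbol{\lambda}}||_2^2 \leq 2$ holds with probability one, and therefore $\mathbb{E}[||\tilde{\gvec}^{\boldsymbol{\lambda}}||_2^2] \leq 2$, establishing $v = 2$.

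Since both requirements hold, $\tilde{\gvec}^{\boldsymbol{\lambda}}$ is a $(2, ||\cdot||_2)$-bounded estimator. The argument is essentially bookkeeping; the only point that requires care is correctly unwinding the two-stage expectation and matching the resulting coordinate sums to the transposed matrices $\Ihat^T$ and $\mathbf{\Gamma}^T$ appearing in $\gvec^{\boldsymbol{\lambda}}$. There is no genuine obstacle, since the worst-case deterministic bound on $||\boldsymbol{e}_s - \boldsymbol{e}_{s'}||_2^2$ already delivers the second-moment bound without invoking any structure of $\xvec$ or $\mathbf{\Gamma}$.
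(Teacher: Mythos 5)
Your proposal is correct and follows essentially the same route as the paper's proof: unbiasedness by unwinding the two-stage expectation to obtain $(\Ihat - \mathbf{\Gamma})^T\xvec$, and the second-moment bound via the pointwise observation that $||\boldsymbol{e}_s - \boldsymbol{e}_{s'}||_2^2 \leq 2$. You simply fill in the coordinate-level bookkeeping (and correctly note that property (ii) is not required for the two-parameter notion), which the paper leaves implicit.
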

The proof of Lemma \ref{lemma: bounded estimators for lambda and q} is in Appendix 
\ifdefined\aamas
A.
\else
\ref{secappendix: proof of lemma 4}.
\fi
We note that if $\b{\rho} = \b{0}$, then our algorithm is the same as that by \citet{jin2020efficiently} for the unconstrained AMDP. In Section \ref{sec: theoretical results}, Theorem \ref{thm:SMDGapTheorem} shows that $(\xvec^{\varepsilon}, \lambdavec^{\epsilon})$ is such that $\mathbb{E}\Gap(\xvec^{\varepsilon}, \lambdavec^{\varepsilon}) \leq \varepsilon$. Then, Theorem \ref{thm: optimality and fairness} shows that the policy $\pi^{\varepsilon}$ constructed at Line 8 is indeed $(3\varepsilon, \varepsilon)$-approximate in expectation, and thus is the main contribution of our work. 

\section{Theoretical Results}\label{sec: theoretical results}
Theorem \ref{thm:SMDGapTheorem} shows that $(\xvec^{\varepsilon}, \boldsymbol{\lambda}^{\varepsilon})$ computed by Algorithm \ref{algorithm: fair state visitation} at Line 5 satisfies $\mathbb{E}[\Gap(\xvec^{\varepsilon}, \lambdavec^{\varepsilon})] \leq \varepsilon$. 
\begin{restatable}{theorem}{FirstThm}
Given a state-visitation fairness vector $\rhovec \in [0,1/n)^{n} $, desired accuracy $\varepsilon >0$, and bounded estimators $\tilde{\gvec}_t^{\xvec}$ and $\tilde{\gvec}_t^{\lambdavec}$ as given in Lemmas \ref{lemma: bounded estimators for x} and \ref{lemma: bounded estimators for lambda and q}, Algorithm \ref{algorithm: fair state visitation} with step sizes $\eta^{\xvec} \leq \frac{\varepsilon}{8\ell(24M^2+1)}$ and $\eta^{\lambdavec} \leq \varepsilon/16$, at the end of $T \geq \max\left(\frac{8\log \ell}{\eta^{\xvec}\varepsilon}, \frac{32M^2n}{\eta^{\lambdavec} \varepsilon}\right)$ rounds, at Line 5 computes $(\boldsymbol{x}^{\varepsilon}, \boldsymbol{\lambda}^{\varepsilon})$ such that $\mathbb{E}[\Gap(\xvec^{\varepsilon},\lambdavec^{\varepsilon})] \leq \varepsilon$ . 
\label{thm:SMDGapTheorem}
\end{restatable}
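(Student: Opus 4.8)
The plan is to read the statement as an instance of the standard convergence guarantee for stochastic mirror descent on a convex--concave (here bilinear) saddle-point problem, and to instantiate that guarantee for our two regularizers, the domains $\Delta_{\rhovec}^{\ell}$ and $\mathbb{B}_{2M}^{n}$, and the bounded estimators of Lemmas~\ref{lemma: bounded estimators for x} and~\ref{lemma: bounded estimators for lambda and q}. First I would exploit that $f(\xvec,\lambdavec)=\rvec^T\xvec+\lambdavec^T(\Ihat-\mathbf{\Gamma})^T\xvec$ is linear in $\xvec$ for fixed $\lambdavec$ and linear in $\lambdavec$ for fixed $\xvec$. Writing the averaged iterate $(\xvec^{\varepsilon},\lambdavec^{\varepsilon})=\frac{1}{T}\sum_t(\xvec_t,\lambdavec_t)$ and letting $\xvec^{+}\in\arg\max_{\xvec'\in\Delta_{\rhovec}^{\ell}}f(\xvec',\lambdavec^{\varepsilon})$ and $\lambdavec^{-}\in\arg\min_{\lambdavec'\in\mathbb{B}_{2M}^{n}}f(\xvec^{\varepsilon},\lambdavec')$, bilinearity lets me move the averaging inside $f$ and rewrite the gap as a sum of the two players' regrets,
\[
\Gap(\xvec^{\varepsilon},\lambdavec^{\varepsilon})=\frac{1}{T}\sum_{t}\langle\gvec^{\xvec}_t,\xvec_t-\xvec^{+}\rangle+\frac{1}{T}\sum_{t}\langle\gvec^{\lambdavec}_t,\lambdavec_t-\lambdavec^{-}\rangle,
\]
where $\gvec^{\xvec}_t,\gvec^{\lambdavec}_t$ are the \emph{exact} gradients evaluated at $(\xvec_t,\lambdavec_t)$; here I use that $\gvec^{\xvec}=-\nabla_{\xvec}f$ already carries the sign for the maximizing player.

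Next I would bound each regret via the one-step mirror-descent inequality. For an update $\boldsymbol{w}_{t+1}=\arg\min_{\boldsymbol{w}}\langle\eta\tilde{\gvec}_t,\boldsymbol{w}\rangle+V_{\boldsymbol{w}_t}(\boldsymbol{w})$ with $\R$ that is $1$-strongly convex w.r.t.\ $\|\cdot\|$, the three-point identity for the Bregman divergence gives $\eta\langle\tilde{\gvec}_t,\boldsymbol{w}_t-\uvec\rangle\le V_{\boldsymbol{w}_t}(\uvec)-V_{\boldsymbol{w}_{t+1}}(\uvec)+\tfrac{\eta^2}{2}\|\tilde{\gvec}_t\|_*^2$, and telescoping yields $\sum_t\langle\tilde{\gvec}_t,\boldsymbol{w}_t-\uvec\rangle\le \tfrac{V_{\boldsymbol{w}_1}(\uvec)}{\eta}+\tfrac{\eta}{2}\sum_t\|\tilde{\gvec}_t\|_*^2$. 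The entropic regularizer on $\Delta_{\rhovec}^{\ell}$ is $1$-strongly convex w.r.t.\ $\|\cdot\|_1$ (Pinsker) and $\tfrac12\|\cdot\|_2^2$ on $\mathbb{B}_{2M}^{n}$ is $1$-strongly convex w.r.t.\ $\|\cdot\|_2$; on the entropic side I would invoke the local-norm refinement of this inequality, which is valid precisely because the maximum-entry bound $(4M+1)\ell$ of the estimator together with $\eta^{\xvec}\le\frac{\varepsilon}{8\ell(24M^2+1)}$ keeps $\eta^{\xvec}\|\tilde{\gvec}^{\xvec}_t\|_{\infty}$ small, so consecutive iterates have bounded ratio and $\|\cdot\|_{\Delta^{\ell}}$-second-moment bounds can be used.

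The delicate point, which I expect to be the main obstacle, is that the algorithm has only the \emph{estimates} $\tilde{\gvec}^{\xvec}_t,\tilde{\gvec}^{\lambdavec}_t$, while the comparators $\xvec^{+},\lambdavec^{-}$ depend on the whole sample path, so one cannot take expectations naively. Here I would use the ghost-iterate technique: split $\langle\gvec_t,\boldsymbol{w}_t-\uvec\rangle=\langle\tilde{\gvec}_t,\boldsymbol{w}_t-\uvec\rangle+\langle\gvec_t-\tilde{\gvec}_t,\boldsymbol{w}_t-\uvec\rangle$, control the first term by the telescoped regret bound applied to the estimated gradients, and absorb the bias term by comparing against auxiliary ``ghost'' mirror-descent sequences driven by $\gvec_t-\tilde{\gvec}_t$. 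Since the estimators are unbiased (Definition~\ref{definition: bounded estimators}(i)) and the ghost comparators are independent of the current sample, the cross terms vanish in expectation and only second-moment terms survive, giving $\mathbb{E}\,\Gap(\xvec^{\varepsilon},\lambdavec^{\varepsilon})\le \frac{\Theta_{\xvec}}{\eta^{\xvec}T}+\frac{\Theta_{\lambdavec}}{\eta^{\lambdavec}T}+\eta^{\xvec}v^{\xvec}+\eta^{\lambdavec}v^{\lambdavec}$, where $\Theta_{\xvec},\Theta_{\lambdavec}$ bound the Bregman ranges and $v^{\xvec},v^{\lambdavec}$ are the second-moment bounds. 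This bookkeeping is the only nontrivial part; it mirrors the matrix-games / AMDP analysis of \cite{carmon2019variance,jin2020efficiently}.

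Finally I would plug in the concrete quantities. The entropic range on the simplex gives $\Theta_{\xvec}\le\log\ell$, and the squared $\ell_2$-diameter of $\mathbb{B}_{2M}^{n}$ gives $\Theta_{\lambdavec}=O(M^2 n)$; by Lemmas~\ref{lemma: bounded estimators for x} and~\ref{lemma: bounded estimators for lambda and q}, $v^{\xvec}=(24M^2+1)\ell$ and $v^{\lambdavec}=2$. With $\eta^{\xvec}\le\frac{\varepsilon}{8\ell(24M^2+1)}$ and $T\ge\frac{8\log\ell}{\eta^{\xvec}\varepsilon}$ the terms $\frac{\Theta_{\xvec}}{\eta^{\xvec}T}$ and $\eta^{\xvec}v^{\xvec}$ are each at most $\tfrac{\varepsilon}{8}$; with $\eta^{\lambdavec}\le\varepsilon/16$ we get $\eta^{\lambdavec}v^{\lambdavec}\le\tfrac{\varepsilon}{8}$; and with $T\ge\frac{32M^2 n}{\eta^{\lambdavec}\varepsilon}$ the remaining term $\frac{\Theta_{\lambdavec}}{\eta^{\lambdavec}T}$ is at most $\tfrac{5\varepsilon}{8}$. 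Summing the four contributions gives $\mathbb{E}\,\Gap(\xvec^{\varepsilon},\lambdavec^{\varepsilon})\le\varepsilon$, as claimed.
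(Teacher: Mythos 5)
Your proposal is correct and follows essentially the same route as the paper: the same decomposition of $\Gap(\xvec^{\varepsilon},\lambdavec^{\varepsilon})$ into the two players' regrets against exact gradients (the paper's Lemma~\ref{lem:gapLemma}), the same mirror-descent regret bounds with the local-norm refinement for the entropic regularizer (Lemmas~\ref{lem:Lem12Carmon} and~\ref{lem:Lem13Carmon}, from \cite{carmon2019variance}), the same ghost-iterate trick to make the cross terms $\langle \gvec_t-\tilde{\gvec}_t,\cdot\rangle$ vanish in expectation despite path-dependent comparators, and the same final instantiation of constants. Your bookkeeping of the factors of $2$ from the ghost sequences differs slightly from the paper's (which arrives at a bound of roughly $\tfrac{3\varepsilon}{4}$ with the stated parameters), but the slack in your budget absorbs this and the conclusion $\mathbb{E}\,\Gap(\xvec^{\varepsilon},\lambdavec^{\varepsilon})\leq\varepsilon$ holds.
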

The expectation is over the stochasticity in both the state-action selection and the MDP transitions. The proof of Theorem \ref{thm:SMDGapTheorem} is in Appendix 
\ifdefined\aamas
B.
\else
\ref{sec: proof of theorem 2}.
\fi
We make two remarks. First, the step size $\eta^{\xvec}$ and time horizon $T$ depend on the mixing time of an MDP. The problem of estimating (or providing an upper bound) on the mixing time of an MDP is an active research topic \cite{paulin2015,zahavy2019average}. We rely on these techniques to estimate the mixing time used to compute step size and time horizon.

Second, the lower bound on $T$ is proportional to $M$, which becomes very large as $\rho_{s}$ approaches $1/n$. Hence, if $\rho_s$'s are close to $1/n$ then the search space of $\boldsymbol{\lambda}$ in Algorithm \ref{algorithm: fair state visitation} increases and it converges slowly. Note that, if $\rho_s \leq 1/2n$ then $d_{\boldsymbol{\rho}} \leq \max_s \frac{1}{\rho_s}$ then our algorithm yields similar convergence time (in order) as that of the constrained AMDP algorithm by \cite{jin2020efficiently} that only computes a feasible solution without reward maximization. This bound, without Assumption \ref{asm:fairAction}, can be obtained from a stationary distribution of (any) strictly feasible policy. 

Next, we define the notion of simultaneous approximation guarantee of an solution to the Fair-AMDP problem and show that the the policy $\pi^{\varepsilon}$ computed by Algorithm \ref{algorithm: fair state visitation} satisfies this notion in Theorem \ref{thm: optimality and fairness}.

\begin{definition}
\label{definiton: approximate policy}
Let $\xvec^*$ be the optimal solution to \textsc{Fair-LP (D)}. A policy $\pi$ is called $(\varepsilon_1,\varepsilon_2)$-approximate for the Fair-AMDP problem if $\rvec^T (\Pi\nuvecpi) \geq \rvec^T\xvec^* - \varepsilon_1$ and $\mathbf{C}\xvec \geq \b{1} -\varepsilon_2$.
\end{definition}

\begin{restatable}{theorem}{Optimalityfairnessthm}
Given a state-visitation fairness vector $\rhovec \in \left[0, 1/n\right)^n$, desired accuracy $\varepsilon >0$ and bounded estimators $\tilde{g}^{\xvec}_t$ and $\tilde{g}^{\lambdavec}_t$ as given in Lemmas \ref{lemma: bounded estimators for x} and \ref{lemma: bounded estimators for lambda and q}, Algorithm \ref{algorithm: fair state visitation} with step sizes and $T$ as in Theorem \ref{thm:SMDGapTheorem} computes a policy $\pi^{\varepsilon}$ which is $(3\varepsilon,\varepsilon)$-approximate in expectation with sample complexity $O\left(nm \varepsilon^{-2} t_{mix}^{2}(1 + d_{\rhovec})^2 \log(nm)\right)$. 
\label{thm: optimality and fairness}
\end{restatable}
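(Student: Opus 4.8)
The plan is to treat Theorem~\ref{thm:SMDGapTheorem} as a black box giving $\mathbb{E}\,\Gap(\xvec^{\varepsilon},\lambdavec^{\varepsilon})\le\varepsilon$, and to spend the rest of the argument converting a small gap into the two guarantees for the reconstructed policy $\pi^{\varepsilon}$. Write $\hat{\xvec}=\xvec^{\varepsilon}$, $\hat{\lambdavec}=\lambdavec^{\varepsilon}$, $V^{*}=\rvec^{T}\xvec^{*}$, and $\delta=(\Ihat-\mathbf{\Gamma})^{T}\hat{\xvec}$. The key observation is that $\hat{\xvec}\in\Delta_{\rhovec}^{\ell}$ already satisfies $\mathbf{C}\hat{\xvec}\ge\boldsymbol{1}$ exactly, so the only constraint $\hat\xvec$ violates is stationarity, and it does so by the amount $\delta$; the difficulty flagged in Section~\ref{sec:algorithm} is that $\pi^{\varepsilon}$ has its own \emph{true} stationary distribution, not $\Ihat\hat\xvec$.

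First I would extract two scalar consequences of the gap. Evaluating $\Gap$ at the primal point $\xvec^{*}$ kills the $\hat\lambdavec$ term since $(\Ihat-\mathbf{\Gamma})^{T}\xvec^{*}=\boldsymbol 0$, giving $\max_{\xvec'}f(\xvec',\hat\lambdavec)\ge V^{*}$. Pairing this with the dual point $\boldsymbol 0\in\mathbb{B}_{2M}^{n}$ gives $\rvec^{T}\hat\xvec=f(\hat\xvec,\boldsymbol 0)\ge\min_{\lambdavec'}f(\hat\xvec,\lambdavec')\ge V^{*}-\Gap$, hence $\mathbb{E}[\rvec^{T}\hat\xvec]\ge V^{*}-\varepsilon$. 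For the stationarity residual I would use the enlarged ball: since $\|\lambdavec^{*}\|_{\infty}\le M$ by Lemma~\ref{lemma: constraining the domain of lambda}, the point $\lambdavec'=\lambdavec^{*}-M\operatorname{sign}(\delta)$ lies in $\mathbb{B}_{2M}^{n}$, so $\Gap\ge f(\xvec^{*},\hat\lambdavec)-f(\hat\xvec,\lambdavec')=V^{*}-\big(\rvec^{T}\hat\xvec+(\lambdavec^{*})^{T}\delta\big)+M\|\delta\|_{1}$. Saddle-point optimality ($\xvec^{*}\in\arg\max_{\xvec\in\Delta_{\rhovec}^{\ell}}f(\xvec,\lambdavec^{*})$) gives $\rvec^{T}\hat\xvec+(\lambdavec^{*})^{T}\delta=f(\hat\xvec,\lambdavec^{*})\le f(\xvec^{*},\lambdavec^{*})=V^{*}$, and the two bounds collapse to $M\|\delta\|_{1}\le\Gap$, i.e.\ $\mathbb{E}\|\delta\|_{1}\le\varepsilon/M$. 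This is exactly the step that justifies inflating the domain from $M$ to $2M$.

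Next I would pass from $\hat\xvec$ to the genuine visitation of $\pi^{\varepsilon}$. By construction $\hat\xvec=\Pi^{\varepsilon}\boldsymbol\nu$ with $\boldsymbol\nu=\Ihat\hat\xvec$, so $\delta=(\I-\mathbf{\Gamma}^{\pi^{\varepsilon}})^{T}\boldsymbol\nu$; letting $\bar{\boldsymbol\nu}$ be the true stationary distribution of $\pi^{\varepsilon}$ (so $(\I-\mathbf{\Gamma}^{\pi^{\varepsilon}})^{T}\bar{\boldsymbol\nu}=\boldsymbol 0$ and the true visitation is $\bar\xvec=\Pi^{\varepsilon}\bar{\boldsymbol\nu}$), telescoping the power iteration yields $\boldsymbol\nu-\bar{\boldsymbol\nu}=\sum_{t\ge 0}\big((\mathbf{\Gamma}^{\pi^{\varepsilon}})^{T}\big)^{t}\delta$, which converges because $\boldsymbol 1^{T}\delta=0$. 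The mixing-time fundamental-matrix bound underlying Lemma~\ref{lemma: constraining the domain of lambda} then gives $\|\boldsymbol\nu-\bar{\boldsymbol\nu}\|_{1}\le 2t_{\text{mix}}\|\delta\|_{1}$, so with $M=2t_{\text{mix}}(1+d_{\boldsymbol{\rho}})$ we get $\mathbb{E}\|\boldsymbol\nu-\bar{\boldsymbol\nu}\|_{1}\le\varepsilon/(1+d_{\boldsymbol{\rho}})$. Optimality transfers immediately: the per-state reward vector $\rvec^{T}\Pi^{\varepsilon}$ has entries in $[0,1]$, so $|\rvec^{T}\bar\xvec-\rvec^{T}\hat\xvec|\le\|\boldsymbol\nu-\bar{\boldsymbol\nu}\|_{1}$, and collecting the gap error and the mixing-shift error gives $\mathbb{E}[\rvec^{T}\bar\xvec]\ge V^{*}-3\varepsilon$.

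The fairness transfer is where I expect the real difficulty. Using $\nu_{s}\ge\rho_{s}$, we have $(\mathbf{C}\bar\xvec)_{s}=\bar\nu_{s}/\rho_{s}\ge\big(\nu_{s}-|\nu_{s}-\bar\nu_{s}|\big)/\rho_{s}\ge 1-|\nu_{s}-\bar\nu_{s}|/\rho_{s}$, so the statement reduces to bounding the $\rho_{s}$-relative deviation $|\nu_{s}-\bar\nu_{s}|/\rho_{s}$ uniformly by $\varepsilon$. The crude estimate $|\nu_{s}-\bar\nu_{s}|\le\|\boldsymbol\nu-\bar{\boldsymbol\nu}\|_{1}\le\varepsilon/(1+d_{\boldsymbol{\rho}})$ only yields $|\nu_{s}-\bar\nu_{s}|/\rho_{s}\le\varepsilon/\big((1+d_{\boldsymbol{\rho}})\rho_{s}\big)$, which is not uniformly $\le\varepsilon$ when some $\rho_{s}$ is small; the $\tfrac{1}{\rho_{s}}$ weight coming from $\mathbf{C}$ must be absorbed by the $d_{\boldsymbol{\rho}}$ factor that Lemma~\ref{lemma: constraining the domain of q} injects into $M$ (via $\mu_{s}^{*}\le n\rho_{s}/(1-n\rho_{s})$). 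I would therefore carry the $\rho$-weighting through the feasibility extraction and the mixing estimate \emph{simultaneously}---choosing the coordinatewise dual test directions in $\mathbb{B}_{2M}^{n}$ so that the residual bound is already weighted against $\mathbf{C}$, and bounding the weighted deviation directly---rather than first collapsing to the unweighted $\|\boldsymbol\nu-\bar{\boldsymbol\nu}\|_{1}$. Making this cancellation between the $1/M$ in the residual bound and the $1/\rho_{s}$ in $\mathbf{C}$ precise, so that the fairness violation is genuinely $\le\varepsilon$, is the crux of the argument.
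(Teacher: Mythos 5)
Up to the fairness half, your proposal is the paper's own proof in lightly different clothing. Your extraction of $\mathbb{E}[\rvec^T\xvec^{\varepsilon}]\ge V^*-\varepsilon$ from the gap corresponds to Lemma~\ref{lemma: using epsilon gap for optimality}; your test point $\lambdavec'=\lambdavec^*-M\,\mathrm{sign}(\delta)\in\mathbb{B}_{2M}^n$, which together with the saddle-point inequality $f(\xvec^{\varepsilon},\lambdavec^*)\le f(\xvec^*,\lambdavec^*)$ yields $M\,\mathbb{E}\|\delta\|_1\le\varepsilon$, is exactly the content of Lemmas~\ref{lemma: theorem 3 lemma 1} and~\ref{lemma: bounding the l1 norm} (the paper likewise flags this as ``the place where enlarging the box helps''); and your fundamental-matrix transfer $\|\boldsymbol{u}-\nuvecpi\|_1\le 2t_{\text{mix}}\|\delta\|_1$ is the combination of Lemma~\ref{lemma: from JIN SIDFORD} with Lemma~\ref{lemma: bounding the l1 norm} that drives Lemma~\ref{lemma: optimality term 2}. (The paper routes the reward error through $(\boldsymbol{u}-\nuvecpi)^T(\I-\biggamma^{\pi})\lambdavec^*$ and $(\boldsymbol{u}-\nuvecpi)^T\rvec^{\pi}$ separately and collects $3\varepsilon$; your accounting even gives $2\varepsilon$, which is consistent with the claim.)

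The genuine gap is that you never prove the fairness guarantee, which is half of the theorem: you reduce it to bounding $|\nu_s-\bar\nu_s|/\rho_s$ uniformly by $\varepsilon$, call it ``the crux,'' and only sketch a reweighted-dual-direction strategy without executing it. The paper closes this step in three lines by precisely the ``crude'' route you dismissed: using $\mathbf{C}\Pi=\mathbf{D}_{\boldsymbol{\rho}}$ and inserting $(\I-(\biggamma^{\pi})^T+\nuvecpi\boldsymbol{1}^T)^{-1}(\I-(\biggamma^{\pi})^T+\nuvecpi\boldsymbol{1}^T)$, it bounds $\mathbf{C}\xvec\ge\boldsymbol{1}-\bigl(\max_s\tfrac{1}{\rho_s}\bigr)\,2t_{\text{mix}}\,\tfrac{\varepsilon}{M}\cdot\boldsymbol{1}$ and then asserts $M\ge\bigl(\max_s\tfrac{1}{\rho_s}\bigr)\,2t_{\text{mix}}$, so the $1/\rho_s$ weight is simply paid against $M$ rather than cancelled by any weighted residual analysis. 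That said, your arithmetic worry is well founded and exposes a soft spot in the paper's constant: with $M=2t_{\text{mix}}(1+d_{\boldsymbol{\rho}})$ and $d_{\boldsymbol{\rho}}=\max_s\frac{n}{1-\rho_s n}$, the needed inequality $1+d_{\boldsymbol{\rho}}\ge\max_s\frac{1}{\rho_s}$ holds when the $\rho_s$ are comparable (if every $\rho_s\ge\frac{1}{2n}$ then $d_{\boldsymbol{\rho}}\ge 2n\ge\max_s\frac{1}{\rho_s}$) but fails for skewed $\rhovec$ (e.g.\ $n=2$, $\rhovec=(0.01,0.4)$ gives $1+d_{\boldsymbol{\rho}}=11<100$). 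The repair, however, is not the cancellation against Lemma~\ref{lemma: constraining the domain of q} that you hoped for --- in $M$, the factor $d_{\boldsymbol{\rho}}$ exists to bound $\|\mathbf{D}_{\boldsymbol{\rho}}\muvec^*\|_{\infty}$ inside Lemma~\ref{lemma: constraining the domain of lambda}, where $1/\rho_s$ cancels against $\mu_s^*\le\frac{n\rho_s}{1-n\rho_s}$, not to dominate $\max_s 1/\rho_s$ --- but simply to enlarge the ball radius, e.g.\ to $2t_{\text{mix}}\bigl(1+d_{\boldsymbol{\rho}}+\max_s\tfrac{1}{\rho_s}\bigr)$, at the cost of constants in $T$. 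Your proposed coordinatewise-weighted test directions are this enlargement in disguise: any weight large enough to beat $1/\rho_s$ must fit inside $\mathbb{B}_{2M}^n$, which is the same as increasing $M$.
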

The proof of Theorem \ref{thm: optimality and fairness} is given in Appendix 
\ifdefined\aamas
C.
\else
\ref{secappendix: proof of fairness and optimality}.
\fi
As stated earlier, Algorithm \ref{algorithm: fair state visitation} is oblivious to the presence of fair-action. Assumption \ref{asm:fairAction} is not necessary if strict feasibility (even for $\boldsymbol{\rho} \in [0,1]^{n}$ such that $\sum_s \rho_s <1$) is known \textit{a priori}. The convergence time $T$ in this case would depend on the reward and the stationary distribution induced by the strictly feasible policy. We note an additional multiplicative factor of $(1 + d_{\rhovec})^2$ in the sample complexity of our proposed algorithm when compared with the best known sample complexity result for the unconstrained problem (see Theorem 1 by \citet{jin2020efficiently}). In the proposed framework, this factor is required to ensure that the bounding box contains the optimal value of the primal variable $\lambdavec$ and is the price we pay to compute a policy with simultaneous approximation guarantee on both fairness and reward. However, the sample complexity is not tight when $\rhovec = \boldsymbol{0}$. In this case $(1 + d_{\rhovec})^2 = n^2$ and the sample complexity of the proposed algorithm becomes $O(n^3 m \varepsilon^{-2} t_{mix}^2 \log(nm))$ which is suboptimal. Finding a better sample complexity for fair-AMDP problem is an interesting future work.

\section{Simulations}\label{sec:experiments}
\begin{figure*}[htb]
    \tikzset{every picture/.style={line width=0.75pt}} %set default line width to 0.75pt        

\begin{tikzpicture}[scale = 0.65, x=0.75pt,y=0.75pt,yscale=-1,xscale=1]
%uncomment if require: \path (0,494); %set diagram left start at 0, and has height of 494

%Shape: Circle [id:dp3074269118963948] 
\draw   (161,98) .. controls (161,88.06) and (169.06,80) .. (179,80) .. controls (188.94,80) and (197,88.06) .. (197,98) .. controls (197,107.94) and (188.94,116) .. (179,116) .. controls (169.06,116) and (161,107.94) .. (161,98) -- cycle ;
%Shape: Circle [id:dp1884634260670608] 
\draw   (241,98) .. controls (241,88.06) and (249.06,80) .. (259,80) .. controls (268.94,80) and (277,88.06) .. (277,98) .. controls (277,107.94) and (268.94,116) .. (259,116) .. controls (249.06,116) and (241,107.94) .. (241,98) -- cycle ;
%Shape: Circle [id:dp6755111914602858] 
\draw   (320,98) .. controls (320,88.06) and (328.06,80) .. (338,80) .. controls (347.94,80) and (356,88.06) .. (356,98) .. controls (356,107.94) and (347.94,116) .. (338,116) .. controls (328.06,116) and (320,107.94) .. (320,98) -- cycle ;
%Curve Lines [id:da4220887908228319] 
\draw [color={rgb, 255:red, 74; green, 144; blue, 226 }  ,draw opacity=1 ][line width=1.5]    (179,80) .. controls (200.12,61.76) and (230.46,60.11) .. (255.85,77.7) ;
\draw [shift={(259,80)}, rotate = 217.57] [fill={rgb, 255:red, 74; green, 144; blue, 226 }  ,fill opacity=1 ][line width=0.08]  [draw opacity=0] (13.4,-6.43) -- (0,0) -- (13.4,6.44) -- (8.9,0) -- cycle    ;
%Curve Lines [id:da8373757683020662] 
\draw [color={rgb, 255:red, 74; green, 144; blue, 226 }  ,draw opacity=1 ][line width=1.5]    (259,80) .. controls (280.12,61.76) and (309.53,60.11) .. (334.86,77.7) ;
\draw [shift={(338,80)}, rotate = 217.57] [fill={rgb, 255:red, 74; green, 144; blue, 226 }  ,fill opacity=1 ][line width=0.08]  [draw opacity=0] (13.4,-6.43) -- (0,0) -- (13.4,6.44) -- (8.9,0) -- cycle    ;
%Curve Lines [id:da7567270231717769] 
\draw [color={rgb, 255:red, 245; green, 166; blue, 35 }  ,draw opacity=1 ][line width=1.5]    (262.53,118.19) .. controls (287.69,132.88) and (316.04,131.28) .. (338,116) ;
\draw [shift={(259,116)}, rotate = 33.18] [fill={rgb, 255:red, 245; green, 166; blue, 35 }  ,fill opacity=1 ][line width=0.08]  [draw opacity=0] (13.4,-6.43) -- (0,0) -- (13.4,6.44) -- (8.9,0) -- cycle    ;
%Curve Lines [id:da6070754120416959] 
\draw [color={rgb, 255:red, 245; green, 166; blue, 35 }  ,draw opacity=1 ][line width=1.5]    (182.54,118.19) .. controls (207.77,132.88) and (237.04,131.28) .. (259,116) ;
\draw [shift={(179,116)}, rotate = 33.18] [fill={rgb, 255:red, 245; green, 166; blue, 35 }  ,fill opacity=1 ][line width=0.08]  [draw opacity=0] (13.4,-6.43) -- (0,0) -- (13.4,6.44) -- (8.9,0) -- cycle    ;
%Curve Lines [id:da9763931956565246] 
\draw [color={rgb, 255:red, 74; green, 144; blue, 226 }  ,draw opacity=1 ][line width=1.5]    (356,98) .. controls (429.26,177.2) and (88.92,179.95) .. (158.76,100.43) ;
\draw [shift={(161,98)}, rotate = 493.93] [fill={rgb, 255:red, 74; green, 144; blue, 226 }  ,fill opacity=1 ][line width=0.08]  [draw opacity=0] (13.4,-6.43) -- (0,0) -- (13.4,6.44) -- (8.9,0) -- cycle    ;
%Curve Lines [id:da20038028447354228] 
\draw [color={rgb, 255:red, 245; green, 166; blue, 35 }  ,draw opacity=1 ][line width=1.5]    (161,98) .. controls (87.91,18.65) and (428.25,16.21) .. (358.25,95.57) ;
\draw [shift={(356,98)}, rotate = 314.05] [fill={rgb, 255:red, 245; green, 166; blue, 35 }  ,fill opacity=1 ][line width=0.08]  [draw opacity=0] (13.4,-6.43) -- (0,0) -- (13.4,6.44) -- (8.9,0) -- cycle    ;

% Text Node
\draw (170,91) node [scale=1][anchor=north west][inner sep=0.75pt]    {$s_{0}$};
% Text Node
\draw (250,91) node [scale=1][anchor=north west][inner sep=0.75pt]    {$s_{1}$};
% Text Node
\draw (329,91) node [scale=1][anchor=north west][inner sep=0.75pt]    {$s_{2}$};
\end{tikzpicture} 
    \includegraphics[width=0.32\textwidth]{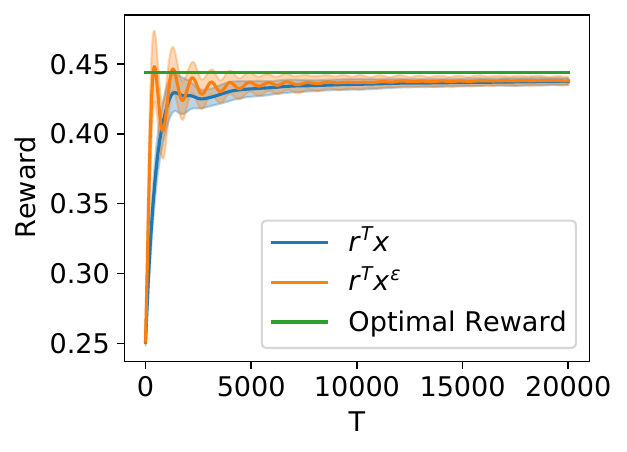}
    \includegraphics[width=0.32\textwidth]{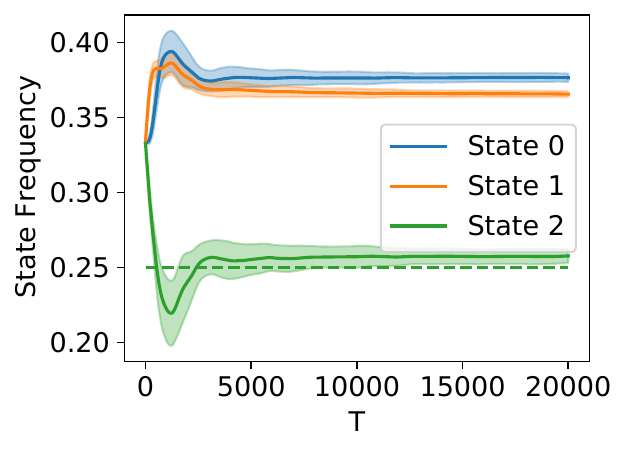}
    \caption{Left: the MDP specification, with three states and two actions. In each state, action $a_0$ takes the blue transition with probability  0.9, and takes the yellow transition with probability  0.1; action $a_1$ has the converse effect. Middle and right: reward and state-visitation frequency of a policy learned after $T$ steps, averaged over 100 runs. }
    \label{fig:1-3}
\end{figure*}
\begin{figure*}[htb]
    \centering
    \includegraphics[scale=0.5]{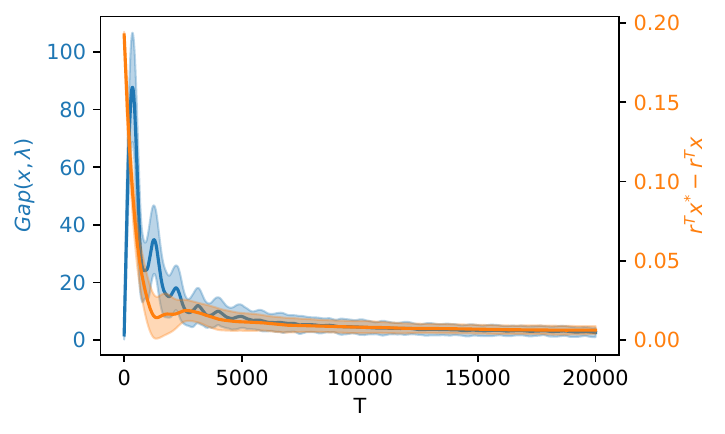}
    \includegraphics[width=0.32\textwidth]{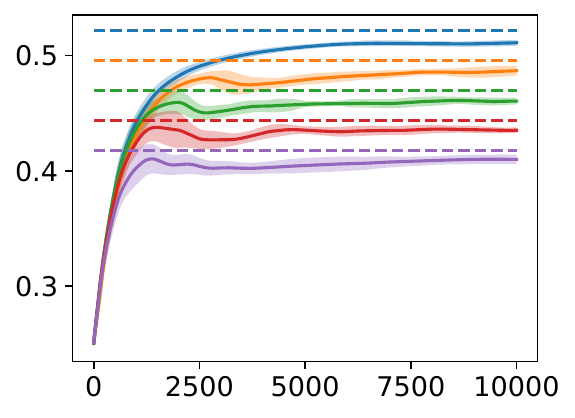}
    \includegraphics[width=0.32\textwidth]{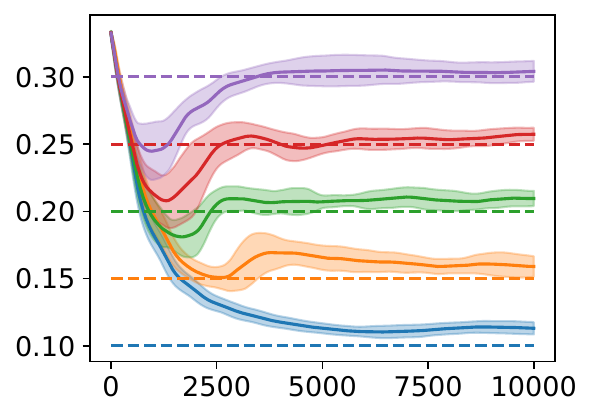}
    \caption{Left: gap function value (blue) and reward difference (orange) learned after $T$ steps, averaged over 100 runs. Middle and right: reward and visitation frequency of $s_2$ of policies learned after $T$ steps, averaged over 10 runs, for each $\rho_2\in \{0.1, 0.15, 0.2, 0.25, 0.3\}$. The colors of blue, orange, green, red, and purple indicate increasing values of $\rho_2$. Dashed lines indicate optimal rewards and visitation frequency constraints. }
    \label{fig:rho}
\end{figure*}

In this section, we evaluate our proposed algorithm and validate its  properties with experiments on simulated  data. Figure \ref{fig:1-3} presents the MDP transition dynamics specification with three states, $s_0, s_1, s_2$, and two actions, $a_0, a_1$. The reward of taking $a_0$ on $s_0$ is 1, and 0.1 otherwise. It is easy to see that the optimal unconstrained policy is a deterministic policy with $\pi(s_0) = a_0$, $\pi(s_1) = a_1$, and $\pi(s_2) = a_0$. This policy yields an average-reward of 0.526, with the three states $s_0, s_1$, and $s_2$ being visited 47.4\%, 43.5\%, and 9.1\% of times respectively. Also, since the MDP is feasible for $\rhovec \leq 1 / 3$, we do not use the fair-action. Recall, $\xvec^{\varepsilon}$ is the average of $\xvec_t$, $t\in [T]$. Further, we use $\xvec$ to denote the state-action frequency vector of policy $\pi^{\varepsilon}$ derived from $\xvec^{\varepsilon}$ (Step 8, Algorithm \ref{algorithm: fair state visitation}). Note that the reward of $\pi^{\varepsilon}$ is $\rvec^T\xvec$.

\textbf{Convergence of SMD: }We first empirically demonstrate the convergence of SMD. We choose a fairness constraint $\rho=[0.1, \allowbreak 0.1, \allowbreak 0.25]$, which is violated by the unconstrained optimal policy. 
We implemented Algorithm \ref{algorithm: fair state visitation} to solve this problem with $M=100$ and  $\eta^{\xvec}=\eta^{\lambdavec}=0.01$. Figure \ref{fig:1-3} shows the average-reward and state-visitation frequencies for policies obtained after different number of gradient descent steps, up to a maximum of 20,000. The results are averaged over 100 runs, with the shaded regions representing the standard deviations. We also see that the reward $\rvec^T \xvec$, approaches the fair optimal value and the visitation frequency of $s_2$  approaches the required value (25\%). We note that $\rvec^T\xvec$ can cross the optimal value as $\xvec$ is only approximately fair. In Figure \ref{fig:1-3}, we also plot $\rvec^T \xvec^{\varepsilon}$ along with $\rvec^T \xvec$ for different values of $T$.

\textbf{Gap Function: }The theorems prove that the difference between $\rvec^T \xvec$ and the optimal reward decreases with the value of the the gap function \texttt{Gap}$(\xvec^\varepsilon, \lambdavec^\varepsilon)$. Figure \ref{fig:rho} (left) plots the reward difference as the gap decreases.

\textbf{Varying the Constraint $\rhovec$: }In this section, we vary the value of $\rho_2\in\{0.1, 0.15, 0.2, 0.25, 0.3\}$. For each $\rhovec$ parameter, we run SMD 10 times, with 10,000 steps each. Figure \ref{fig:rho} plots the reward and state-visitation frequency on $s_2$ for different $\rhovec$. The colors blue, orange, green, red, and purple indicate increasing values of $\rho_2$. 
As expected, all fairness constraints are satisfied, but as $\rho_2$ increases, the optimal rewards decreases. This decrease captures the cost of fairness as the fairness constraints become more stringent.

\section{Conclusion and Future Work}
\label{sec:discussion}
In this paper, we studied the Fair-AMDP problem and proposed an SMD-based algorithm to compute a policy with simultaneous approximation guarantee on average-reward and state-visitation fairness. We note that our results in this work can be extended to ensure state-action visitation fairness: for $\rhovec \in [0,1]^{\ell}$ a fair policy $\pi$ with stationary distribution $\nu^{\pi}$ satisfies $\Pi\cdot \nu^{\pi} \geq \rhovec$, provided there is a strictly feasible solution.
 
In Section \ref{subsection: Min-Max Problem}, we determined the domain for the primal and dual variables for approximately solving the bilinear saddle point problem using the SMD framework. In particular, we restricted our domain of $\xvec$ to $\Delta_{\rhovec}^{\ell}$ and we removed $\muvec$ variables from the optimization problem. This in turn demands that at step 4 in Algorithm \ref{algorithm: fair state visitation}, $\xvec_t$ is computed by projecting to $\Delta_{\rhovec}^{\ell}$. This approach can also be adapted for computing a feasible solution for an AMDP with arbitrary linear constraints and no objective function \citep{jin2020efficiently}. 

For future work, the time complexity bound of our algorithm becomes large as $\rho_s \rightarrow 1/n$. Thus, an immediate direction is to improve the dependence of time complexity on $\rhovec$, either via a new algorithm or better analysis.
Also observe that the introduction of fair-action ensures strict feasibility when $\rho_s < 1/n$ for all $s$. It is interesting to see if there is an approach that ensures feasibility for a broader class of $\rhovec$.

We note that a convex program formulation for discounted-reward MDPs with state-visitation fairness constraints is not immediate, and we need novel techniques to solve the state-visitation fairness problem for discounted-reward MDPs. The introduction of fairness constraints may lead to a loss in the maximum average-reward that can be achieved. The difference in the average-reward of an unconstrained optimal policy and that of an optimal policy that satisfies fairness captures the \emph{price of fairness} in the system. Further study of price of fairness in the Fair-AMDP setting is an interesting future direction.

\begin{acks}
We thank the anonymous reviewers for their reviews. Vineet Nair is thankful to be supported by the European Union's Horizon 2020 research and innovation program under grant agreement No 682203 -ERC-[Inf-Speed-Tradeoff]. Vishakha Patil is grateful for the support of a Google PhD Fellowship.
\end{acks}

\balance
\bibliographystyle{ACM-Reference-Format} 
\bibliography{references}

%%% -*-BibTeX-*-
%%% Do NOT edit. File created by BibTeX with style
%%% ACM-Reference-Format-Journals [18-Jan-2012].

\begin{thebibliography}{39}

%%% ====================================================================
%%% NOTE TO THE USER: you can override these defaults by providing
%%% customized versions of any of these macros before the \bibliography
%%% command.  Each of them MUST provide its own final punctuation,
%%% except for \shownote{}, \showDOI{}, and \showURL{}.  The latter two
%%% do not use final punctuation, in order to avoid confusing it with
%%% the Web address.
%%%
%%% To suppress output of a particular field, define its macro to expand
%%% to an empty string, or better, \unskip, like this:
%%%
%%% \newcommand{\showDOI}[1]{\unskip}   % LaTeX syntax
%%%
%%% \def \showDOI #1{\unskip}           % plain TeX syntax
%%%
%%% ====================================================================

\ifx \showCODEN    \undefined \def \showCODEN     #1{\unskip}     \fi
\ifx \showDOI      \undefined \def \showDOI       #1{#1}\fi
\ifx \showISBNx    \undefined \def \showISBNx     #1{\unskip}     \fi
\ifx \showISBNxiii \undefined \def \showISBNxiii  #1{\unskip}     \fi
\ifx \showISSN     \undefined \def \showISSN      #1{\unskip}     \fi
\ifx \showLCCN     \undefined \def \showLCCN      #1{\unskip}     \fi
\ifx \shownote     \undefined \def \shownote      #1{#1}          \fi
\ifx \showarticletitle \undefined \def \showarticletitle #1{#1}   \fi
\ifx \showURL      \undefined \def \showURL       {\relax}        \fi
% The following commands are used for tagged output and should be
% invisible to TeX
\providecommand\bibfield[2]{#2}
\providecommand\bibinfo[2]{#2}
\providecommand\natexlab[1]{#1}
\providecommand\showeprint[2][]{arXiv:#2}

\bibitem[\protect\citeauthoryear{Achiam, Held, Tamar, and Abbeel}{Achiam
  et~al\mbox{.}}{2017}]%
        {achiam2017constrained}
\bibfield{author}{\bibinfo{person}{Joshua Achiam}, \bibinfo{person}{David
  Held}, \bibinfo{person}{Aviv Tamar}, {and} \bibinfo{person}{Pieter Abbeel}.}
  \bibinfo{year}{2017}\natexlab{}.
\newblock \showarticletitle{Constrained Policy Optimization}. In
  \bibinfo{booktitle}{\emph{International Conference on Machine Learning
  (ICML)}}. PMLR, \bibinfo{pages}{22--31}.
\newblock


\bibitem[\protect\citeauthoryear{Ali, Sapiezynski, Bogen, Korolova, Mislove,
  and Rieke}{Ali et~al\mbox{.}}{2019}]%
        {ali2019discrimination}
\bibfield{author}{\bibinfo{person}{Muhammad Ali}, \bibinfo{person}{Piotr
  Sapiezynski}, \bibinfo{person}{Miranda Bogen}, \bibinfo{person}{Aleksandra
  Korolova}, \bibinfo{person}{Alan Mislove}, {and} \bibinfo{person}{Aaron
  Rieke}.} \bibinfo{year}{2019}\natexlab{}.
\newblock \showarticletitle{Discrimination Through Optimization: How Facebook's
  Ad Delivery Can Lead to Biased Outcomes}.
\newblock \bibinfo{journal}{\emph{Proceedings of the ACM on Human-Computer
  Interaction}} \bibinfo{volume}{3}, \bibinfo{number}{CSCW}
  (\bibinfo{year}{2019}), \bibinfo{pages}{1--30}.
\newblock


\bibitem[\protect\citeauthoryear{Altman}{Altman}{1999}]%
        {altman1999constrained}
\bibfield{author}{\bibinfo{person}{Eitan Altman}.}
  \bibinfo{year}{1999}\natexlab{}.
\newblock \bibinfo{booktitle}{\emph{Constrained Markov Decision Processes}}.
  Vol.~\bibinfo{volume}{7}.
\newblock \bibinfo{publisher}{CRC Press}.
\newblock


\bibitem[\protect\citeauthoryear{Barocas, Hardt, and Narayanan}{Barocas
  et~al\mbox{.}}{2019}]%
        {Barocas2018FairnessAM}
\bibfield{author}{\bibinfo{person}{Solon Barocas}, \bibinfo{person}{Moritz
  Hardt}, {and} \bibinfo{person}{Arvind Narayanan}.}
  \bibinfo{year}{2019}\natexlab{}.
\newblock \bibinfo{booktitle}{\emph{Fairness and Machine Learning}}.
\newblock \bibinfo{publisher}{fairmlbook.org}.
\newblock
\newblock
\shownote{\url{http://www.fairmlbook.org}.}


\bibitem[\protect\citeauthoryear{Berkovec, Canner, Gabriel, and
  Hannan}{Berkovec et~al\mbox{.}}{1996}]%
        {berkovec1996mortgage}
\bibfield{author}{\bibinfo{person}{James~A Berkovec}, \bibinfo{person}{Glenn~B
  Canner}, \bibinfo{person}{Stuart~A Gabriel}, {and} \bibinfo{person}{Timothy~H
  Hannan}.} \bibinfo{year}{1996}\natexlab{}.
\newblock \showarticletitle{Mortgage Discrimination and FHA Loan Performance}.
\newblock \bibinfo{journal}{\emph{Cityscape}} (\bibinfo{year}{1996}),
  \bibinfo{pages}{9--24}.
\newblock


\bibitem[\protect\citeauthoryear{Brafman and Tennenholtz}{Brafman and
  Tennenholtz}{2002}]%
        {brafman2002r}
\bibfield{author}{\bibinfo{person}{Ronen~I Brafman} {and}
  \bibinfo{person}{Moshe Tennenholtz}.} \bibinfo{year}{2002}\natexlab{}.
\newblock \showarticletitle{R-max -- A General Polynomial Time Algorithm for
  Near-Optimal Reinforcement Learning}.
\newblock \bibinfo{journal}{\emph{Journal of Machine Learning Research (JMLR)}}
  \bibinfo{volume}{3}, \bibinfo{number}{Oct} (\bibinfo{year}{2002}),
  \bibinfo{pages}{213--231}.
\newblock


\bibitem[\protect\citeauthoryear{Calders, Kamiran, and Pechenizkiy}{Calders
  et~al\mbox{.}}{2009}]%
        {calders2009}
\bibfield{author}{\bibinfo{person}{Toon Calders}, \bibinfo{person}{Faisal
  Kamiran}, {and} \bibinfo{person}{Mykola Pechenizkiy}.}
  \bibinfo{year}{2009}\natexlab{}.
\newblock \showarticletitle{Building Classifiers with Independency
  Constraints}. In \bibinfo{booktitle}{\emph{IEEE International Conference on
  Data Mining Workshops (ICDMW)}}. IEEE, \bibinfo{pages}{13--18}.
\newblock


\bibitem[\protect\citeauthoryear{Carmon, Jin, Sidford, and Tian}{Carmon
  et~al\mbox{.}}{2019}]%
        {carmon2019variance}
\bibfield{author}{\bibinfo{person}{Yair Carmon}, \bibinfo{person}{Yujia Jin},
  \bibinfo{person}{Aaron Sidford}, {and} \bibinfo{person}{Kevin Tian}.}
  \bibinfo{year}{2019}\natexlab{}.
\newblock \showarticletitle{Variance reduction for matrix games}. In
  \bibinfo{booktitle}{\emph{Advances in Neural Information Processing Systems
  (NeurIPS)}}. \bibinfo{pages}{11381--11392}.
\newblock


\bibitem[\protect\citeauthoryear{Celis, Kapoor, Salehi, and Vishnoi}{Celis
  et~al\mbox{.}}{2019}]%
        {celis2019controlling}
\bibfield{author}{\bibinfo{person}{L~Elisa Celis}, \bibinfo{person}{Sayash
  Kapoor}, \bibinfo{person}{Farnood Salehi}, {and} \bibinfo{person}{Nisheeth
  Vishnoi}.} \bibinfo{year}{2019}\natexlab{}.
\newblock \showarticletitle{Controlling Polarization in Personalization: An
  Algorithmic Framework}. In \bibinfo{booktitle}{\emph{Conference on Fairness,
  Accountability, and Transparency (FAT*)}}. \bibinfo{pages}{160--169}.
\newblock


\bibitem[\protect\citeauthoryear{Creager, Madras, Pitassi, and Zemel}{Creager
  et~al\mbox{.}}{2020}]%
        {CREAGER2020}
\bibfield{author}{\bibinfo{person}{Elliot Creager}, \bibinfo{person}{David
  Madras}, \bibinfo{person}{Toniann Pitassi}, {and} \bibinfo{person}{Richard
  Zemel}.} \bibinfo{year}{2020}\natexlab{}.
\newblock \showarticletitle{Causal Modeling for Fairness in Dynamical Systems}.
  In \bibinfo{booktitle}{\emph{International Conference on Machine Learning
  (ICML)}}. PMLR, \bibinfo{pages}{2185--2195}.
\newblock


\bibitem[\protect\citeauthoryear{D'Amour, Srinivasan, Atwood, Baljekar,
  Sculley, and Halpern}{D'Amour et~al\mbox{.}}{2020}]%
        {damour2020}
\bibfield{author}{\bibinfo{person}{Alexander D'Amour}, \bibinfo{person}{Hansa
  Srinivasan}, \bibinfo{person}{James Atwood}, \bibinfo{person}{Pallavi
  Baljekar}, \bibinfo{person}{D Sculley}, {and} \bibinfo{person}{Yoni
  Halpern}.} \bibinfo{year}{2020}\natexlab{}.
\newblock \showarticletitle{Fairness is Not Static: Deeper Understanding of
  Long Term Fairness via Simulation Studies}. In
  \bibinfo{booktitle}{\emph{Conference on Fairness, Accountability, and
  Transparency (FAT*)}}. \bibinfo{pages}{525--534}.
\newblock


\bibitem[\protect\citeauthoryear{Doroudi, Thomas, and Brunskill}{Doroudi
  et~al\mbox{.}}{2018}]%
        {DOROUDI2018}
\bibfield{author}{\bibinfo{person}{Shayan Doroudi}, \bibinfo{person}{Philip~S
  Thomas}, {and} \bibinfo{person}{Emma Brunskill}.}
  \bibinfo{year}{2018}\natexlab{}.
\newblock \showarticletitle{Importance Sampling for Fair Policy Selection}. In
  \bibinfo{booktitle}{\emph{International Joint Conference on Artificial
  Intelligence (IJCAI)}}. \bibinfo{pages}{5239--5243}.
\newblock


\bibitem[\protect\citeauthoryear{Dressel and Farid}{Dressel and Farid}{2018}]%
        {dressel2018accuracy}
\bibfield{author}{\bibinfo{person}{Julia Dressel} {and} \bibinfo{person}{Hany
  Farid}.} \bibinfo{year}{2018}\natexlab{}.
\newblock \showarticletitle{The Accuracy, Fairness, and Limits of Predicting
  Recidivism}.
\newblock \bibinfo{journal}{\emph{Science Advances}} \bibinfo{volume}{4},
  \bibinfo{number}{1} (\bibinfo{year}{2018}), \bibinfo{pages}{eaao5580}.
\newblock


\bibitem[\protect\citeauthoryear{Dwork, Hardt, Pitassi, Reingold, and
  Zemel}{Dwork et~al\mbox{.}}{2012}]%
        {dwork2012fairness}
\bibfield{author}{\bibinfo{person}{Cynthia Dwork}, \bibinfo{person}{Moritz
  Hardt}, \bibinfo{person}{Toniann Pitassi}, \bibinfo{person}{Omer Reingold},
  {and} \bibinfo{person}{Richard Zemel}.} \bibinfo{year}{2012}\natexlab{}.
\newblock \showarticletitle{Fairness Through Awareness}. In
  \bibinfo{booktitle}{\emph{Innovations in Theoretical Computer Science
  Conference (ITCS)}}. \bibinfo{pages}{214--226}.
\newblock


\bibitem[\protect\citeauthoryear{Geibel and Wysotzki}{Geibel and
  Wysotzki}{2005}]%
        {geibel2005risk}
\bibfield{author}{\bibinfo{person}{Peter Geibel} {and} \bibinfo{person}{Fritz
  Wysotzki}.} \bibinfo{year}{2005}\natexlab{}.
\newblock \showarticletitle{Risk-Sensitive Reinforcement Learning Applied to
  Control Under Constraints}.
\newblock \bibinfo{journal}{\emph{Journal of Artificial Intelligence Research
  (JAIR)}}  \bibinfo{volume}{24} (\bibinfo{year}{2005}),
  \bibinfo{pages}{81--108}.
\newblock


\bibitem[\protect\citeauthoryear{Hazan, Kakade, Singh, and Van~Soest}{Hazan
  et~al\mbox{.}}{2019}]%
        {hazan2019provably}
\bibfield{author}{\bibinfo{person}{Elad Hazan}, \bibinfo{person}{Sham Kakade},
  \bibinfo{person}{Karan Singh}, {and} \bibinfo{person}{Abby Van~Soest}.}
  \bibinfo{year}{2019}\natexlab{}.
\newblock \showarticletitle{Provably Efficient Maximum Entropy Exploration}. In
  \bibinfo{booktitle}{\emph{International Conference on Machine Learning
  (ICML)}}. PMLR, \bibinfo{pages}{2681--2691}.
\newblock


\bibitem[\protect\citeauthoryear{Jabbari, Joseph, Kearns, Morgenstern, and
  Roth}{Jabbari et~al\mbox{.}}{2017}]%
        {jabbari2017fairness}
\bibfield{author}{\bibinfo{person}{Shahin Jabbari}, \bibinfo{person}{Matthew
  Joseph}, \bibinfo{person}{Michael Kearns}, \bibinfo{person}{Jamie
  Morgenstern}, {and} \bibinfo{person}{Aaron Roth}.}
  \bibinfo{year}{2017}\natexlab{}.
\newblock \showarticletitle{Fairness in Reinforcement Learning}. In
  \bibinfo{booktitle}{\emph{International Conference on Machine Learning
  (ICML)}}. PMLR, \bibinfo{pages}{1617--1626}.
\newblock


\bibitem[\protect\citeauthoryear{Jin and Sidford}{Jin and Sidford}{2020}]%
        {jin2020efficiently}
\bibfield{author}{\bibinfo{person}{Yujia Jin} {and} \bibinfo{person}{Aaron
  Sidford}.} \bibinfo{year}{2020}\natexlab{}.
\newblock \showarticletitle{Efficiently Solving {MDP}s with Stochastic Mirror
  Descent}. In \bibinfo{booktitle}{\emph{International Conference on Machine
  Learning (ICML)}}. PMLR, \bibinfo{pages}{4890--4900}.
\newblock


\bibitem[\protect\citeauthoryear{Joseph, Kearns, Morgenstern, and Roth}{Joseph
  et~al\mbox{.}}{2016}]%
        {joseph2016fairness}
\bibfield{author}{\bibinfo{person}{Matthew Joseph}, \bibinfo{person}{Michael
  Kearns}, \bibinfo{person}{Jamie~H Morgenstern}, {and} \bibinfo{person}{Aaron
  Roth}.} \bibinfo{year}{2016}\natexlab{}.
\newblock \showarticletitle{Fairness in Learning: Classic and Contextual
  Bandits}.
\newblock \bibinfo{journal}{\emph{Advances in Neural Information Processing
  Systems (NIPS)}}  \bibinfo{volume}{29} (\bibinfo{year}{2016}),
  \bibinfo{pages}{325--333}.
\newblock


\bibitem[\protect\citeauthoryear{Kearns and Singh}{Kearns and Singh}{2002}]%
        {KearnsS98}
\bibfield{author}{\bibinfo{person}{Michael Kearns} {and}
  \bibinfo{person}{Satinder Singh}.} \bibinfo{year}{2002}\natexlab{}.
\newblock \showarticletitle{Near-Optimal Reinforcement Learning in Polynomial
  Time}.
\newblock \bibinfo{journal}{\emph{Machine learning}} \bibinfo{volume}{49},
  \bibinfo{number}{2} (\bibinfo{year}{2002}), \bibinfo{pages}{209--232}.
\newblock


\bibitem[\protect\citeauthoryear{Kearns, Mansour, and Ng}{Kearns
  et~al\mbox{.}}{[n.d.]}]%
        {kearns1999approximate}
\bibfield{author}{\bibinfo{person}{Michael~J Kearns}, \bibinfo{person}{Yishay
  Mansour}, {and} \bibinfo{person}{Andrew~Y Ng}.}
  \bibinfo{year}{[n.d.]}\natexlab{}.
\newblock \showarticletitle{Approximate Planning in Large POMDPs via Reusable
  Trajectories}. In \bibinfo{booktitle}{\emph{Advances in Neural Information
  Processing Systems (NIPS)}}.
\newblock


\bibitem[\protect\citeauthoryear{Kleinberg, Mullainathan, and
  Raghavan}{Kleinberg et~al\mbox{.}}{2017}]%
        {kleinberg2017inherent}
\bibfield{author}{\bibinfo{person}{Jon~M. Kleinberg}, \bibinfo{person}{Sendhil
  Mullainathan}, {and} \bibinfo{person}{Manish Raghavan}.}
  \bibinfo{year}{2017}\natexlab{}.
\newblock \showarticletitle{Inherent Trade-Offs in the Fair Determination of
  Risk Scores}. In \bibinfo{booktitle}{\emph{Innovations in Theoretical
  Computer Science Conference (ITCS)}}, Vol.~\bibinfo{volume}{67}.
  \bibinfo{pages}{43:1--43:23}.
\newblock


\bibitem[\protect\citeauthoryear{Lee, Eysenbach, Parisotto, Xing, Levine, and
  Salakhutdinov}{Lee et~al\mbox{.}}{2019}]%
        {lee2019efficient}
\bibfield{author}{\bibinfo{person}{Lisa Lee}, \bibinfo{person}{Benjamin
  Eysenbach}, \bibinfo{person}{Emilio Parisotto}, \bibinfo{person}{Eric Xing},
  \bibinfo{person}{Sergey Levine}, {and} \bibinfo{person}{Ruslan
  Salakhutdinov}.} \bibinfo{year}{2019}\natexlab{}.
\newblock \showarticletitle{Efficient Exploration via State Marginal Matching}.
\newblock \bibinfo{journal}{\emph{arXiv preprint arXiv:1906.05274}}
  (\bibinfo{year}{2019}).
\newblock


\bibitem[\protect\citeauthoryear{Li, Liu, and Ji}{Li et~al\mbox{.}}{2019}]%
        {li2019combinatorial}
\bibfield{author}{\bibinfo{person}{Fengjiao Li}, \bibinfo{person}{Jia Liu},
  {and} \bibinfo{person}{Bo Ji}.} \bibinfo{year}{2019}\natexlab{}.
\newblock \showarticletitle{Combinatorial sleeping bandits with fairness
  constraints}.
\newblock \bibinfo{journal}{\emph{IEEE Transactions on Network Science and
  Engineering (T-NSE)}} \bibinfo{volume}{7}, \bibinfo{number}{3}
  (\bibinfo{year}{2019}), \bibinfo{pages}{1799--1813}.
\newblock


\bibitem[\protect\citeauthoryear{Mahadevan}{Mahadevan}{1996}]%
        {Mahadevan96}
\bibfield{author}{\bibinfo{person}{Sridhar Mahadevan}.}
  \bibinfo{year}{1996}\natexlab{}.
\newblock \showarticletitle{Average Reward Reinforcement Learning: Foundations,
  Algorithms, and Empirical Results}.
\newblock \bibinfo{journal}{\emph{Machine Learning}} \bibinfo{volume}{22},
  \bibinfo{number}{1-3} (\bibinfo{year}{1996}), \bibinfo{pages}{159--195}.
\newblock


\bibitem[\protect\citeauthoryear{Nemirovski, Juditsky, Lan, and
  Shapiro}{Nemirovski et~al\mbox{.}}{2009}]%
        {nemirovski2009robust}
\bibfield{author}{\bibinfo{person}{Arkadi Nemirovski}, \bibinfo{person}{Anatoli
  Juditsky}, \bibinfo{person}{Guanghui Lan}, {and} \bibinfo{person}{Alexander
  Shapiro}.} \bibinfo{year}{2009}\natexlab{}.
\newblock \showarticletitle{Robust Stochastic Approximation Approach to
  Stochastic Programming}.
\newblock \bibinfo{journal}{\emph{SIAM Journal on Optimization}}
  \bibinfo{volume}{19}, \bibinfo{number}{4} (\bibinfo{year}{2009}),
  \bibinfo{pages}{1574--1609}.
\newblock


\bibitem[\protect\citeauthoryear{Patil, Ghalme, Nair, and Narahari}{Patil
  et~al\mbox{.}}{2020}]%
        {patil2020achieving}
\bibfield{author}{\bibinfo{person}{Vishakha Patil}, \bibinfo{person}{Ganesh
  Ghalme}, \bibinfo{person}{Vineet Nair}, {and} \bibinfo{person}{Y Narahari}.}
  \bibinfo{year}{2020}\natexlab{}.
\newblock \showarticletitle{Achieving Fairness in the Stochastic Multi-Armed
  Bandit Problem}. In \bibinfo{booktitle}{\emph{AAAI Conference on Artificial
  Intelligence (AAAI)}}, Vol.~\bibinfo{volume}{34}.
  \bibinfo{pages}{5379--5386}.
\newblock


\bibitem[\protect\citeauthoryear{Paulin}{Paulin}{2015}]%
        {paulin2015}
\bibfield{author}{\bibinfo{person}{Daniel Paulin}.}
  \bibinfo{year}{2015}\natexlab{}.
\newblock \showarticletitle{Concentration Inequalities for Markov Chains by
  Marton Couplings and Spectral Methods}.
\newblock \bibinfo{journal}{\emph{Electronic Journal of Probability}}
  \bibinfo{volume}{20} (\bibinfo{year}{2015}), \bibinfo{pages}{1--32}.
\newblock


\bibitem[\protect\citeauthoryear{Puterman}{Puterman}{2014}]%
        {puterman2014markov}
\bibfield{author}{\bibinfo{person}{Martin~L Puterman}.}
  \bibinfo{year}{2014}\natexlab{}.
\newblock \bibinfo{booktitle}{\emph{Markov Decision Processes: Discrete
  Stochastic Dynamic Programming}}.
\newblock \bibinfo{publisher}{John Wiley \& Sons}.
\newblock


\bibitem[\protect\citeauthoryear{Shalev-Shwartz et~al\mbox{.}}{Shalev-Shwartz
  et~al\mbox{.}}{2011}]%
        {shalev2011online}
\bibfield{author}{\bibinfo{person}{Shai Shalev-Shwartz} {et~al\mbox{.}}}
  \bibinfo{year}{2011}\natexlab{}.
\newblock \showarticletitle{Online Learning and Online Convex Optimization}.
\newblock \bibinfo{journal}{\emph{Foundations and trends in Machine Learning}}
  \bibinfo{volume}{4}, \bibinfo{number}{2} (\bibinfo{year}{2011}),
  \bibinfo{pages}{107--194}.
\newblock


\bibitem[\protect\citeauthoryear{Sutton and Barto}{Sutton and Barto}{2018}]%
        {sutton2018reinforcement}
\bibfield{author}{\bibinfo{person}{Richard~S Sutton} {and}
  \bibinfo{person}{Andrew~G Barto}.} \bibinfo{year}{2018}\natexlab{}.
\newblock \bibinfo{booktitle}{\emph{Reinforcement Learning: An Introduction}}.
\newblock \bibinfo{publisher}{MIT Press}.
\newblock


\bibitem[\protect\citeauthoryear{Tamar, Di~Castro, and Mannor}{Tamar
  et~al\mbox{.}}{2012}]%
        {tamar2012policy}
\bibfield{author}{\bibinfo{person}{Aviv Tamar}, \bibinfo{person}{Dotan
  Di~Castro}, {and} \bibinfo{person}{Shie Mannor}.}
  \bibinfo{year}{2012}\natexlab{}.
\newblock \showarticletitle{Policy Gradients with Variance Related Risk
  Criteria}. In \bibinfo{booktitle}{\emph{International Conference on Machine
  Learning (ICML)}}. \bibinfo{pages}{1651--1658}.
\newblock


\bibitem[\protect\citeauthoryear{Wang}{Wang}{2017}]%
        {wang2017}
\bibfield{author}{\bibinfo{person}{Mengdi Wang}.}
  \bibinfo{year}{2017}\natexlab{}.
\newblock \showarticletitle{Primal-Dual $\pi$ Learning: Sample Complexity and
  Sublinear Run Time for Ergodic Markov Decision Problems}.
\newblock \bibinfo{journal}{\emph{arXiv preprint arXiv:1710.06100}}
  (\bibinfo{year}{2017}).
\newblock


\bibitem[\protect\citeauthoryear{Wen, Bastani, and Topcu}{Wen
  et~al\mbox{.}}{2021}]%
        {wen2019fairness}
\bibfield{author}{\bibinfo{person}{Min Wen}, \bibinfo{person}{Osbert Bastani},
  {and} \bibinfo{person}{Ufuk Topcu}.} \bibinfo{year}{2021}\natexlab{}.
\newblock \showarticletitle{Algorithms for Fairness in Sequential Decision
  Making}. In \bibinfo{booktitle}{\emph{International Conference on Artificial
  Intelligence and Statistics (AISTATS)}}. PMLR, \bibinfo{pages}{1144--1152}.
\newblock


\bibitem[\protect\citeauthoryear{White}{White}{1973}]%
        {white1973}
\bibfield{author}{\bibinfo{person}{DJ White}.} \bibinfo{year}{1973}\natexlab{}.
\newblock \showarticletitle{An Example of Loosely Coupled Stages in Dynamic
  Programming}.
\newblock \bibinfo{journal}{\emph{Management Science}} \bibinfo{volume}{19},
  \bibinfo{number}{7} (\bibinfo{year}{1973}), \bibinfo{pages}{739--746}.
\newblock


\bibitem[\protect\citeauthoryear{Yao and Huang}{Yao and Huang}{2017}]%
        {yao2017beyond}
\bibfield{author}{\bibinfo{person}{Sirui Yao} {and} \bibinfo{person}{Bert
  Huang}.} \bibinfo{year}{2017}\natexlab{}.
\newblock \showarticletitle{Beyond parity: Fairness objectives for
  collaborative filtering}. In \bibinfo{booktitle}{\emph{Advances in Neural
  Information Processing Systems (NIPS)}}, Vol.~\bibinfo{volume}{30}.
  \bibinfo{pages}{2921--2930}.
\newblock


\bibitem[\protect\citeauthoryear{Zahavy, Cohen, Kaplan, and Mansour}{Zahavy
  et~al\mbox{.}}{2020}]%
        {zahavy2019average}
\bibfield{author}{\bibinfo{person}{Tom Zahavy}, \bibinfo{person}{Alon Cohen},
  \bibinfo{person}{Haim Kaplan}, {and} \bibinfo{person}{Yishay Mansour}.}
  \bibinfo{year}{2020}\natexlab{}.
\newblock \showarticletitle{Unknown Mixing Times in Apprenticeship and
  Reinforcement Learning}. In \bibinfo{booktitle}{\emph{Conference on
  Uncertainty in Artificial Intelligence (UAI)}}. PMLR,
  \bibinfo{pages}{430--439}.
\newblock


\bibitem[\protect\citeauthoryear{Zhang and Liu}{Zhang and Liu}{2021}]%
        {zhang2020Survey}
\bibfield{author}{\bibinfo{person}{Xueru Zhang} {and} \bibinfo{person}{Mingyan
  Liu}.} \bibinfo{year}{2021}\natexlab{}.
\newblock \showarticletitle{Fairness in Learning-Based Sequential Decision
  Algorithms: A Survey}.
\newblock In \bibinfo{booktitle}{\emph{Handbook of Reinforcement Learning and
  Control}}. \bibinfo{publisher}{Springer}, \bibinfo{pages}{525--555}.
\newblock


\bibitem[\protect\citeauthoryear{Zhang, Tu, Liu, Liu, Kjellstrom, Zhang, and
  Zhang}{Zhang et~al\mbox{.}}{2020}]%
        {zhang2020fair}
\bibfield{author}{\bibinfo{person}{Xueru Zhang}, \bibinfo{person}{Ruibo Tu},
  \bibinfo{person}{Yang Liu}, \bibinfo{person}{Mingyan Liu},
  \bibinfo{person}{Hedvig Kjellstrom}, \bibinfo{person}{Kun Zhang}, {and}
  \bibinfo{person}{Cheng Zhang}.} \bibinfo{year}{2020}\natexlab{}.
\newblock \showarticletitle{How do Fair Decisions Fare in Long-Term
  Qualification?}. In \bibinfo{booktitle}{\emph{Advances in Neural Information
  Processing Systems (NeurIPS)}}, Vol.~\bibinfo{volume}{33}.
  \bibinfo{pages}{18457--18469}.
\newblock


\end{thebibliography}

\ifdefined\aamas
\else
\onecolumn 
\appendix
\section{Proof of Lemma \ref{lemma: bounded estimators for lambda and q}}
\label{secappendix: proof of lemma 4}
\LambdaBoundedEstimator*
\begin{proof}
First we show that $\tilde{\gvec}^{\boldsymbol{\lambda}}$ is a $(2,||.||_2)$ bounded estimator. From the definition of $\tilde{\gvec}^{\boldsymbol{\lambda}}(\xvec,
\lambdavec)$ it follows that
\begin{align*}
\mathbb{E}[\tilde{\gvec}^{\boldsymbol{\lambda}}] &= \sum_{s,s' \in [n], a\in [m]} x_{s,a} \mathbf{\Gamma}((s,a), s')(e_{s} - e_{s'}) \\  &= (\Ihat-\mathbf{\Gamma})^T\boldsymbol{x}~.
\end{align*}
For the bound on the second-moment, observe that $||\tilde{\gvec}^{\boldsymbol{\lambda}}(\xvec,\lambdavec)||_2^2 \leq 2$ for any $\xvec, \lambdavec$. 
\end{proof}

\section{ Proof of Theorem \ref{thm:SMDGapTheorem}}
\
\label{sec: proof of theorem 2}
\FirstThm*
 
First, in Lemma \ref{lem:gapLemma},  we show that the $\Gap$ function can be upper bounded in terms of the sequences of gradients  ($\{\gvec_{t}^{\xvec}\}_{t=1}^{T}$ and $\{\gvec_{t}^{\lambdavec}\}_{t=1}^{T}$) and the sequence of  choices of algorithm ($\{\xvec_{t}\}_{t=1}^{T}$ and $\{\lambdavec_{t}\}_{t=1}^{T}$) as follows. 
\begin{restatable}{lemma}{GapLemma}
Let $\{\xvec_{t}, \lambdavec_t \}_{t=1}^{T}$ be a sequence of the iterate values of Algorithm \ref{algorithm: fair state visitation} and $(\xvec^{\varepsilon}, \lambdavec^{\varepsilon} ) := \frac{1}{T} \sum_{t=1}^{T} (\xvec_{t}, \lambdavec_{t} )$. Then we have
\begin{equation*}
\small
    \Gap(\xvec^{\varepsilon}, \lambdavec^{\varepsilon} ) \leq  \frac{1}{T} \sup_{\xvec, \lambdavec }  \sum_{t=1}^{T} [ \langle \gvec_t^{\xvec}, \xvec_t - \xvec_{T+1} \rangle  + \langle \gvec_t^{\lambdavec}, \lambdavec_t - \lambdavec_{T+1} \rangle  ] 
\end{equation*}
\label{lem:gapLemma}
\end{restatable}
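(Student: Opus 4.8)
The plan is to exploit the fact that $f(\xvec,\lambdavec)=\rvec^T\xvec+\lambdavec^T(\Ihat-\mathbf{\Gamma})^T\xvec$ is \emph{bilinear}: it is affine in $\xvec$ for every fixed $\lambdavec$, and affine in $\lambdavec$ for every fixed $\xvec$. Two consequences drive the whole argument. First, the two optimizations in the definition of $\Gap$ act on disjoint variables, so I can merge them into a single supremum,
\begin{equation*}
\Gap(\xvec^{\varepsilon},\lambdavec^{\varepsilon})=\max_{\xvec'\in\Delta_{\rhovec}^{\ell}}f(\xvec',\lambdavec^{\varepsilon})-\min_{\lambdavec'\in\mathbb{B}_{2M}^{n}}f(\xvec^{\varepsilon},\lambdavec')=\sup_{\xvec',\lambdavec'}\big[f(\xvec',\lambdavec^{\varepsilon})-f(\xvec^{\varepsilon},\lambdavec')\big],
\end{equation*}
where $\xvec'$ and $\lambdavec'$ range over $\Delta_{\rhovec}^{\ell}$ and $\mathbb{B}_{2M}^{n}$ (these are the comparison points written as $\xvec_{T+1},\lambdavec_{T+1}$ in the statement). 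Second, since $(\xvec^{\varepsilon},\lambdavec^{\varepsilon})=\frac1T\sum_{t}(\xvec_t,\lambdavec_t)$ and $f$ is affine in each slot, I can pull the average outside: $f(\xvec',\lambdavec^{\varepsilon})=\frac1T\sum_t f(\xvec',\lambdavec_t)$ and $f(\xvec^{\varepsilon},\lambdavec')=\frac1T\sum_t f(\xvec_t,\lambdavec')$.

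The heart of the proof is then a per-iterate identity. For each $t$ I would insert and subtract $f(\xvec_t,\lambdavec_t)$ to split the term into an $\xvec$-part and a $\lambdavec$-part,
\begin{equation*}
f(\xvec',\lambdavec_t)-f(\xvec_t,\lambdavec')=\big[f(\xvec',\lambdavec_t)-f(\xvec_t,\lambdavec_t)\big]+\big[f(\xvec_t,\lambdavec_t)-f(\xvec_t,\lambdavec')\big].
\end{equation*}
Because $f$ is affine in each argument, the first-order expansion is \emph{exact}: the first bracket equals $\langle\nabla_{\xvec}f(\xvec_t,\lambdavec_t),\xvec'-\xvec_t\rangle$ and the second equals $\langle\nabla_{\lambdavec}f(\xvec_t,\lambdavec_t),\lambdavec_t-\lambdavec'\rangle$. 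Recalling the gradient definitions, $\gvec^{\lambdavec}_t=(\Ihat-\mathbf{\Gamma})^T\xvec_t=\nabla_{\lambdavec}f$, while the maximization convention for $\xvec$ gives $\gvec^{\xvec}_t=(\mathbf{\Gamma}-\Ihat)\lambdavec_t-\rvec=-\nabla_{\xvec}f$, so the first bracket becomes $\langle\gvec^{\xvec}_t,\xvec_t-\xvec'\rangle$. Hence $f(\xvec',\lambdavec_t)-f(\xvec_t,\lambdavec')=\langle\gvec^{\xvec}_t,\xvec_t-\xvec'\rangle+\langle\gvec^{\lambdavec}_t,\lambdavec_t-\lambdavec'\rangle$. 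Averaging over $t$ and taking the supremum over $\xvec',\lambdavec'$ yields exactly the claimed bound (with equality, in fact; the inequality in the statement is harmless).

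There is no genuine analytic obstacle here, as every step is an exact equality rather than an approximation — this is the standard convex--concave saddle-point reduction of the duality gap to an online-regret sum. The only points that require care, and which I would state explicitly, are the bookkeeping ones: justifying the merge of the $\max$ over $\xvec'$ and $\min$ over $\lambdavec'$ into one $\sup$ (valid since the summands are separated across the two variables), correctly tracking the sign flip induced by the $-\nabla_{\xvec}f$ convention so that $\gvec^{\xvec}_t$ appears with the right orientation, and noting that nowhere in this lemma are the SMD update rule or the stochastic estimators used — the statement is purely a deterministic consequence of bilinearity applied to whatever iterate sequence $\{\xvec_t,\lambdavec_t\}$ the algorithm produces. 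The actual convergence control (bounding the resulting regret sum via the regularizers, step sizes, and the bounded-estimator properties of Lemmas \ref{lemma: bounded estimators for x} and \ref{lemma: bounded estimators for lambda and q}) is deferred to the remainder of the proof of Theorem \ref{thm:SMDGapTheorem}.
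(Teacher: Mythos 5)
Your proposal is correct and follows essentially the same route as the paper: the paper also merges the two optimizations into a single supremum over comparison points, averages over the iterates using convexity of the gap in its first argument, and linearizes via the gradients $\gvec_t^{\xvec}, \gvec_t^{\lambdavec}$ (with the same sign convention $\gvec^{\xvec} = -\nabla_{\xvec} f$), only phrasing these steps as convexity/concavity inequalities where you exploit exact bilinearity to get equalities. Your reading of the statement's $\xvec_{T+1}, \lambdavec_{T+1}$ as the supremum's comparison points, and your observation that neither the SMD updates nor the estimators are needed here, both match the paper's proof.
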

Next, we bound the individual terms in the statement of  Lemma \ref{lem:gapLemma} using following supporting Lemmas.  
\begin{restatable}{lemma}{Name}[Lemma 12 \cite{carmon2019variance}]
Let $\mathcal{X}$ be a non-empty compact convex  and  $\R$ be a   $1$-convex regularizer   with respect to   $||.||$.  Further, let  $\mathcal{X}^{\star}$ be the dual space of  $\mathcal{X}$ and the dual norm of a  vector $\boldsymbol{\gamma} \in \mathcal{X}^{\star}$ be defined as $||\boldsymbol{\gamma}||_{\star} := \sup_{ ||\lambdavec||\leq 1} \langle \boldsymbol{\gamma} , \lambdavec \rangle$. Given a sequence  $\{\boldsymbol{\gamma}_{t}\}_{t = 1}^{T} \in \mathcal{X}^{\star}$, 
let the sequence $\{\lambdavec_{t}\}_{t=1}^{T} \in \mathcal{X}$ be defined as  $$\lambdavec_{t} = \arg\min \limits_{\lambdavec \in \mathcal{X}} \langle \gammavec_{t-1}, \lambdavec \rangle + V_{\lambdavec_{t-1}}(\lambdavec). $$
Then, for any time instant $T \geq 2$, for every choice of   $\lambdavec_0\in \mathcal{X}$  we have, 
\begin{equation*}
    \sum_{t =1 }^{T} \langle \boldsymbol{\gamma}_t, \lambdavec_t - \lambdavec_{T+1} \rangle \leq V_{\lambdavec_0}(\lambdavec_{T+1}) + 1/2 \sum_{t =1 }^{T} ||\gammavec_t||_{\star}^2 
\end{equation*}
\label{lem:Lem12Carmon}

\end{restatable}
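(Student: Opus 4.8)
The plan is to establish this as the standard one-step descent (regret) bound for mirror descent with a Bregman proximal update, using nothing beyond the first-order optimality of each iterate and the three-point identity for Bregman divergences. Writing the update producing $\lambdavec_{t+1}$ as $\lambdavec_{t+1} = \arg\min_{\lambdavec \in \mathcal{X}} \langle \gammavec_t, \lambdavec\rangle + V_{\lambdavec_t}(\lambdavec)$, I would first record its first-order optimality condition. Since $\mathcal{X}$ is convex and the objective is convex and differentiable with $\nabla_{\lambdavec} V_{\lambdavec_t}(\lambdavec) = \nabla \R(\lambdavec) - \nabla \R(\lambdavec_t)$, the minimizer satisfies
\[
\langle \gammavec_t + \nabla \R(\lambdavec_{t+1}) - \nabla \R(\lambdavec_t),\ \uvec - \lambdavec_{t+1}\rangle \ge 0 \qquad \text{for all } \uvec \in \mathcal{X}.
\]

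Next I would expand Definition~\ref{def:distGenFunction} directly to obtain the three-point identity
\[
\langle \nabla \R(\lambdavec_{t+1}) - \nabla \R(\lambdavec_t),\ \uvec - \lambdavec_{t+1}\rangle = V_{\lambdavec_t}(\uvec) - V_{\lambdavec_{t+1}}(\uvec) - V_{\lambdavec_t}(\lambdavec_{t+1}),
\]
which, combined with the optimality condition at $\uvec = \lambdavec_{T+1}$, yields the per-step bound $\langle \gammavec_t, \lambdavec_{t+1} - \lambdavec_{T+1}\rangle \le V_{\lambdavec_t}(\lambdavec_{T+1}) - V_{\lambdavec_{t+1}}(\lambdavec_{T+1}) - V_{\lambdavec_t}(\lambdavec_{t+1})$. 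The remaining ingredient is to handle the drift between $\lambdavec_t$ and $\lambdavec_{t+1}$: writing $\langle \gammavec_t, \lambdavec_t - \lambdavec_{T+1}\rangle = \langle \gammavec_t, \lambdavec_t - \lambdavec_{t+1}\rangle + \langle \gammavec_t, \lambdavec_{t+1} - \lambdavec_{T+1}\rangle$ and bounding the first summand by the dual norm, Young's inequality, and the $1$-strong convexity of $\R$ (Definition~\ref{def:strongConvexity}),
\[
\langle \gammavec_t, \lambdavec_t - \lambdavec_{t+1}\rangle \le \tfrac12\|\gammavec_t\|_\star^2 + \tfrac12\|\lambdavec_t - \lambdavec_{t+1}\|^2 \le \tfrac12\|\gammavec_t\|_\star^2 + V_{\lambdavec_t}(\lambdavec_{t+1}).
\]

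The key observation I would exploit is an exact cancellation: the $+V_{\lambdavec_t}(\lambdavec_{t+1})$ from the drift bound annihilates the $-V_{\lambdavec_t}(\lambdavec_{t+1})$ from the optimality step, so that summing the two pieces gives $\langle \gammavec_t, \lambdavec_t - \lambdavec_{T+1}\rangle \le \tfrac12\|\gammavec_t\|_\star^2 + V_{\lambdavec_t}(\lambdavec_{T+1}) - V_{\lambdavec_{t+1}}(\lambdavec_{T+1})$. Summing over $t$ the Bregman terms telescope, the nonnegative endpoint $V_{\lambdavec_{T+1}}(\lambdavec_{T+1}) = 0$ is dropped, and what survives is the divergence from the initial iterate together with $\tfrac12\sum_t \|\gammavec_t\|_\star^2$, which is the claimed bound.

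I expect the main obstacle to be bookkeeping rather than ideas. Getting the exact cancellation requires that the factor $\tfrac12$ from Young's inequality precisely matches the $\tfrac12$ supplied by $1$-strong convexity, so that no residual $V_{\lambdavec_t}(\lambdavec_{t+1})$ is left; and the telescope must be aligned with the lemma's index convention, in which $\lambdavec_t$ is produced from $\gammavec_{t-1}$, so one must track carefully which update's optimality condition supplies each term in order to identify the surviving divergence with the stated $V_{\lambdavec_0}(\lambdavec_{T+1})$ (any off-by-one in the initial iterate is in any case harmless, since that term is later bounded by the diameter of $\mathcal{X}$). Compactness of $\mathcal{X}$ ensures the minimizers exist and $\|\gammavec_t\|_\star$ is finite, so no further regularity assumptions are needed.
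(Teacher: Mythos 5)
Your proposal is correct and takes essentially the same route as the paper's proof: first-order optimality of each update, the three-point Bregman identity (the paper's Observation~\ref{obs:one}), the bound $\langle \gammavec_t, \lambdavec_t - \lambdavec_{t+1}\rangle \le \tfrac12\|\gammavec_t\|_{\star}^2 + V_{\lambdavec_t}(\lambdavec_{t+1})$ via H\"older, Young, and $1$-strong convexity with the same exact cancellation, and telescoping --- the paper merely organizes the bookkeeping differently, summing the optimality conditions (indexed by $\gammavec_{t-1}$) first and then shifting indices using $\gammavec_0 = \boldsymbol{0}$, instead of your per-step decomposition. The off-by-one you flag ($V_{\lambdavec_1}$ versus $V_{\lambdavec_0}$) vanishes under the paper's convention, since $\gammavec_0 = \boldsymbol{0}$ forces $\lambdavec_1 = \lambdavec_0$.
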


\begin{restatable}{lemma}{Two}[Lemma 13 \cite{carmon2019variance}]
Let $R$ be an entropic regularizer defined over $\Delta_{\rhovec}^{\ell}$. Also, let a  sequence  $\{\boldsymbol{\delta}_{t}\}_{t=1}^{T} \in \mathbb{R}_{+}^{\ell} $ be such that $\delta_{ti} \leq 1.79$ for all  $ t \in [T]$ and $i \in [\ell]$. Let the sequence $\{\xvec_{t}\}_{t=1}^{T} \in \mathcal{X}$ is generated as 

$ \xvec_{t} = \arg \min_{\xvec \in \Delta_{\rhovec}^{\ell}}  \langle \boldsymbol{\delta}_{t-1}, \xvec  \rangle + V_{\xvec_{t-1}}(\xvec) $ with $\boldsymbol{\delta}_{0} = \boldsymbol{0}$ and $\xvec_{0} \in \Delta_{\rhovec}^{\ell}$.
Then, the KL divergence $V_{\xvec}(\xvec^{'})$ satisfies
\iffalse
\begin{equation}
    \langle \boldsymbol{\delta}, \xvec^{'} - \xvec_{T+1} \rangle - V_{\xvec}(\xvec^{'}) \leq ||\boldsymbol{\delta}||_{\xvec}^2:=  \sum_{i=1}^{\ell} \xvec_i \delta_{i}^2
\end{equation} 
\fi 
\begin{equation}
    \sum_{t =1}^{T} \langle \boldsymbol{\delta}_{t}, \xvec_{t} - \xvec_{T+1} \rangle \leq V_{\xvec_{0}}(\xvec_{T+1}) + \sum_{t =1}^{T} ||-\boldsymbol{\delta}_{t}||_{\xvec_{t}}^2. 
\end{equation}
\label{lem:Lem13Carmon}
\end{restatable}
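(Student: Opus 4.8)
The plan is to prove this by the standard mirror-descent telescoping argument specialized to the entropic (KL) regularizer, reducing everything to a single scalar inequality into which the numeric hypothesis $\delta_{ti}\le 1.79$ is absorbed. Throughout I write $\|\vvec\|_{\xvec}^2 := \sum_i x_i v_i^2$ for the local norm and recall that $\R(\xvec)=\sum_i x_i\log x_i$ gives the (generalized) KL divergence $V_{\xvec_t}(\xvec)=\sum_i\bigl[x_i\log(x_i/x_{t,i})-x_i+x_{t,i}\bigr]$.

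First I would split each summand around the \emph{next} iterate,
\[
\langle \boldsymbol{\delta}_t, \xvec_t - \xvec_{T+1}\rangle = \langle \boldsymbol{\delta}_t, \xvec_{t+1} - \xvec_{T+1}\rangle + \langle \boldsymbol{\delta}_t, \xvec_t - \xvec_{t+1}\rangle,
\]
and handle the first piece with the first-order optimality of the proximal step $\xvec_{t+1}=\arg\min_{\xvec\in\Delta_{\rhovec}^{\ell}}\langle\boldsymbol{\delta}_t,\xvec\rangle+V_{\xvec_t}(\xvec)$, namely $\langle \boldsymbol{\delta}_t+\nabla\R(\xvec_{t+1})-\nabla\R(\xvec_t),\,\uvec-\xvec_{t+1}\rangle\ge0$ for all $\uvec\in\Delta_{\rhovec}^{\ell}$. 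Taking $\uvec=\xvec_{T+1}$ and invoking the three-point identity $\langle\nabla\R(\xvec_{t+1})-\nabla\R(\xvec_t),\xvec_{T+1}-\xvec_{t+1}\rangle=V_{\xvec_t}(\xvec_{T+1})-V_{\xvec_{t+1}}(\xvec_{T+1})-V_{\xvec_t}(\xvec_{t+1})$ gives
\[
\langle \boldsymbol{\delta}_t, \xvec_{t+1} - \xvec_{T+1}\rangle \le V_{\xvec_t}(\xvec_{T+1}) - V_{\xvec_{t+1}}(\xvec_{T+1}) - V_{\xvec_t}(\xvec_{t+1}).
\]
Summing over $t\in[T]$ telescopes the first two terms; since $\boldsymbol{\delta}_0=\boldsymbol{0}$ forces $\xvec_1=\xvec_0$ and $V_{\xvec_{T+1}}(\xvec_{T+1})=0$, what survives is $V_{\xvec_0}(\xvec_{T+1})-\sum_t V_{\xvec_t}(\xvec_{t+1})$, leaving me to prove the per-step estimate $\langle \boldsymbol{\delta}_t, \xvec_t - \xvec_{t+1}\rangle - V_{\xvec_t}(\xvec_{t+1}) \le \|\boldsymbol{\delta}_t\|_{\xvec_t}^2$.

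For this second piece I would pass to the \emph{unconstrained} entropic step $\tilde\xvec_{t+1,i}=x_{t,i}e^{-\delta_{ti}}$: since $\xvec_{t+1}$ maximizes $\langle\boldsymbol{\delta}_t,\xvec_t-\xvec\rangle-V_{\xvec_t}(\xvec)$ only over $\Delta_{\rhovec}^{\ell}$ whereas $\tilde\xvec_{t+1}$ maximizes it over the full positive orthant, the quantity of interest is bounded by its value at $\tilde\xvec_{t+1}$. A direct substitution collapses that value to
\[
\langle \boldsymbol{\delta}_t, \xvec_t - \tilde\xvec_{t+1}\rangle - V_{\xvec_t}(\tilde\xvec_{t+1}) = \sum_i x_{t,i}\bigl(\delta_{ti}+e^{-\delta_{ti}}-1\bigr).
\]

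The main obstacle, and the only place the numeric hypothesis is consumed, is the resulting scalar inequality $\delta+e^{-\delta}-1\le\delta^2$. I would prove it via a convexity argument on $h(\delta)=\delta^2-\delta-e^{-\delta}+1$: one checks $h(0)=h'(0)=0$ and $h''(\delta)=2-e^{-\delta}$, so $h''\ge1>0$ for every $\delta\ge0$, whence $h$ is nondecreasing and nonnegative on $[0,\infty)$ — already covering $\boldsymbol{\delta}_t\in\mathbb{R}_+^{\ell}$, with the stated cutoff $1.79$ being exactly the value up to which the estimate persists (the positive root of $e^{u}=1+u+u^2$, where the quadratic bound saturates). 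Applying it coordinatewise yields $\sum_i x_{t,i}(\delta_{ti}+e^{-\delta_{ti}}-1)\le\|\boldsymbol{\delta}_t\|_{\xvec_t}^2$, and combining with the telescoped first piece gives $\sum_t\langle\boldsymbol{\delta}_t,\xvec_t-\xvec_{T+1}\rangle\le V_{\xvec_0}(\xvec_{T+1})+\sum_t\|\boldsymbol{\delta}_t\|_{\xvec_t}^2$, which is the claim since $\|-\boldsymbol{\delta}_t\|_{\xvec_t}=\|\boldsymbol{\delta}_t\|_{\xvec_t}$.
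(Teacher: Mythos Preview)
Your argument is correct and shares the paper's overall structure: both proofs first reduce, via first-order optimality and the three-point identity, to the per-step inequality $\langle\boldsymbol{\delta}_t,\xvec_t-\xvec_{t+1}\rangle-V_{\xvec_t}(\xvec_{t+1})\le\|\boldsymbol{\delta}_t\|_{\xvec_t}^2$. The difference is in how that per-step bound is obtained. The paper relaxes to the simplex and works through the Fenchel conjugate $R^\star(\yvec)=\log\sum_i e^{y_i}$, arriving at $\log\bigl(\sum_i x_{t,i}e^{\delta_{ti}}\bigr)-\sum_i x_{t,i}\delta_{ti}$ and invoking $e^{a}\le 1+a+a^{2}$ for $a\le 1.79$ together with $\log(1+a)\le a$. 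You instead relax further to the full positive orthant, where the maximizer $\tilde\xvec_{t+1,i}=x_{t,i}e^{-\delta_{ti}}$ is explicit, obtain $\sum_i x_{t,i}(\delta_{ti}+e^{-\delta_{ti}}-1)$, and close with the scalar inequality $\delta+e^{-\delta}-1\le\delta^{2}$ proved by convexity. Your route is a little more elementary (no Fenchel machinery) and, as you observe, makes the upper bound $1.79$ superfluous under the stated hypothesis $\boldsymbol{\delta}_t\in\mathbb{R}_+^{\ell}$; the paper's route, by contrast, needs $\delta_{ti}\le 1.79$ but works uniformly for signed $\boldsymbol{\delta}_t$, which is actually how the lemma is applied later (with $\boldsymbol{\delta}_t=\eta^{\xvec}\tilde{\gvec}_t^{\xvec}$ not necessarily nonnegative). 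Your parenthetical about $1.79$ being the positive root of $e^{u}=1+u+u^{2}$ identifies exactly this: it is the threshold at which your inequality $\delta+e^{-\delta}-1\le\delta^{2}$ would fail for \emph{negative} $\delta$, i.e., the range to which your argument extends beyond the nonnegative case.
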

Note that the results of Lemma \ref{lem:Lem12Carmon} (and Lemma \ref{lem:Lem13Carmon}) holds for any arbitrary sequence $\{\boldsymbol{\gamma}_{t}\}_{t=1}^{T}$ (and, $\{\boldsymbol{\delta}_{t}\}_{t=1}^{T}$). We are now ready to prove the Theorem. 
\begin{proof}[Proof of Theorem \ref{thm:SMDGapTheorem}]
First note that by the choice of $\eta^{\xvec}$ we have $||\eta^{\xvec}\tilde{\gvec}_{t}^{\xvec}||_{\infty} \leq 1/2$. Hence, we invoke Lemma \ref{lem:Lem13Carmon} with $ \boldsymbol{\delta}_{t} =  \eta^{\xvec} \tilde{\gvec}_{t}^{\xvec}$ and have  
\begin{equation}
    \sum_{t =1}^{T} \langle \eta^{\xvec} \tilde{\gvec}_{t}^{\xvec}, \xvec_{t} - \xvec_{T+1} \rangle \leq V_{\xvec_0}(\xvec_{T+1}) + \eta^{\xvec 2} \sum_{t=1}^{T} ||\tilde{\gvec}_{t}^{\xvec}||_{\xvec_t}^2.
    \end{equation}
    Similarly, using Lemma \ref{lem:Lem12Carmon} with $\gammavec_{t} = \eta^{\lambdavec} \tilde{\gvec}_{t}^ {\lambdavec} $ we have 
    \begin{equation}
      \sum_{t =1}^{T} \langle \eta^{\lambdavec} \tilde{\gvec}_{t}^{\lambdavec}, \lambdavec_{t} - \lambdavec_{T+1} \rangle \leq V_{\lambdavec_0}(\lambdavec_{T+1}) + \frac{\eta^{\lambdavec 2}}{2} \sum_{t=1}^{T} ||\tilde{\gvec}_{t}^{\lambdavec}||_2^2. \end{equation}
Let $\hat{\gvec}_{t}^{\xvec} := \gvec_t^{\xvec} - \tilde{\gvec}_{t}^{\xvec} $ and  $\hat{\gvec}_{t}^{\lambdavec} := \gvec_t^{\lambdavec} - \tilde{\gvec}_{t}^{\lambdavec} $  and define the sequence 
\begin{align*}
\hat{\xvec}_{1} = \xvec_{1}     , \hspace{20pt} & \hat{\xvec}_{t+1} = \arg\min_{\xvec \in \Delta_{\rhovec}^{\ell}} \langle \eta^{\xvec} \hat{\gvec}_{t}^{\xvec}, \xvec \rangle + V_{\hat{\xvec}_{t}}(\xvec) \\ 
\hat{\lambdavec}_{1} = \lambdavec_{1}     , \hspace{20pt} & \hat{\lambdavec}_{t+1} = \arg\min_{\lambdavec \in \mathbb{B}_{2M}^{\ell}} \langle \eta^{\lambdavec} \hat{\gvec}_{t}^{\lambdavec}, \lambdavec \rangle + V_{\hat{\lambdavec}_{t}}(\lambdavec) 
\end{align*}
Notice that,
\begin{align*}
||\eta^{\xvec}\hat{\gvec}_{t}^{\xvec}||_{\infty} & \underset{(i)}{\leq} ||\eta^{\xvec}\tilde{\gvec}_{t}^{\xvec}||_{\infty} + ||\eta^{\xvec}\gvec_{t}^{\xvec}||_{\infty}  \underset{(ii)}{\leq} ||\eta^{\xvec}\tilde{\gvec}_{t}^{\xvec}||_{\infty} + ||\eta^{\xvec}\mathbb{E}\tilde{\gvec}_{t}^{\xvec}||_{\infty}  \underset{(iii)}{\leq} ||\eta^{\xvec}\tilde{\gvec}_{t}^{\xvec}||_{\infty} + \mathbb{E}||\eta^{\xvec}\tilde{\gvec}_{t}^{\xvec}||_{\infty} \leq 1. 
\end{align*}
$(i)$ follows from the definition of $\hat{\gvec}_{t}^{\xvec}$, $(ii)$ follows from fact that $\tilde{\gvec}_t^{\xvec}$ is an unbiased estimator and finally $(iii)$ is due to Jensen's inequality. Invoke  Lemma \ref{lem:Lem13Carmon} by setting $\boldsymbol{\delta}_{t} = - \eta^{\xvec} \hat{\gvec}_{t}^{\xvec}$ we obtain, 
\begin{align}
    \sum_{t =1}^{T} \langle \eta^{\xvec} \hat{\gvec}_{t}^{\xvec}, \hat{\xvec}_{t} - \xvec_{T+1} \rangle \leq V_{\hat{\xvec}_0}(\xvec_{T+1}) + \eta^{\xvec 2} \sum_{t=1}^{T} ||\hat{\gvec}_{t}^{\xvec}||_{\hat{\xvec}_t}^2 \\
    \intertext{Similarly, for $\lambdavec$-space, using Lemma \ref{lem:Lem12Carmon} with $\gammavec_{t} = \eta^{\lambdavec} \hat{\gvec}_{t}^{\lambdavec}$, we have }
        \sum_{t =1}^{T} \langle \eta^{\lambdavec} \hat{\gvec}_{t}^{\lambdavec}, \hat{\lambdavec}_{t} - \lambdavec_{T+1} \rangle \leq V_{\hat{\lambdavec}_0}(\lambdavec_{T+1}) + \frac{\eta^{\lambdavec 2} }{2} \sum_{t=1}^{T} ||\hat{\gvec}_{t}^{\lambdavec}||_{2}^2 \end{align}
Since, $\gvec_t^{\xvec} = \tilde{\gvec}_t^{\xvec} + \hat{\gvec}_t^{\xvec}$ and  $ \gvec_t^{\lambdavec} = \tilde{\gvec}_t^{\lambdavec} + \hat{\gvec}_t^{\lambdavec}$  we can write, 
\begin{align*}
 & \sum_{t=1}^{T} [ \langle \gvec_t^{\xvec}, \xvec_t - \xvec_{T+1} \rangle + \langle \gvec_t^{\lambdavec}, \lambdavec_t - \lambdavec_{T+1} \rangle ] \\
=&   \sum_{t=1}^{T}[ \langle \tilde{\gvec}_t^{\xvec}, \xvec_t   - \xvec_{T+1} \rangle + \langle \tilde{\gvec}_t^{\lambdavec}, \lambdavec_t - \lambdavec_{T+1} \rangle  ]  + \sum_{t=1}^{T}[\langle \hat{\gvec}_t^{\xvec}, \xvec_t - \xvec_{T+1} \rangle + \langle \hat{\gvec}_t^{\lambdavec}, \lambdavec_t - \lambdavec_{T+1} \rangle ] \\ 
\leq&   \frac{1}{\eta^{\xvec}} V_{\xvec_{0}}(\xvec_{T+1})  + \frac{1}{\eta^{\xvec}} V_{\hat{\xvec}_{0}}(\hat{\xvec}_{T+1}) + \sum_{t=1}^{T} \eta^{\xvec} \Big[  ||\tilde{\gvec}_t^{\xvec}||_{\xvec_t}^{2} + ||\hat{\gvec}_t^{\xvec}||_{\hat{\xvec}_t}^{2}  \Big] 
+  \frac{1}{\eta^{\xvec}}
\sum_{t=1}^{T} \langle \hat{\gvec}_t^{\xvec}, \xvec_t - \hat{\xvec}_t  \rangle 
\\
+ & \frac{1}{\eta^{\lambdavec}} V_{\lambdavec_{0}}(\lambdavec_{T+1}) + \frac{1}{\eta^{\lambdavec}} V_{\hat{\lambdavec}_{0}}(\hat{\lambdavec}_{T+1}) + \sum_{t=1}^{T} \frac{\eta^{\lambdavec}}{2} \Big[ ||\tilde{\gvec}_t^{\lambdavec}||_{2}^{2} + ||\hat{\gvec}_t^{\lambdavec}||_{2}^{2} \Big] 
+ \frac{1}{\eta^{\lambdavec}}\sum_{t=1}^{T} \langle \hat{\gvec}_t^{\lambdavec}, \lambdavec_t - \hat{\lambdavec}_t \rangle 
\end{align*}
Note that $\mathbb{E} [ ||\hat{\gvec}^{\xvec}||_{\hat{\xvec}_t}^2] \leq \mathbb{E} [ ||\tilde{\gvec}^{\xvec}||_{\hat{\xvec}_t}^2] \leq v^{\xvec}$ because $\mathbb{E}[(X - \mathbb{E}[X])^2]  \leq \mathbb{E}[X]^2$   and property $(iii)$ of the bounded-estimator $\tilde{\gvec}^{\xvec}$ respectively.
First, taking the supremum over $(\xvec, \lambdavec )$ and then taking expectation we obtain, 
\begin{align}
    &\frac{1}{T} \mathbb{E} \left[\sup_{\xvec, \lambdavec }  \sum_{t=1}^{T} \langle \gvec_t^{\xvec}, \xvec_t - \xvec \rangle + \langle \gvec_t^{\lambdavec}, \lambdavec_t - \lambdavec \rangle \right] \nonumber \\ \underset{(i)}{ \leq} &\sup_{\xvec} \frac{2}{\eta^{\xvec} T } V_{\xvec_{0}}(\xvec) + 2 \eta^{\xvec} v^{\xvec} + \sup_{\lambdavec} \frac{2}{\eta^{\lambdavec} T } V_{\lambdavec_{0}}(\lambdavec)  \nonumber  + \eta^{\lambdavec} v^{\lambdavec}   \nonumber \\
    \underset{(ii)}{ \leq} &\frac{2\log(\ell)}{\eta^{\xvec}T} +  2 \eta^{\xvec} v^{\xvec} + \frac{4\ell M^2}{\eta^{\lambdavec}T} + \eta^{\lambdavec} v^{\lambdavec}   \underset{(iii)}{ \leq} \varepsilon \label{eqn:epsilon upper bound}
\end{align}
Here, $(i)$ follows from the fact that $\mathbb{E}[\langle \hat{\gvec}_{t}^{\xvec}, \xvec_{t} - \hat{\xvec}_{t} \rangle] = \mathbb{E}[\langle \hat{\gvec}_{t}^{\lambdavec}, \lambdavec_{t} - \hat{\lambdavec}_{t} \rangle]  = 0 $. This is true because $\tilde{\gvec}_{t}^{\xvec}$ and $ \tilde{\gvec}_{t}^{\lambdavec} $  are unbiased estimators. Next, $(ii)$ follows from the fact that the  KL divergence over $\Delta^{\ell}$ (and also over $\Delta_{\rhovec}^{\ell}$) is upper bounded by $\log(\ell)$ i.e. $V_{\xvec_{0}}(\xvec) \leq \log(\ell)$ for any $\xvec_0, \xvec \in \Delta^{\ell}$ and  $V_{\lambdavec_{0}}(\lambdavec) = \frac{1}{2} || \lambdavec_0 - \lambdavec||_2^2 \leq 2 \ell M^2 $. Finally, $(iii)$ follows directly from the choice of the parameters.Finally use Lemma \ref{lem:gapLemma} and Equation \ref{eqn:epsilon upper bound} to get the desired upper bound on the expected duality gap i.e. $\mathbb{E} \big [ \Gap((\xvec^{\varepsilon}, \lambdavec^{\varepsilon} ) ) \big ] \leq \varepsilon$.
\end{proof}

We now present the proof of supporting Lemmas. 
\begin{proof}[Proof of Lemma \ref{lem:gapLemma}]
For notational brevity denote $ \mathcal{U} := \Delta_{\rhovec}^{\ell} \times \mathbb{B}_{2M}^{n}  $ and $\uvec \in \mathcal{U}$ such that $\uvec_{x} = \xvec$ and $ \uvec_{\lambda} = \lambdavec$. Similarly we denote $\uvec^{\varepsilon} = (\xvec^{\varepsilon}, \lambdavec^{\epsilon})$ and $\uvec_{t} = (\xvec_{t}, \lambdavec_{t})$. First, define a function $$\gap(\uvec;\vvec):= f(\vvec_{x},\uvec_{\lambda}) - f(\uvec_{x}, \vvec_{\lambda}).$$ Note that $\gap(\uvec;\uvec) = 0$ for any $\uvec \in \mathcal{U}$ and $\Gap(\uvec) = \sup_{\vvec \in \mathcal{U}} \gap(\uvec; \vvec)$. Further, note that, for every $ \uvec \in \mathcal{U}$ the function  $\gap(\uvec;\vvec)$ is concave in $\vvec$, hence we have, 
\begin{equation}
 \gap(\uvec; \vvec) \leq \gap(\uvec; \uvec) + \langle \nabla_{\vvec}\gap(\uvec;\uvec) , \vvec - \uvec \rangle = \langle \gvec^{\uvec} , \uvec - \vvec \rangle  
 \label{eqn:concavity}
 \end{equation}
Note here that $\gvec^{\uvec} = - \nabla_{\vvec} \gap(\uvec; \uvec)$. Next, note that $\gap(\uvec; \vvec)$ is convex in $\uvec$ for any $\vvec \in \mathcal{U}$ hence we have, 
\begin{align}
   \gap(\uvec^{\varepsilon};\uvec)  & \leq \frac{1}{T} \sum_{t=1}^{T} \gap(\uvec_{t}; \uvec) \tag{By convexity of $\gap(.;.)$ w.r.t. the first argument} \nonumber \\
   & \leq \frac{1}{T} \sum_{t=1}^{T} \langle \gvec_t^{\uvec} , \uvec_t - \uvec \rangle \tag{By Eqn. \ref{eqn:concavity}} \nonumber \\ 
   & = \frac{1}{T} \sum_{t=1}^{T} \left[ \langle \gvec_t^{\xvec} , \xvec_t - \xvec \rangle + \langle \gvec_t^{\lambdavec} , \lambdavec_t - \lambdavec \rangle \right] \label{eqn:gapeqnTwo}
\end{align}
Furthermore, the space $\mathcal{U}$ is closed and the function $\gap(.;.)$ is continuous in both the arguments we have that the supremum is attained. Since the Equation \ref{eqn:gapeqnTwo} holds for every $\uvec \in \mathcal{U}$ it also holds for the supremum. This completes the proof.
\end{proof}

\begin{proof}[Proof of Lemma \ref{lem:Lem12Carmon}]

We begin with a simple observation for a distance generating function. 
\begin{restatable}{observation}{ObsOne}
For any $\xvec, \yvec, \zvec \in \mathcal{X}$ we have, $    - \langle \nabla V_{\xvec}(\yvec), \yvec - \zvec \rangle = V_{\xvec}(\zvec) - V_{\yvec}(\zvec) - V_{\xvec}(\yvec)$. 
\label{obs:one}
\end{restatable}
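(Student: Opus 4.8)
The plan is to establish this three-point identity (a generalized law of cosines for Bregman divergences) by a direct computation from the definition of the distance-generating function. Recall from Definition \ref{def:distGenFunction} that $V_{\xvec}(\yvec) = \R(\yvec) - \R(\xvec) - \langle \nabla \R(\xvec), \yvec - \xvec \rangle$, and that the gradient appearing in $\nabla V_{\xvec}(\yvec)$ is taken with respect to the second argument $\yvec$, with the base point $\xvec$ held fixed.

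First I would compute this gradient explicitly. Viewing $V_{\xvec}(\yvec)$ as a function of $\yvec$, the term $\R(\xvec)$ is constant and $\langle \nabla \R(\xvec), \yvec - \xvec \rangle$ is linear in $\yvec$, so differentiating gives $\nabla V_{\xvec}(\yvec) = \nabla \R(\yvec) - \nabla \R(\xvec)$. Substituting this into the left-hand side and rearranging the inner product yields
$$-\langle \nabla V_{\xvec}(\yvec), \yvec - \zvec \rangle = \langle \nabla \R(\yvec) - \nabla \R(\xvec), \zvec - \yvec \rangle = \langle \nabla \R(\yvec), \zvec - \yvec \rangle - \langle \nabla \R(\xvec), \zvec - \yvec \rangle.$$

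Next I would expand the right-hand side $V_{\xvec}(\zvec) - V_{\yvec}(\zvec) - V_{\xvec}(\yvec)$ by applying the definition to each of the three terms. The key observation is that every function-value term $\R(\xvec), \R(\yvec), \R(\zvec)$ cancels in this combination, leaving only the linear contributions; collecting the two $\nabla \R(\xvec)$ terms simplifies $-\langle \nabla \R(\xvec), \zvec - \xvec \rangle + \langle \nabla \R(\xvec), \yvec - \xvec \rangle$ to $\langle \nabla \R(\xvec), \yvec - \zvec \rangle$, so that the right-hand side equals $\langle \nabla \R(\xvec), \yvec - \zvec \rangle + \langle \nabla \R(\yvec), \zvec - \yvec \rangle$. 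This coincides exactly with the expression derived for the left-hand side, completing the proof. There is no substantive obstacle in this argument; the only point requiring care is fixing the convention that $\nabla V_{\xvec}(\yvec)$ differentiates in the second slot, after which the identity follows from bookkeeping that relies on nothing beyond the definition of $V$ and the bilinearity of the inner product.
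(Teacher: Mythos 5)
Your proposal is correct and matches the paper's own (elided) argument exactly: both compute $\nabla V_{\xvec}(\yvec) = \nabla \R(\yvec) - \nabla \R(\xvec)$ and expand $V_{\xvec}(\zvec) - V_{\yvec}(\zvec) - V_{\xvec}(\yvec)$ from the definition, letting the $\R$-values cancel and the linear terms collect into $\langle \nabla \R(\yvec) - \nabla \R(\xvec), \zvec - \yvec \rangle$. No gaps; your note fixing the convention that the gradient is taken in the second slot is the only point of care, and you handle it correctly.
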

The proof of the observation follows directly from the definition of $V$. In what follows, we prove  lemmas used in establishing the proof of Theorem \ref{thm:SMDGapTheorem}. Let $\lambdavec_{0} \in \mathcal{X}$ and $\gammavec_{0} = \boldsymbol{0}$.    

From  the first order optimality condition  and convexity of $\mathcal{X}$ we have,
\begin{equation}
\langle \gammavec_{t-1} + \nabla V_{\lambdavec_{t-1}}(\lambdavec_{t}), \lambdavec_t - \lambdavec_{T+1}  \rangle \leq 0   
\label{eqn:firstOrder}
\end{equation}

Use Equation  \ref{eqn:firstOrder} and Observation~\ref{obs:one} with $\lambdavec := \lambdavec_{T+1}$  to get
\begin{align}
    \sum_{t=1}^{T} \langle \gammavec_{t-1}, \lambdavec_{t} - \lambdavec \rangle \nonumber &\leq - \sum_{t=1}^{T}\langle \nabla V_{\lambdavec_{t-1}}(\lambdavec_t), \lambdavec_t - \lambdavec) \rangle = \sum_{t=1}^{T} V_{\lambdavec_{t-1}}(\lambdavec) - V_{\lambdavec_{t}}(\lambdavec) - V_{\lambdavec_{t-1}}(\lambdavec_{t}) \\    
    &= V_{\lambdavec_{0}}(\lambdavec) + \sum_{t=1}^{T-1} V_{\lambdavec_{t}}(\lambdavec)  \nonumber  - \sum_{t=1}^{T}V_{\lambdavec_{t}}(\lambdavec) - \sum_{t=0}^{T-1} V_{\lambdavec_{t}}(\lambdavec_{t+1}) = V_{\lambdavec_{0}}(\lambdavec) - \sum_{t=0}^{T} V_{\lambdavec_{t}}(\lambdavec_{t+1})   
\end{align}
Now simplify the LHS of the above equation as follows. 
\begin{align*}
    \sum_{t=1}^{T} \langle \gammavec_{t-1}, \lambdavec_{t} - \lambdavec \rangle    = \sum_{t=1}^{T-1} \langle \gammavec_{t}, \lambdavec_{t} - \lambdavec \rangle - \sum_{t=1}^{T-1} \langle \gammavec_{t}, \lambdavec_{t} - \lambdavec_{t+1} \rangle   = \sum_{t=1}^{T} \langle \gammavec_{t}, \lambdavec_{t} - \lambdavec \rangle - \sum_{t=1}^{T} \langle \gammavec_{t}, \lambdavec_{t} - \lambdavec_{t+1} \rangle 
\end{align*}
    
The first equality follows from the fact that $\gammavec_{0} =\boldsymbol{0}$ and the second follows from $\lambdavec = \lambdavec_{T+1} $. Thus, we have 
\begin{align}
\sum_{t=1}^{T} \langle \gammavec_t, \lambdavec_{t} - \lambdavec \rangle  \leq & V_{\lambdavec_{0}}(\lambdavec) +  \sum_{t=0}^{T} \{ \langle \gammavec_{t}, \lambdavec_t - \lambdavec_{t+1}  \rangle - V_{\lambdavec_{t}}(\lambdavec_{t+1}) \}.   \label{eqn:sumInequality}
\end{align}
Furthermore, the following inequalities hold for every iteration $t$,  
$$
\langle \gamma_t, \lambdavec_{t} - \lambdavec_{t+1} \rangle \underset{(i)}{\leq} ||\gammavec_{t}||_{\star} ||\lambdavec_{t} - \lambdavec_{t+1}||   \underset{(ii)}{\leq} \frac{1}{2} ||\gammavec_t||_{\star}^2 + \frac{1}{2} ||\lambdavec_{t} - \lambdavec_{t+1} ||^2  \underset{(iii)}{\leq} \frac{1}{2} ||\gammavec_t||_{\star}^2 + V_{\lambdavec_{t}} (\lambdavec_{t+1}).    $$
In the above expression,  $(i)$ follows from H\"{o}lder's inequality, $(ii)$ follows from the AM-GM inequality and $(iii)$ follows from the strong convexity of the underlying distance-generating function. Using above inequality  with Equation \ref{eqn:sumInequality} completes the proof of the lemma. 
\end{proof}

\begin{proof}[Proof of Lemma \ref{lem:Lem13Carmon}]
Following the same steps to simplify the sum as in  Lemma \ref{lem:Lem12Carmon},  Equation \ref{eqn:sumInequality}   we obtain 
\begin{align}
    \sum_{t =1}^{T}  \langle \boldsymbol{\delta}_{t}, \xvec_{t} - \xvec_{T+1} \rangle  & \leq V_{\xvec_{0}}(\xvec_{T+1}) + \sum_{t =0 }^{T} \{\langle \boldsymbol{\delta}_t, \xvec_t - \xvec_{t+1} \rangle - V_{\xvec_t}(\xvec_{t+1}) \} \nonumber \\
    & = V_{\xvec_{0}}(\xvec_{T+1}) + \sum_{t =0}^{T} \{\langle - \boldsymbol{\delta}_t, \xvec_{t+1} - \xvec_{t} \rangle - V_{\xvec_t}(\xvec_{t+1}) \} \label{eqn: inner product inequality}
\end{align} 
We now provide the upper bound on the summation term in the RHS of Equation \ref{eqn: inner product inequality}. Recall that the  Fenchel  conjugate  $R^{\star}$ at point $\yvec$ is given  as 
\begin{equation*}
    \R^{\star}(\yvec) = \sup \{  \langle \yvec, \xvec \rangle - \R(\xvec) |\xvec \in \Delta_{\rhovec}^{\ell} \} 
\end{equation*}
This implies, 
\begin{equation}
    \R^{\star}(\yvec) \geq   \langle \yvec, \xvec \rangle - \R(\xvec)  \hspace{10pt} \text{for all } \xvec \in \Delta_{\rhovec}^{\ell} \hspace{10pt} \text{ and } \hspace{10pt} 
\R^{\star}(\nabla \R(\xvec)) = \langle \nabla \R(\xvec) , \xvec \rangle - \R(\xvec).
\label{eqn:maximizing}
\end{equation}
 Let $\yvec := \nabla R(\xvec) $. From maximizing argument we  have, $\nabla R^{\star}(\yvec) = \xvec$ hence we have   \begin{equation}
\xvec = \nabla \R^{\star}(\nabla \R(\xvec)).
\label{eqn:maximizing argument}
\end{equation}
Using definition of  $V_{\xvec}(\xvec')$,   Equation   \ref{eqn:maximizing} and Equation \ref{eqn:maximizing argument} we obtain, 
\begin{align}
  \langle  \boldsymbol{\delta}, \xvec'  - \xvec \rangle  - V_{\xvec}(\xvec')  & =  \langle\nabla \R(\xvec) + \boldsymbol{\delta}, \xvec' \rangle - \R(\xvec') - [ \langle \nabla \R(\xvec), \xvec \rangle - \R(\xvec) ] - \langle \xvec, \boldsymbol{\delta} \rangle \nonumber \\ & \leq \R^{\star}(\nabla \R(\xvec) + \boldsymbol{\delta}) - \R^{\star}(\nabla \R( \xvec)) -   \langle \nabla \R^{\star}(\nabla \R(\xvec)), \boldsymbol{\delta} \rangle \label{eqn:upprbndnorming} 
\end{align}
Note that  $R(.)$ is an entropic regularizer whose Fenchel dual  is given by $\R^{\star}(\xvec) = \log(\sum_{i=1}^{\ell} e^{\xvec_{i}})$ \citep[][Table 2.1]{shalev2011online}. Hence, 
\begin{align*}
\R^{\star}(\nabla \R(\xvec) + \boldsymbol{\delta}) - \R^{\star}(\nabla \R( \xvec))  &= \log \left(\frac{\sum_{i\in [\ell]} e^{(\nabla \R(\xvec) + \boldsymbol{\delta})_{i}}}{ \sum_{i \in [\ell]} e^{(\nabla \R(\xvec))_{i}}} \right) \\ 
& \leq \log \left( 1 + \frac{\sum_{i \in [\ell]} e^{(\nabla \R(\xvec))_{i}} (\delta_i + \delta_i^2)}{ \sum_{i \in [\ell]} e^{(\nabla \R(\xvec))_{i}} }\right) \tag{$e^{a} \leq 1+ a + a^2 $ for $a \leq 1.79$} \\ 
& = \log \left( 1+ \langle \nabla \R^{\star} (\nabla \R(\xvec)), \boldsymbol{\delta} + \boldsymbol{\delta}^{2} \rangle \right) \tag{$\boldsymbol{\delta}^2 $ is a vector with $i^{th}$ coordinate $\delta_{i}^2$} \\ 
&\leq \langle \nabla \R^{\star} (\nabla \R(\xvec)),  \boldsymbol{\delta} \rangle + \langle \nabla \R^{\star} (\nabla \R(\xvec)),  \boldsymbol{\delta}^{2} \rangle  \tag{$\log(1+a) \leq a$}
\end{align*}
Thus we have, $\R^{\star}(\nabla \R(\xvec) + \boldsymbol{\delta}) - \R^{\star}(\nabla \R( \xvec)) -   \langle \nabla \R^{\star}(\nabla \R(\xvec)), \boldsymbol{\delta} \rangle \leq  \langle \nabla \R^{\star} (\nabla \R(\xvec)),  \boldsymbol{\delta}^{2} \rangle = ||\boldsymbol{\delta}||_{\nabla \R^{\star}(\nabla \R(\xvec))}^2 = ||\boldsymbol{\delta}||_{\xvec}^2$. The last equality follows from Equation \ref{eqn:maximizing argument}.
        Using this in Equation \ref{eqn:upprbndnorming} we have 
        \begin{equation}
\langle \boldsymbol{\delta}, \xvec' - \xvec \rangle   - V_{\xvec}(\xvec') \leq  ||\boldsymbol{\delta}||_{\xvec}^2. \label{eqn:xvec local inequality}
\end{equation}
Finally, we complete the proof of the lemma by using the above result (Equation \ref{eqn:xvec local inequality}) in Equation \ref{eqn: inner product inequality} i.e. 
\begin{equation*}
\sum_{t =1}^{T}  \langle \boldsymbol{\delta}_{t}, \xvec_{t} - \xvec_{T+1} \rangle     \leq V_{\xvec_{0}}(\xvec_{T+1}) + \sum_{t =1}^{T} ||-\boldsymbol{\delta}_{t}||_{\xvec_{t}}^2 
\end{equation*}
\end{proof}

\section{Proof of Theorem \ref{thm: optimality and fairness}}\label{secappendix: proof of fairness and optimality}
\Optimalityfairnessthm*
In the proof of this theorem, for convenience we denote $\pi^\varepsilon$ as $\pi$, 
and $\boldsymbol{\nu}^{\pi^{\varepsilon}}$ equal to the stationary distribution corresponding to policy $\pi^{\varepsilon}$ as $\boldsymbol{\nu}$. Further let $\boldsymbol{u} \in \mathbb{R}^n$ be such that $u_s = \sum_{a\in [m]} x^{\varepsilon}_{s,a}$, and note that $x^{\varepsilon}_{s,a} = \pi_{s,a} u_s$ implying that $\boldsymbol{x}^{\varepsilon} = \Pi\cdot \boldsymbol{u}$, where $\Pi \in \mathbb{R}^{\ell \times n}$ is the policy matrix corresponding to $\pi$. Also, let $\boldsymbol{x} \in \mathbb{R}^{\ell}$ be such that $\boldsymbol{x} = \Pi \mathbf{\nu}$. Finally, note that $\mathbf{\Gamma}^{\pi} = \Pi^T\mathbf{\Gamma}$ is the probability transition matrix of the Markov chain induced by the policy $\pi$. Hence, 
\begin{equation}
(\mathbf{I}-\mathbf{\Gamma}^\pi)^T\boldsymbol{\nu}^\pi = \boldsymbol{0}~. 
\end{equation}
Finally, let $\boldsymbol{x}^*, \boldsymbol{\lambda}^*$ be the optimal solution to the problem in Equation \ref{equation: the final min maz optimization}. We first prove a few lemmas which would be used to prove the fairness and optimality.

\begin{lemma}\label{lemma: using epsilon gap for optimality}
$\mathbb{E}\left[ (\boldsymbol{x}^{\varepsilon})^T \left( (\biggamma - \Ihat) \boldsymbol{\lambda}^* - \boldsymbol{r} \right) + v^* \right] \leq \varepsilon$
\end{lemma}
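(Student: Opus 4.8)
The plan is to recognize that the expression inside the expectation is, up to sign, exactly $f$ evaluated at the averaged primal iterate $\xvec^{\varepsilon}$ and the optimal dual $\lambdavec^*$, and then to control the resulting quantity by the duality gap, which Theorem~\ref{thm:SMDGapTheorem} bounds in expectation. First I would recall that $f(\xvec,\lambdavec) = \rvec^T\xvec + \lambdavec^T(\Ihat - \mathbf{\Gamma})^T\xvec$ and that the saddle value is $v^* = f(\xvec^*,\lambdavec^*) = \rvec^T\xvec^*$. Since $\lambdavec^T(\Ihat-\mathbf{\Gamma})^T\xvec = \xvec^T(\Ihat-\mathbf{\Gamma})\lambdavec$ (both being the same scalar), a direct rearrangement yields the identity
\[
(\xvec^{\varepsilon})^T\big((\biggamma - \Ihat)\lambdavec^* - \rvec\big) = -f(\xvec^{\varepsilon},\lambdavec^*).
\]
Because $v^*$ is deterministic, the claim reduces to showing $v^* - \mathbb{E}[f(\xvec^{\varepsilon},\lambdavec^*)] \le \varepsilon$.

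Next I would bound $v^* - f(\xvec^{\varepsilon},\lambdavec^*)$ pointwise, i.e.\ for each realization of the algorithm's randomness, by $\Gap(\xvec^{\varepsilon},\lambdavec^{\varepsilon})$. Two observations drive this. Since $\lambdavec^* \in \mathbb{B}_{2M}^n$, we have $f(\xvec^{\varepsilon},\lambdavec^*) \ge \min_{\lambdavec' \in \mathbb{B}_{2M}^n} f(\xvec^{\varepsilon},\lambdavec')$, hence $v^* - f(\xvec^{\varepsilon},\lambdavec^*) \le v^* - \min_{\lambdavec'}f(\xvec^{\varepsilon},\lambdavec')$. Moreover $\lambdavec^{\varepsilon}$, being the average of iterates drawn from the convex set $\mathbb{B}_{2M}^n$, itself lies in $\mathbb{B}_{2M}^n$, so the defining $\min$ in $v^* = \min_{\lambdavec \in \mathbb{B}_{2M}^n}\max_{\xvec'\in\Delta_{\rhovec}^{\ell}} f(\xvec',\lambdavec)$ gives $v^* \le \max_{\xvec'\in\Delta_{\rhovec}^{\ell}} f(\xvec',\lambdavec^{\varepsilon})$. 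Combining the two,
\[
v^* - f(\xvec^{\varepsilon},\lambdavec^*) \le \max_{\xvec'\in\Delta_{\rhovec}^{\ell}} f(\xvec',\lambdavec^{\varepsilon}) - \min_{\lambdavec'\in\mathbb{B}_{2M}^n} f(\xvec^{\varepsilon},\lambdavec') = \Gap(\xvec^{\varepsilon},\lambdavec^{\varepsilon}).
\]

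Finally I would take expectations on both sides and invoke Theorem~\ref{thm:SMDGapTheorem}, which gives $\mathbb{E}[\Gap(\xvec^{\varepsilon},\lambdavec^{\varepsilon})] \le \varepsilon$; together with the identity from the first step, this is precisely the claimed inequality. There is no deep obstacle here, since the heavy lifting is already carried out in Theorem~\ref{thm:SMDGapTheorem}; the only points requiring care are the sign and transpose bookkeeping in the identity, and the verification that both $\lambdavec^*$ and $\lambdavec^{\varepsilon}$ belong to $\mathbb{B}_{2M}^n$ so that the comparisons against the $\min$ and $\max$ in the gap are legitimate. The latter holds because $\|\lambdavec^*\|_{\infty} \le M \le 2M$ by Lemma~\ref{lemma: constraining the domain of lambda} and because $\mathbb{B}_{2M}^n$ is convex and closed under the averaging in Line~5 of Algorithm~\ref{algorithm: fair state visitation}.
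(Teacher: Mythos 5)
Your proposal is correct and takes essentially the same route as the paper: both reduce the claim, via the identity $(\xvec^{\varepsilon})^T\big((\biggamma-\Ihat)\lambdavec^* - \rvec\big) = -f(\xvec^{\varepsilon},\lambdavec^*)$ together with the memberships $\xvec^*\in\Delta_{\rhovec}^{\ell}$ and $\lambdavec^*\in\mathbb{B}_{2M}^n$, to the bound $\mathbb{E}[\Gap(\xvec^{\varepsilon},\lambdavec^{\varepsilon})]\leq\varepsilon$ from Theorem~\ref{thm:SMDGapTheorem}. The only cosmetic difference is that the paper plugs $\xvec^*$ into the gap directly, using the exact identity $f(\xvec^*,\lambdavec^{\varepsilon}) = \rvec^T\xvec^* = v^*$ (since $(\Ihat-\biggamma)^T\xvec^* = \boldsymbol{0}$), whereas you reach the same comparison through the minimax characterization $v^* \leq \max_{\xvec'\in\Delta_{\rhovec}^{\ell}} f(\xvec',\lambdavec^{\varepsilon})$.
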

\begin{proof}
From Theorem \ref{thm:SMDGapTheorem}, we have
$$\mathbb{E}[f(\boldsymbol{x}^*, \boldsymbol{\lambda}^{\varepsilon}) - f(\boldsymbol{x}^{\varepsilon}, \boldsymbol{\lambda}^{*})] \leq \varepsilon ~.$$
Substituting appropriate value of $f$ at the respective points and observing that $(\boldsymbol{x}^*)^T\boldsymbol{r} = v^*$ the lemma follows.
\end{proof}

\begin{lemma}\label{lemma: theorem 3 lemma 1}
$\mathbb{E}[\max_{\lambda \in \mathbb{B}_{2M}^n} \boldsymbol{u}^T(\I - \biggamma^{\pi})(\boldsymbol{\lambda}^{*} -\boldsymbol{\lambda})] \leq \varepsilon$
\end{lemma}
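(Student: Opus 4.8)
The plan is to convert the quantity inside the expectation, which is phrased in terms of the stationary-distribution vector $\boldsymbol{u}$ and the induced chain $\mathbf{\Gamma}^{\pi}$, into a difference of values of the bilinear function $f$ evaluated at $\xvec^{\varepsilon}$, and then dominate that difference by $\Gap(\xvec^{\varepsilon},\lambdavec^{\varepsilon})$, whose expectation is already controlled by Theorem~\ref{thm:SMDGapTheorem}. The starting point is the algebraic identity relating $\xvec^{\varepsilon}$ and $\boldsymbol{u}$. Since $\xvec^{\varepsilon}=\Pi\boldsymbol{u}$, and since the policy matrix satisfies $\hat{\mathbf{I}}^{T}\Pi=\mathbf{I}$ together with $\mathbf{\Gamma}^{T}\Pi=(\mathbf{\Gamma}^{\pi})^{T}$ (the latter being just the definition $\mathbf{\Gamma}^{\pi}=\Pi^{T}\mathbf{\Gamma}$ transposed), I would first establish
\begin{equation*}
(\hat{\mathbf{I}}-\mathbf{\Gamma})^{T}\xvec^{\varepsilon}=(\hat{\mathbf{I}}-\mathbf{\Gamma})^{T}\Pi\boldsymbol{u}=(\mathbf{I}-\mathbf{\Gamma}^{\pi})^{T}\boldsymbol{u}.
\end{equation*}

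Plugging this into $f(\xvec^{\varepsilon},\lambdavec)=\rvec^{T}\xvec^{\varepsilon}+\lambdavec^{T}(\hat{\mathbf{I}}-\mathbf{\Gamma})^{T}\xvec^{\varepsilon}$ gives $f(\xvec^{\varepsilon},\lambdavec)=\rvec^{T}\xvec^{\varepsilon}+\boldsymbol{u}^{T}(\mathbf{I}-\mathbf{\Gamma}^{\pi})\lambdavec$, so the $\rvec^{T}\xvec^{\varepsilon}$ term is constant in $\lambdavec$ and cancels in any difference. Consequently,
\begin{equation*}
\max_{\lambdavec\in\mathbb{B}_{2M}^{n}}\boldsymbol{u}^{T}(\mathbf{I}-\mathbf{\Gamma}^{\pi})(\lambdavec^{*}-\lambdavec)=f(\xvec^{\varepsilon},\lambdavec^{*})-\min_{\lambdavec\in\mathbb{B}_{2M}^{n}}f(\xvec^{\varepsilon},\lambdavec).
\end{equation*}
This is a pointwise identity (no expectation yet), so it suffices to bound the right-hand side.

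The next step is to show $f(\xvec^{\varepsilon},\lambdavec^{*})\le\max_{\xvec'\in\Delta_{\rhovec}^{\ell}}f(\xvec',\lambdavec^{\varepsilon})$, which is the one inequality that is not purely mechanical and I expect to be the crux. It follows by sandwiching through the optimal value $v^{*}=\rvec^{T}\xvec^{*}$, using the saddle-point property of $(\xvec^{*},\lambdavec^{*})$ for the constrained program in Equation~\ref{equation: the final min maz optimization}: because $\xvec^{*}$ maximizes $f(\cdot,\lambdavec^{*})$ we have $f(\xvec^{\varepsilon},\lambdavec^{*})\le f(\xvec^{*},\lambdavec^{*})=v^{*}$, and because $\lambdavec^{*}$ minimizes $f(\xvec^{*},\cdot)$ over $\mathbb{B}_{2M}^{n}$ and $\lambdavec^{\varepsilon}\in\mathbb{B}_{2M}^{n}$ we have $v^{*}=f(\xvec^{*},\lambdavec^{*})\le f(\xvec^{*},\lambdavec^{\varepsilon})\le\max_{\xvec'}f(\xvec',\lambdavec^{\varepsilon})$; chaining these gives the claim. (Here $f(\xvec^{*},\lambdavec^{*})=\rvec^{T}\xvec^{*}$ is exactly the equality already recorded in Section~\ref{subsection: Min-Max Problem}.)

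Combining the two displays, I would conclude
\begin{equation*}
\max_{\lambdavec\in\mathbb{B}_{2M}^{n}}\boldsymbol{u}^{T}(\mathbf{I}-\mathbf{\Gamma}^{\pi})(\lambdavec^{*}-\lambdavec)\le\max_{\xvec'\in\Delta_{\rhovec}^{\ell}}f(\xvec',\lambdavec^{\varepsilon})-\min_{\lambdavec\in\mathbb{B}_{2M}^{n}}f(\xvec^{\varepsilon},\lambdavec)=\Gap(\xvec^{\varepsilon},\lambdavec^{\varepsilon}),
\end{equation*}
which holds for every realization of the algorithm's randomness. Taking expectations and invoking Theorem~\ref{thm:SMDGapTheorem}, i.e. $\mathbb{E}[\Gap(\xvec^{\varepsilon},\lambdavec^{\varepsilon})]\le\varepsilon$, yields the stated bound. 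Notably this route uses the gap guarantee directly rather than Lemma~\ref{lemma: using epsilon gap for optimality}; the main thing to get right is the reduction to $f$-differences and the $v^{*}$-sandwich, after which the monotonicity of expectation finishes the argument.
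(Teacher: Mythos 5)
Your proposal is correct and takes essentially the same route as the paper's proof: both reduce the quantity to $f(\xvec^{\varepsilon},\lambdavec^{*})-\min_{\lambdavec\in\mathbb{B}_{2M}^{n}}f(\xvec^{\varepsilon},\lambdavec)$, dominate $f(\xvec^{\varepsilon},\lambdavec^{*})$ by $f(\xvec^{*},\lambdavec^{*})=f(\xvec^{*},\lambdavec^{\varepsilon})$ via the saddle-point property of $(\xvec^{*},\lambdavec^{*})$ (the paper uses $(\xvec^{*})^{T}(\Ihat-\biggamma)=\boldsymbol{0}$, which makes $f(\xvec^{*},\cdot)$ constant in $\lambdavec$ --- the same fact your $v^{*}$-sandwich encodes), and then invoke the expected gap bound of Theorem~\ref{thm:SMDGapTheorem}. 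Your write-up merely makes explicit two steps the paper leaves implicit, namely the pointwise identity $(\Ihat-\biggamma)^{T}\xvec^{\varepsilon}=(\mathbf{I}-\biggamma^{\pi})^{T}\boldsymbol{u}$ and the final domination by the full $\Gap(\xvec^{\varepsilon},\lambdavec^{\varepsilon})$ before taking expectations.
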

\begin{proof}
From Theorem \ref{thm:SMDGapTheorem}
\begin{equation}\label{equation: proof of theorem 3 eqn 1}
   f(\boldsymbol{x}^*, \boldsymbol{\lambda}^{\varepsilon}) -  \min_{\boldsymbol{\lambda} \in \mathbb{B}_{2M}^n} f(\boldsymbol{x}^{\varepsilon}, \boldsymbol{\lambda})  \leq \varepsilon
\end{equation}
Further note that since $(\boldsymbol{x}^{*})^T(\Ihat - \biggamma) = \boldsymbol{0}$, we have 
\begin{equation}\label{equation: proof of theorem 3 eqn 2}
   f(\boldsymbol{x}^{\varepsilon}, \boldsymbol{\lambda}^{*}) \leq   f(\boldsymbol{x}^*, \boldsymbol{\lambda}^{*}) = f(\boldsymbol{x}^{*}, \boldsymbol{\lambda}^{\varepsilon})  ~.
\end{equation}
Using Equations \ref{equation: proof of theorem 3 eqn 1} and \ref{equation: proof of theorem 3 eqn 2},
$$\mathbb{E}[\max_{\lambda \in \mathbb{B}_{2M}^n} \boldsymbol{u}^T(\I - \biggamma^{\pi})(\boldsymbol{\lambda}^{*} -\boldsymbol{\lambda})] \leq \varepsilon~.$$
\end{proof}

\begin{lemma}\label{lemma: bounding the l1 norm}
$\mathbb{E}||(\boldsymbol{u}- \boldsymbol{\nu}^\pi)^T(\I -\biggamma^\pi + \boldsymbol{1}(\boldsymbol{\nu}^\pi)^T)||_{1} \leq \frac{\varepsilon}{M}$~.
\end{lemma}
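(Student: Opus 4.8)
The plan is to simplify the vector inside the $\ell_1$ norm down to $\boldsymbol{u}^T(\I - \biggamma^\pi)$, and then convert the resulting quantity into exactly the optimization that appears in Lemma \ref{lemma: theorem 3 lemma 1}, via $\ell_\infty/\ell_1$ duality.

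First I would eliminate the rank-one correction term. Since $\boldsymbol{x}^\varepsilon \in \Delta_{\rhovec}^{\ell} \subseteq \Delta^\ell$, its state-marginal $\boldsymbol{u}$ satisfies $\boldsymbol{1}^T\boldsymbol{u} = 1$, and $\boldsymbol{\nu}^\pi$ is a stationary distribution so $\boldsymbol{1}^T\boldsymbol{\nu}^\pi = 1$; hence $(\boldsymbol{u}-\boldsymbol{\nu}^\pi)^T\boldsymbol{1} = 0$ and the term $(\boldsymbol{u}-\boldsymbol{\nu}^\pi)^T\boldsymbol{1}(\boldsymbol{\nu}^\pi)^T$ vanishes identically. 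Next, stationarity of $\boldsymbol{\nu}^\pi$ (i.e. $(\I - \biggamma^\pi)^T\boldsymbol{\nu}^\pi = \boldsymbol{0}$) gives $(\boldsymbol{\nu}^\pi)^T(\I - \biggamma^\pi) = \boldsymbol{0}$, so that $(\boldsymbol{u}-\boldsymbol{\nu}^\pi)^T(\I - \biggamma^\pi) = \boldsymbol{u}^T(\I - \biggamma^\pi)$. Writing $\boldsymbol{w}^T := \boldsymbol{u}^T(\I - \biggamma^\pi)$, these two observations reduce the claim to showing $\mathbb{E}\|\boldsymbol{w}\|_1 \le \varepsilon/M$.

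Second, I would relate $\|\boldsymbol{w}\|_1$ to the objective in Lemma \ref{lemma: theorem 3 lemma 1}. Using that the dual norm of $\|\cdot\|_\infty$ is $\|\cdot\|_1$, one has $\max_{\boldsymbol{\lambda}\in\mathbb{B}_{2M}^n}\boldsymbol{w}^T\boldsymbol{\lambda} = 2M\|\boldsymbol{w}\|_1$ (attained coordinatewise at $\lambda_i = 2M\,\mathrm{sign}(w_i)$), and therefore $\max_{\boldsymbol{\lambda}\in\mathbb{B}_{2M}^n}\boldsymbol{w}^T(\boldsymbol{\lambda}^*-\boldsymbol{\lambda}) = \boldsymbol{w}^T\boldsymbol{\lambda}^* + 2M\|\boldsymbol{w}\|_1$. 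Since Lemma \ref{lemma: constraining the domain of lambda} gives $\|\boldsymbol{\lambda}^*\|_\infty \le M$, Hölder's inequality yields $\boldsymbol{w}^T\boldsymbol{\lambda}^* \ge -\|\boldsymbol{w}\|_1\|\boldsymbol{\lambda}^*\|_\infty \ge -M\|\boldsymbol{w}\|_1$. Combining the two bounds gives $\max_{\boldsymbol{\lambda}\in\mathbb{B}_{2M}^n}\boldsymbol{w}^T(\boldsymbol{\lambda}^*-\boldsymbol{\lambda}) \ge M\|\boldsymbol{w}\|_1$. Taking expectations and invoking Lemma \ref{lemma: theorem 3 lemma 1} yields $M\cdot\mathbb{E}\|\boldsymbol{w}\|_1 \le \varepsilon$, and dividing by $M$ delivers the claim.

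The only delicate point is that $\boldsymbol{\lambda}^*$ must sit inside the enlarged ball $\mathbb{B}_{2M}^n$ with enough slack: the radius $2M$ produces the term $2M\|\boldsymbol{w}\|_1$, while $\boldsymbol{\lambda}^*$ contributes at worst $-M\|\boldsymbol{w}\|_1$, so the net coefficient of $\|\boldsymbol{w}\|_1$ is a strictly positive $M$. This is precisely why the domain of $\boldsymbol{\lambda}$ was doubled from $M$ to $2M$ in Section \ref{subsection: Min-Max Problem}; with a radius of only $M$ the lower bound would collapse to $0$ and the argument would be vacuous. Beyond this bookkeeping there is no real computational obstacle.
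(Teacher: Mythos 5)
Your proof is correct and follows essentially the same route as the paper's: both arguments kill the rank-one term via $(\boldsymbol{u}-\boldsymbol{\nu}^\pi)^T\boldsymbol{1}=0$ and stationarity of $\boldsymbol{\nu}^\pi$, and both combine the $\ell_\infty$-ball duality $\max_{\boldsymbol{\lambda}\in\mathbb{B}_{2M}^n}\boldsymbol{w}^T(-\boldsymbol{\lambda})=2M\|\boldsymbol{w}\|_1$ with H\"older's inequality and $\|\boldsymbol{\lambda}^*\|_\infty\leq M$ to extract $M\,\mathbb{E}\|\boldsymbol{w}\|_1\leq\varepsilon$ from Lemma \ref{lemma: theorem 3 lemma 1}. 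Your closing remark about the doubled radius $2M$ being what makes the net coefficient strictly positive is exactly the observation the paper makes as well.
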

\begin{proof}
First we show that $\mathbb{E}||\boldsymbol{u}^T(\I -\biggamma^\pi)||_{1} \leq \frac{\varepsilon}{M}$:
\begin{align}
    2M \mathbb{E}||\boldsymbol{u}^T(\I -\biggamma^\pi)||_{1} &= \mathbb{E}\left[\max_{\boldsymbol{\lambda} \in \mathbb{B}_{2M}^n} \boldsymbol{u}^T(\I - \biggamma^\pi)(-\boldsymbol{\lambda}) \right]\\
    &=  \mathbb{E}\left[\max_{\boldsymbol{\lambda} \in \mathbb{B}_{2M}^n}  \boldsymbol{u}^T(\I - \biggamma^{\pi})(\boldsymbol{\lambda}^{*} -\boldsymbol{\lambda}) - \boldsymbol{u}^T(\I -\biggamma^\pi)\boldsymbol{\lambda}^* \right]\\
    & \leq \varepsilon + ||\boldsymbol{\lambda}^{*}||_{\infty}\mathbb{E}||\boldsymbol{u}^T(\I -\biggamma^\pi)||_{1} \leq \varepsilon + M\mathbb{E}||\boldsymbol{u}^T(\I -\biggamma^\pi)||_{1}~.
\end{align}
The last but one inequality follows from Lemma \ref{lemma: theorem 3 lemma 1} and $||\boldsymbol{\lambda}^*||_{\infty}\leq M$. Here, we have made use of the fact that the box is over the range $2M$ whereas $||\boldsymbol{\lambda}^*||_{\infty}\leq M$. In particular, this is the place where enlarging the box helps (as in \cite{jin2020efficiently}).
Finally, to prove the lemma observe that $(\boldsymbol{\nu}^\pi)^T(\I-\biggamma^\pi) = \boldsymbol{0}$ and $|| (\boldsymbol{u}- \boldsymbol{\nu}^\pi)^T(\boldsymbol{1}(\boldsymbol{\nu}^\pi)^T)||_1 = 0$, and hence
$$\mathbb{E}||(\boldsymbol{u}- \boldsymbol{\nu}^\pi)^T(\I -\biggamma^\pi + \boldsymbol{1}(\boldsymbol{\nu}^\pi)^T)||_{1} = \mathbb{E}||(\boldsymbol{u}- \boldsymbol{\nu}^\pi)^T(\I -\biggamma^\pi)||_{1} \leq \frac{\varepsilon}{M}~.$$
\end{proof}

\begin{lemma}\label{lemma: optimality term 2}
Let $\boldsymbol{r}^{\pi} = \Pi^T\boldsymbol{r}$. Then $\mathbb{E}\left[(\boldsymbol{u} - \boldsymbol{\nu}^\pi)^T \boldsymbol{r}^\pi\right] \leq \varepsilon$ .
\end{lemma}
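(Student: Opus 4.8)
The plan is to reduce the claim to the $\ell_1$ bound already established in Lemma~\ref{lemma: bounding the l1 norm} by factoring the reward inner product through the invertible operator $\I - \biggamma^\pi + \boldsymbol{1}(\boldsymbol{\nu}^\pi)^T$. First I would record the identity that makes the lemma meaningful: since $x^{\varepsilon}_{s,a} = \pi_{s,a} u_s$ and $\boldsymbol{x} = \Pi\boldsymbol{\nu}^\pi$, one has $\boldsymbol{u}^T\boldsymbol{r}^\pi = (\boldsymbol{x}^{\varepsilon})^T\boldsymbol{r}$ and $(\boldsymbol{\nu}^\pi)^T\boldsymbol{r}^\pi = \boldsymbol{x}^T\boldsymbol{r}$, so $(\boldsymbol{u} - \boldsymbol{\nu}^\pi)^T\boldsymbol{r}^\pi = (\boldsymbol{x}^{\varepsilon} - \boldsymbol{x})^T\boldsymbol{r}$ is precisely the gap between the reward read off from the averaged iterate and the true average-reward of $\pi^{\varepsilon}$; bounding it is the optimality half of the theorem.

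Next I would introduce the row vector $\boldsymbol{w}^T := (\boldsymbol{u} - \boldsymbol{\nu}^\pi)^T(\I - \biggamma^\pi + \boldsymbol{1}(\boldsymbol{\nu}^\pi)^T)$, which Lemma~\ref{lemma: bounding the l1 norm} controls in expectation via $\mathbb{E}\|\boldsymbol{w}\|_1 \leq \varepsilon/M$. Because the MDP is ergodic, $\I - \biggamma^\pi + \boldsymbol{1}(\boldsymbol{\nu}^\pi)^T$ is invertible, so $(\boldsymbol{u} - \boldsymbol{\nu}^\pi)^T = \boldsymbol{w}^T(\I - \biggamma^\pi + \boldsymbol{1}(\boldsymbol{\nu}^\pi)^T)^{-1}$, and hence
\[
(\boldsymbol{u} - \boldsymbol{\nu}^\pi)^T\boldsymbol{r}^\pi = \boldsymbol{w}^T(\I - \biggamma^\pi + \boldsymbol{1}(\boldsymbol{\nu}^\pi)^T)^{-1}\boldsymbol{r}^\pi.
\]
Applying H\"{o}lder's inequality (with the $\ell_1/\ell_\infty$ pair) and then the operator-norm bound from Lemma~\ref{lemma: from JIN SIDFORD}, together with $\|\boldsymbol{r}^\pi\|_\infty \leq 1$ (each entry $\sum_a \pi_{s,a} r_{s,a}$ is a convex combination of rewards in $[0,1]$), gives the pointwise estimate $|(\boldsymbol{u} - \boldsymbol{\nu}^\pi)^T\boldsymbol{r}^\pi| \leq 2t_{\text{mix}}\|\boldsymbol{w}\|_1$. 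Taking expectations and using $\mathbb{E}\|\boldsymbol{w}\|_1 \leq \varepsilon/M$ with $M = 2t_{\text{mix}}(1 + d_{\boldsymbol{\rho}}) \geq 2t_{\text{mix}}$ yields $\mathbb{E}[(\boldsymbol{u}-\boldsymbol{\nu}^\pi)^T\boldsymbol{r}^\pi] \leq (2t_{\text{mix}}/M)\varepsilon \leq \varepsilon$, as claimed.

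There is no single hard step here; the proof is essentially a dual-norm bookkeeping exercise once one sees the right factorization. The only thing to get right is that the $2t_{\text{mix}}$ factor picked up from inverting $\I - \biggamma^\pi + \boldsymbol{1}(\boldsymbol{\nu}^\pi)^T$ is exactly absorbed by the $1/M$ on the right-hand side of Lemma~\ref{lemma: bounding the l1 norm}, since $M$ was defined to carry a factor $2t_{\text{mix}}$; this is the reason that lemma was stated with $\varepsilon/M$ rather than merely $\varepsilon$. I would also note that the estimate holds per realization of the random policy $\pi^{\varepsilon}$ (the operator-norm bound of Lemma~\ref{lemma: from JIN SIDFORD} is uniform over all policies), so passing the pointwise bound through the outer expectation is harmless.
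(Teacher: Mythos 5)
Your proposal is correct and takes essentially the same route as the paper: the paper likewise factors $(\boldsymbol{u}-\boldsymbol{\nu}^\pi)^T\boldsymbol{r}^\pi$ through $(\I - \biggamma^\pi + \boldsymbol{1}(\boldsymbol{\nu}^\pi)^T)(\I - \biggamma^\pi + \boldsymbol{1}(\boldsymbol{\nu}^\pi)^T)^{-1}$, applies H\"{o}lder's inequality with the $\ell_1/\ell_\infty$ pairing, and invokes Lemmas \ref{lemma: from JIN SIDFORD} and \ref{lemma: bounding the l1 norm} together with $M \geq 2t_{\text{mix}}$ to conclude. Your explicit observations that $||\boldsymbol{r}^\pi||_{\infty} \leq 1$ (each entry being a convex combination of rewards in $[0,1]$) and that the operator-norm bound holds uniformly over policies, so the pointwise estimate passes through the expectation, merely spell out steps the paper leaves implicit.
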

\begin{proof}
The proof of the lemma is completed using the following sequence of equations.
\begin{align}
    \mathbb{E}\left[ (\boldsymbol{u}- \boldsymbol{\nu}^{\pi})^T\boldsymbol{r}^\pi \right] 
    &= \mathbb{E}\left[ (\boldsymbol{u}- \boldsymbol{\nu}^{\pi})^T (\I - \biggamma^\pi + \boldsymbol{1}(\boldsymbol{\nu}^\pi)^T) (\I - \biggamma^\pi + \boldsymbol{1}(\boldsymbol{\nu}^\pi)^T)^{-1}
    \boldsymbol{r}^\pi \right] \\
    &\leq \mathbb{E}||(\boldsymbol{u}- \boldsymbol{\nu}^{\pi})^T (\I - \biggamma^\pi + \boldsymbol{1}(\boldsymbol{\nu}^\pi)^T) ||_{1} ||(\I - \biggamma^\pi + \boldsymbol{1}(\boldsymbol{\nu}^\pi)^T)^{-1}
    \boldsymbol{r}^\pi||_{\infty} \\
    &\leq \mathbb{E}||(\boldsymbol{u}- \boldsymbol{\nu}^{\pi})^T (\I - \biggamma^\pi + \boldsymbol{1}(\boldsymbol{\nu}^\pi)^T) ||_{1} ||(\I - \biggamma^\pi + \boldsymbol{1}(\boldsymbol{\nu}^\pi)^T)^{-1}||_{\infty} ||\boldsymbol{r}^\pi||_{\infty} \\
    &\leq \frac{\varepsilon}{M}\cdot 2t_{\text{mix}} \leq \varepsilon~.
\end{align}
The first inequality in the last line follows from Lemmas   \ref{lemma: from JIN SIDFORD} and \ref{lemma: bounding the l1 norm}. The last inequality follows by observing that $M \geq 2t_{\text{mix}}$.
\end{proof}

\noindent \textbf{Proof of Fairness}: Recall that $\mathbf{D}_{\rhovec}$ is the $n\times n$ diagonal matrix with its $s$-th entry being $\frac{1}{\rho_s}$. It is easy to see that $\mathsf{C}\Pi = \mathbf{D}_{\boldsymbol{\rho}}$, Now we show that $\mathbf{D}_{\boldsymbol{\rho}} \boldsymbol{\nu}^\pi = \mathbf{C}\boldsymbol{x}  \geq \boldsymbol{1}- \varepsilon$. First we note that as $\xvec_t \in \Delta_{\rhovec}^{\ell}$ for all $t\in [T]$, $\boldsymbol{x}^{\varepsilon} \in \Delta^{\ell}_{\rhovec}$. Hence, we have $\mathbf{D}_{\rhovec}\boldsymbol{x}^\varepsilon \geq \boldsymbol{1}$. The policy $\pi$ is $\varepsilon$-fair follows from sequence of equations below:

\begin{align*}
    \mathbf{C}\boldsymbol{x} &= \mathbf{C}\boldsymbol{x}^\varepsilon + \mathbf{C}(\boldsymbol{x}-\boldsymbol{x}^\varepsilon) =  \mathbf{C}\boldsymbol{x}^\varepsilon + \mathbf{C}\Pi(\boldsymbol{\nu}^\pi - \boldsymbol{u}) \\
    &= \mathbf{C}\boldsymbol{x}^\varepsilon + \mathbf{D}_{\boldsymbol{\rho}} (\I - (\biggamma^\pi)^T + \boldsymbol{\nu}^\pi\boldsymbol{1}^T)^{-1} (\I - (\biggamma^\pi)^T + \boldsymbol{\nu}^\pi\boldsymbol{1}^T)(\boldsymbol{\nu}^\pi - \boldsymbol{u}) \\
    &\geq \boldsymbol{1} - ||\mathbf{D}_{\boldsymbol{\rho}}(\I - (\biggamma^\pi)^T + \boldsymbol{\nu}^\pi\boldsymbol{1}^T)^{-1} (\I - (\biggamma^\pi)^T + \boldsymbol{\nu}^\pi\boldsymbol{1}^T)(\boldsymbol{\nu}^\pi - \boldsymbol{u})||_{\infty}\cdot \boldsymbol{1}\\
    &\geq \boldsymbol{1} - \left(|| \mathbf{D}_{\boldsymbol{\rho}} (\I - (\biggamma^\pi)^T + \boldsymbol{\nu}^\pi\boldsymbol{1}^T)^{-1}||_{\infty} ||(\I - (\biggamma^\pi)^T + \boldsymbol{\nu}^\pi\boldsymbol{1}^T)(\boldsymbol{\nu}^\pi - \boldsymbol{u})||_{1}\right)\cdot \boldsymbol{1} \\
    &\geq \boldsymbol{1} - ((\max_s \frac{1}{\rho_s})\cdot 2t_{\text{mix}}\cdot \frac{\varepsilon}{M})\cdot \boldsymbol{1} \geq \boldsymbol{1} - \varepsilon\boldsymbol{1}~.
\end{align*}
The first inequality in the last line follows from Lemmas \ref{lemma: from JIN SIDFORD} and \ref{lemma: bounding the l1 norm}, and the last inequality follows by observing that $M \geq (\max_s \frac{1}{\rho_s})\cdot 2t_{\text{mix}}$.\\

\noindent \textbf{Proof of Optimality}: We have the following sequence of equations.
\begin{align*}
   v^{\pi} = \mathbb{E}\left[(\boldsymbol{\nu}^\pi)^T \boldsymbol{r}^\pi \right] &= \mathbb{E}\left[(\boldsymbol{\nu}^\pi)^T(\I - \biggamma^\pi)\boldsymbol{\lambda^*} +  (\boldsymbol{\nu}^\pi)^T \boldsymbol{r}^\pi \right] \\
    &= \mathbb{E}\left[(\boldsymbol{\nu}^\pi - \boldsymbol{u})^T((\I - \biggamma^\pi)\lambda^* + \boldsymbol{r}^\pi) +  \boldsymbol{u}^T((\I - \biggamma^\pi)\boldsymbol{\lambda^*} + \boldsymbol{r}^\pi) \right] \\
    &\geq \mathbb{E}\left[(\boldsymbol{\nu}^\pi - \boldsymbol{u})^T(\I - \biggamma^\pi)\boldsymbol{\lambda^*}\right] + \mathbb{E}\left[(\boldsymbol{\nu}^\pi - \boldsymbol{u})^T \boldsymbol{r}^\pi\right] + v^* - \varepsilon \\
    & \geq v^* - 3\varepsilon~.
\end{align*}
The last but one inequality follows from Lemma \ref{lemma: using epsilon gap for optimality}, and the last inequality follows from Lemmas \ref{lemma: bounding the l1 norm} and \ref{lemma: optimality term 2}. 

\noindent \textbf{Sample Complexity:} The sample complexity result follows easily by replacing the values of $\ell =nm$,  size of the  bounding box $M = 2t_{mix}(1+ d_{\rhovec})$ and  step sizes  $\eta^{\xvec} = \frac{\varepsilon }{8 \ell (24M^2 + 1)}$ and $\eta^{\lambdavec} = \frac{\varepsilon}{16} $ as specified in Line 3 of Algorithm \ref{algorithm: fair state visitation}. In particular, we have $ \frac{8 \log(\ell)}{\eta^{\xvec}\varepsilon} =   \frac{ 64 \ell (24M^2 + 1)\log(\ell)}{ \varepsilon^2} = 6144 \varepsilon^{-2} nm t_{mix}^2(1+ d_{\rhovec})^2   \log(nm) + 64 \varepsilon^{-2}nm \log(nm) $ and $ \frac{32M^2n}{\eta^{\lambdavec}\varepsilon} =  2048 \varepsilon^{-2}n m t_{mix}^2 (1+ d_{\rhovec})^2$. Hence,  Algorithm \ref{algorithm: fair state visitation} has  sample  complexity of  $O(nm\varepsilon^{-2} t_{mix}^2 (1+ d_{\rhovec})^2 \log(nm))$.

\fi

\end{document}